\title{Regret Guarantees for Item-Item Collaborative Filtering}
\author{
Guy Bresler \hspace{5mm} Devavrat Shah \hspace{5mm} Luis F. Voloch \\
Laboratory for Information and Decision Systems\\
Department of Electrical Engineering and Computer Science\\
Massachusetts Institute of Technology \\
\texttt{\{guy,devavrat,voloch\}@mit.edu} \\
}
\crefname{thm}{Theorem}{Theorems}
\crefname{lmm}{Lemma}{Lemmas}
\crefname{prop}{Proposition}{Propositions}
\crefname{clm}{Claim}{Claims}
\crefname{defin}{Definition}{Definitions}
\crefname{section}{Section}{Sections}
\crefname{pic}{Figure}{Figures}
\crefname{codebox}{algorithm}{Algorithms}
\crefname{eqn}{Equation}{Equations}
\crefname{app}{Appendix}{Appendices}
\crefname{corol}{Corollary}{Corollaries}
\crefname{examp}{Example}{Examples}
\theoremstyle{plain}
\newtheorem{theorem}{Theorem}[section]
\theoremstyle{definition}
\newtheorem{definition}[theorem]{Definition}
\newtheorem{remark}[theorem]{Remark}
\newcommand{\CC}{\mathcal{C}}
\newcommand{\MM}{\mathcal{M}}
\newcommand{\EE}{\mathbb{E}}
\newcommand{\BB}{\mathcal{B}}
\newcommand{\Prob}{\mathbb{P}}
\newcommand{\RR}{\mathcal{R}}
\newcommand{\II}{\mathcal{I}}
\renewcommand{\Then}{%
	\textbf{then}\stepcounter{indent} }
\renewcommand{\Else}{%
	\kill\addtocounter{indent}{-1}%
	\liprint\textbf{else}\>\>\stepcounter{indent}}
\begin{document}

\maketitle

\begin{abstract}
	There is much empirical evidence that item-item collaborative filtering works well in practice. Motivated to understand this, we provide a framework to design and analyze various recommendation algorithms. The setup amounts to online binary matrix completion, where at each time a random user requests a recommendation and the algorithm chooses an entry to reveal in the user's row. The goal is to minimize regret, or equivalently to maximize the number of $+1$ entries revealed at any time. We analyze an item-item collaborative filtering algorithm that can achieve fundamentally better performance compared to user-user collaborative filtering. The algorithm achieves good ``cold-start" performance (appropriately defined) by quickly making good recommendations to new users about whom there is little information. 
\end{abstract}

\section{Introduction}

A natural approach to automated recommendation systems is to use content specific data: similar words in two books' titles suggest that they are similar, and similarly for user's age and geographic location. Recommending based on such content-specific data is called content filtering. In contrast, a technique called collaborative filtering (CF) provides recommendation in a content-agnostic way by exploiting patterns in general purchase or usage data to determine similarity (the term collaborative filtering was coined in~\cite{goldberg92}). For example, if $90 \%$ of users agree on two items, a CF algorithm might recommend the second item to a user who likes first item.

Virtually all industrial recommendation systems use CF. There are two main paradigms in neighborhood-based CF: user-user and item-item. In the user-user paradigm, recommendation to user $u$ is done by finding users similar to $u$ and recommending items liked by these users. In the item-item paradigm, in contrast, items similar to those liked by the user are found and then recommended. Empirical evidence shows that the item-item paradigm performs well (cf. \cite{Linden03} and \cite{korenHandbook}). In this paper we provide theoretical justification by introducing a model for recommendation and analyze the performance of a simple item-item CF algorithm.

\subsection{Model}

We consider a system with $N$ users and collection of items $\II$. For each item $i\in \II$, user $u$ has binary preference $L_{u,i}$ equal to $+1$ (like) or $-1$ (dislike). 
Recommendation systems typically operate in an online setting, meaning that when a user logs into a virtual store (such as Amazon), a recommendation must be made immediately. At each discrete time step $t=1,2,3,\dots$ a uniformly random user $U_t \in \{1,..,N\}$ requests a recommendation. The recommendation algorithm selects an item $I_t$ to recommend from the set of available items $\mathcal{I}$, after which $U_t$ gives feedback $L_{U_t,I_t}$.
The recommendation must depend only on previous feedback: $I_t$ is required to be measurable with respect to the sigma-field generated by the history $(U_1,I_1,L_{U_1,I_1}),\dots,(U_{t-1},I_{t-1},L_{U_{t-1},I_{t-1}})$.

We impose the constraint that the recommendation algorithm may only recommend each item to a given user at most once. This captures the situation where users do not want to watch a movie or read a book more than once and focuses attention on the ability to recommend new items to users.   
For each recommendation, the algorithm may therefore either recommend an item that has been previously recommended to \emph{other} users (in which case it has some information about the item) or recommend a new item from $\mathcal{I}$.  


We are interested in the situation where there are many items, and will assume that $\mathcal{I}$ is infinite. 
For a given item $i$, the corresponding $i^\text{th}$ column $L_{\cdot,i}\in \{-1,+1\}^N$ of each user's preferences is called the \emph{type} of item $i$. 
It is convenient to represent the population of items by a probability measure $\mu$ over $\{-1,+1\}^N$. When the algorithm selects an item that has not yet been recommended, the item's type is drawn from this distribution in an i.i.d. manner. Recommending a new item corresponds to adding a column to the rating matrix, with binary preferences jointly distributed according to $\mu$.   


\subsection{Performance measure}
As is standard in the online decision-making literature, algorithm performance is measured by regret relative to an all-knowing algorithm that makes no bad recommendations. The regret at time $T$ is
$$
\mathcal{R}(T) \triangleq \frac{1}{N} \sum_{t=1}^{T \cdot N} \frac{1}{2} \left( 1- L_{U_t,I_t} \right).
$$
Recall that at time $t$ user $U_t\in\{1,\dots,N\}$ desires a recommendation, $I_t$ is the recommended item, and $L_{u,i}$ is equal to $+1$ (resp. $-1$) if $u$ likes (resp. dislikes) item $i$.  The regret $\mathcal{R}(T)$ is the number of bad recommendations per user after having made an average of $T$ recommendations per user. Dependence on the algorithm is implicit through $I_t$.

\subsection{Main results}
We now describe two high-level objectives in designing a recommendation system and the corresponding guarantees obtained for our proposed algorithm $\proc{item-item-cf}$, which is described in Sections~\ref{sec:algorithm} and~\ref{sec:Explore}. The results are stated in more detail in Section~\ref{sec:mainResult}.

\subsubsection{Cold-start time} With no prior information, the algorithm should give reliable recommendations as quickly as possible. The \emph{cold-start} time of a recommendation algorithm is defined as
	\begin{equation}
	T_{\mathrm{cold-start}} 
	= 
	\min 
	\bigg\{ 
	T + \Gamma \text{ s.t. } ,T,\Gamma \geq 0,
	\EE 
	\left[ \mathcal{R}(T+\Delta) - \mathcal{R}(T) 
	\right] 
	\leq 0.1 \Delta, 
	\forall \Delta >\Gamma
	\bigg\}.
	\end{equation} 
	This is the first time after which the slope of the expected regret is bounded by $0.1$: after $T_{\mathrm{cold-start}}$ the algorithm makes a bad recommendation to a randomly chosen user with probability at most $0.1$.\footnote{The choice $0.1$ is arbitrary. We will assume that users like only a small fraction of the items, so the cold-start time is the minimum time after which the algorithm can recommend significantly better than random.}
	
	\textbf{Our results:} As described in Section~\ref{sec:model}, we assume that each user likes at least $\nu>0$ fraction of the items. In \cref{corol:coldStart} we show that algorithm $\proc{item-item-cf}$ achieves $T_{\mathrm{cold-start}}= \mathcal{\widetilde O} (\frac{1}{\nu})$\footnote{Here and throughout, $r=\mathcal{\widetilde O}(x)$ means $r\leq Cx\log^c x$ for some numerical constants $c,C$. } for $N\geq N_0(d)$. Note that one must typically randomly sample ${\Omega }(\frac1\nu)$ items to find a \emph{single} liked item. Our results show that this amount of time-investment suffices in order to give consistently good recommendations.
	
	\subsubsection{Improving accuracy} The algorithm should give increasingly more reliable recommendations as it gains information about the users and items. This is captured by having \emph{sublinear expected regret} $\EE \left[ \mathcal{R} \left(T \right) \right]=o \left(T \right)$.

\textbf{Our results:} \cref{thm:lowerBound} shows that without assumptions on the item space, it is impossible for any online algorithm to achieve sublinear regret for any positive length of time. In this paper we assume that the item space has doubling dimension (a measure of complexity of the space, defined and motivated later) bounded by $d$. 
In \cref{thm:ManyEpochsRegret} we show that after time $T_\mathrm{cold-start}$ (until which we incur linear regret), algorithm \proc{item-item-cf} achieves sublinear expected regret $\mathcal{\widetilde O} (T^\frac{d+1}{d+2})$ up until a certain time $T_{\mathrm{max}}$. After $T_{\mathrm{max}}$ the expected regret again grows linearly (but with much smaller slope), and this behavior is shown in \cref{thm:lowerBoundLinearRegime} to be unavoidable.
As will be made explicit, performance improves with increasing number of users: $T_{\mathrm{max}}$ (and hence the length of the sublinear time-period) increases with $N$ and the eventual linear slope decreases with $N$, both of which illustrate the so-called collaborative gain. 

\begin{remark}
The mathematical formulation of cold-start time is to the best of our knowledge new. The strong guarantee we obtain on cold-start time (independent of doubling dimension $d$) is distinct from and does not follow as an implication of the sublinear regret result (which \emph{does} depend on $d$).
\end{remark}

\subsection{Related Work}

A latent source model of user types is used by \cite{bresler14} to give performance guarantees for user-user collaborative filtering. The assumptions on users and items are closely related since $K$ items types induce at most $2^K$ user types and vice versa (the $K$ item types liked by a user fully identify the user's preferences, and there are at most $2^K$ such choices). Since we study algorithms that cluster similar items together, in this paper we assume a latent structure of items. We note that unlike the standard mixture model with minimum separation between mixture components (as assumed in \cite{bresler14}), our setup does not have any such gap condition. In contrast, we allow an effectively arbitrary model, and we prove performance guarantees based on a notion of dimensionality of the item space.

Our model can be thought of as a certain extended multi-armed bandit problem. The papers \cite{KleinbergSU13,bubeck2011x} use notions of dimensionality similar to the one in this paper. They assume that the set of arms $\mathcal{X}$ has some geometry: \cite{KleinbergSU13} assumes that the arm space is endowed with a metric, and \cite{bubeck2011x} assumes that the arms have a dissimilarity function (which is not necessarily a metric). The expected rewards are then related to this geometry of the arms. In the former work, the difference in expected rewards is Lipschitz\footnote{That is, $|\EE[r_i]-\EE[r_j]| \leq C \cdot d(i,j)$, where $d$ is the distance between arms, and $C$ is a constant.} in the distance, and in the latter work the dissimilarity function constrains the slope of the reward around its maxima. 

The regret of the algorithms in \cite{KleinbergSU13,bubeck2011x} is $\mathcal{\widetilde O} \big( T^{\frac{d'+1}{d'+2}} \big)$, where $d'$ is a weaker notion (than the one we use) of the covering number of $\mathcal{X}$, and is closely related to the doubling dimension (which we define later) in the case of a metric. The regret bound in \cref{thm:ManyEpochsRegret} for the sublinear regime is of the same form, but two important aspects of our model require a different algorithm and more intricate arguments: $(i)$ in our case no repeat recommendations (i.e. pulling the same arm) can be made to the same user, and $(ii)$ we do not have an oracle for distances between users and items, and instead we must estimate distances by making carefully chosen exploratory recommendations.

Aside from these differences, the nature of the collaborative filtering problem leads to additional novelty relative to existing work on multi-armed bandits. First, we formalize the cold-start problem and prove strong guarantees in this regard. Second, all of our bounds are in terms of system parameters. This allows, for example, to see the role of the number of users $N$ as an important resource allowing for collaboration.

The works of \cite{hazan2012near} and \cite{cesa2013efficient} on online learning and matrix completion are also relevant. In their case, however, the matrix entries to be predicted are not chosen by the algorithm and hence there is no explore-exploit trade-off.  The paper \cite{kleinberg04} considers collaborative filtering under a mixture model in the \emph{offline} setting, and they make separation assumptions between the item types (called genres in their paper). The work \cite{Dabeer13} considers a setting similar to ours (but with finite number of user and item types) and proves certain guarantees on a moving horizon approximation rather than the cumulative anytime regret. The paper \cite{biau2010} proves asymptotic consistency guarantees on estimating the ratings of unrecommended items. The recent paper \cite{Ohannessian15} considers a different model in which repeat recommendations are also not allowed, but they make recommendations by exploiting existing information about users' interests.

It is possible that using the similarities between users, and not just between items as we do, is also useful. This has been studied theoretically in the user-user collaborative filtering framework in \cite{bresler14}, via bandits in a wide variety of settings (for instance \cite{alon14,Slivkins14,cesa2013gang}), with focus on benefits to the cold-start problem \cite{gentile2014online,caron2013mixing}, and in practice (cf. \cite{das2007google,bellogin12}). In this paper, in order to capture the power of purely item-item collaborative filtering, we intentionally avoid using any user-user similarities.

\section{Structure in Data}
\label{sec:model}

The main intuition behind all variants of collaborative filtering is that users and items can typically be clustered in a meaningful way even ignoring context specific data. Items, for example, can often be grouped into a few different types that tend to be liked by the same users. 



\subsection{Need for Structure}
\label{sec:needForStructure}
As discussed, a good recommendation algorithm suggests items to users that are liked but have not been recommended to them before. 
In order to motivate the need for assumptions on the item space, we begin by stating the intuitive result that in the worst case when $\mu$ has little structure, no online algorithm can do better than recommending random items.


\begin{restatable}[Lower Bound]{prop}{thmLowerBound}
	\label{thm:lowerBound}
	Let $\mu$ be the uniform distribution over $\{-1, +1\}^N$. Then for all $T \geq 1$, the expected regret  of any online recommendation algorithm is lower bounded as  $\EE \left[ \mathcal{R}(T) \right] \geq T/2$.  Conversely, recommending a random item at each time step achieves $\EE [\mathcal{R}(T) ]= T/2$.
\end{restatable}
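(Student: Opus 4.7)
The plan is to show that under uniform $\mu$ the reward $L_{U_t,I_t}$ at every step is (conditionally) a fair coin flip, so both the lower and matching upper bound are immediate. The proof hinges on a single observation: since $\mu$ is the product measure on $\{-1,+1\}^N$, and since items are freshly drawn i.i.d.\ from $\mu$ each time a new item is recommended, the entire (infinite) rating matrix consists of independent uniform $\pm 1$ entries.

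First I would fix an online algorithm and let $\mathcal{F}_{t-1}$ denote the sigma-field generated by the history $(U_1,I_1,L_{U_1,I_1}),\dots,(U_{t-1},I_{t-1},L_{U_{t-1},I_{t-1}})$, together with $U_t$. By the measurability requirement on $I_t$, the pair $(U_t,I_t)$ is $\mathcal{F}_{t-1}$-measurable. The crucial step is to show
\[
\Prob\bigl(L_{U_t,I_t}=-1 \,\bigm|\, \mathcal{F}_{t-1}\bigr) \;=\; \tfrac{1}{2}.
\]
The no-repeat constraint implies that $(U_t,I_t)\notin\{(U_s,I_s):s<t\}$, so the entry $L_{U_t,I_t}$ was never previously revealed. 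Any previously observed entry $L_{U_s,I_s}$ either concerns a different user from $U_t$ (hence an independent row) or, if $U_s=U_t$, concerns a different item $I_s\ne I_t$ (hence an independent column). Either way, independence of entries under the product measure gives that $L_{U_t,I_t}$ is independent of all previously revealed labels on the event $\{(U_t,I_t)=(u,i)\}$, for each deterministic $(u,i)$; conditioning on the $\mathcal{F}_{t-1}$-measurable choice $(U_t,I_t)$ and applying the tower property yields the displayed identity.

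Given this, each summand in the definition of regret satisfies $\EE\bigl[\tfrac{1}{2}(1-L_{U_t,I_t})\bigr]=\tfrac12$, so
\[
\EE[\mathcal{R}(T)] \;=\; \frac{1}{N}\sum_{t=1}^{T\cdot N}\EE\!\left[\tfrac12(1-L_{U_t,I_t})\right] \;=\; \frac{1}{N}\cdot TN\cdot \tfrac12 \;=\; \tfrac{T}{2},
\]
which establishes the lower bound (in fact with equality). For the converse, I would let the algorithm pick $I_t$ uniformly at random among items never recommended to $U_t$ before; the same independence argument applies verbatim, giving $\EE[\mathcal{R}(T)]=T/2$.

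The only real subtlety, which I do not expect to be a genuine obstacle but is worth stating carefully, is the measurability bookkeeping around the fact that $I_t$ is itself random and may be drawn from the infinite item pool $\II$ (so that the ``fresh column'' is generated at the moment of recommendation). The cleanest way to handle this is to couple the problem with a pre-drawn infinite i.i.d.\ matrix and observe that the algorithm's choices depend only on previously revealed entries, which under the product measure are independent of every unseen entry.
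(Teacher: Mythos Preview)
The paper states this proposition without proof; it is presented as an intuitive motivating fact in Section~\ref{sec:needForStructure} and is never revisited in the appendix. So there is no paper proof to compare against.

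That said, your argument is correct and is the natural one. The key observation---that under the uniform product measure every entry of the rating matrix is an independent fair coin, and the no-repeat constraint guarantees the queried entry $(U_t,I_t)$ has never been revealed---immediately gives $\Prob(L_{U_t,I_t}=-1\mid\mathcal{F}_{t-1})=\tfrac12$ regardless of the algorithm. Your handling of the two cases (same item previously shown to a different user, vs.\ different item altogether) is exactly right given how the model generates item types, and the measurability remark at the end is appropriate but, as you note, not a genuine obstacle. The computation then yields $\EE[\mathcal{R}(T)]=T/2$ with equality, which is stronger than the stated inequality and simultaneously handles both directions.
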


\cref{thm:lowerBound} states that no online algorithm can have sublinear regret for any period of time unless some structural assumptions are made. Hence, to have any \emph{collaborative gain} we need to capture the fact that items tend to come in clusters of similar items. We make two assumptions.
\begin{itemize}
\item[$(\mathbf{A1})$] The distribution $\mu$ over the item space has doubling dimension at most $d$ for a given $d \geq 0$.
\item[$(\mathbf{A2})$] Each user likes a random item drawn from $\mu$ with probability between $\nu$ and $2\nu$, and each item is liked by a fraction  between $\nu$ and $2\nu$ of the users, for a given $\nu \in (0,1/4)$.
\end{itemize}

Assumption $\mathbf{A1}$ captures structure in the item space through the notion of doubling dimension, defined and motivated in \cref{sec:ddDefin}.
Assumption $\mathbf{A2}$ is made to avoid the extreme situations where almost no items are liked (in which case good recommendations are impossible) or most items are liked (in which case the regret benchmark becomes meaningless).


\subsection{Item Types and Doubling Dimension}
\label{sec:itemTypes}
\label{sec:ddDefin}

We endow the $N$-dimensional Hamming cube $\{-1,+1\}^N$ with the following normalized $\ell_1$ metric: for any two item types $x, y \in \{-1,+1\}^N$, define their distance 
$$ \gamma_{x,y} \equiv \gamma (x, y) \stackrel{\triangle}{=} \frac{1}{N} \sum_{k=1}^N \frac{1}{2} | x_k - y_k |. $$
When we write $\gamma_{ij}$ for items $i,j$, we mean the distance between their types, which is the fraction of users that disagree on them.

Let $\BB(x,r) = \{y \in \{-1,+1\}^N: \gamma_{x, y} \leq r\}$ be the ball of radius $r$ centered at $x$.
\begin{definition}[Doubling Dimension] The doubling dimension $d$ of a measure $\mu$ on $\{-1,+1\}^N$ is the least $d$ such that for each $x \in \{-1,+1\}^N$ with $\mu(x)>0$ we have $\sup_{r>0} \frac{\mu(\BB(x, 2 r))}{\mu(\BB(x, r))} \leq 2^d$. A measure with finite doubling dimension is called a doubling measure.
\end{definition}

Measures of low doubling dimension capture the observed clustering phenomenon\footnote{The above definition is a natural adaptation to probability measures on metric spaces of the notion of doubling dimension for metric spaces (cf. \cite{heinonen2001lectures}, \cite{har2006fast} and \cite{dasgupta2014randomized}). As noted in, for instance \cite{dasgupta2014randomized}, this is equivalent to enforcing that $\mu(B_\gamma(x, \alpha r)) \leq \alpha^d \cdot \mu(B_\gamma(x, r))$ for any $r>0$ and any $x \in \{-1,+1\}^N$ with $\mu(x) >0$. For Euclidean spaces, the doubling dimension coincides with the ambient dimension, which reinforces the intuition that metric spaces of low doubling dimension have properties of low dimensional Euclidean spaces.}. It follows directly from the definition, for instance, that a small doubling dimension ensures that the balls around any item type must have a significant mass. In particular, any item type $x \in \{-1,1\}^N$ with $\mu(x) > 0$ has $\mu \big(\BB\left( x,r \right) \big) \geq r^d.$ Appendix~\ref{sec:DDAppendix} contains some examples of measures with low doubling dimension, and the reader is directed there to gain more intuition for the concept. Appendix \ref{sec:DDAppendix} also contains experiments indicating that the doubling dimension is often small in practice, and describes in more detail why it is an assumption strictly more general than the ones made in \cite{bresler14}.




\section{Item-item collaborative filtering algorithm}
\label{sec:algorithm}
In this section we describe our algorithm \proc {item-item-cf}. The algorithm carries out a certain procedure over increasingly longer epochs (blocks of time), where the epoch index is denoted by $\tau \geq 1$. In each epoch the algorithm carefully balances \emph{Explore} and \emph{Exploit} steps.  

In the Explore steps of epoch $\tau$, a partition $\{P_k^{(\tau+1)}\}$ of a set of items is created for use in the subsequent epoch. Each epoch has a target precision $ \varepsilon_\tau$ (specified below) such that if two items $i$ and $j$ are in the same block $P_k^{(\tau+1)}$, then usually $\gamma_{ij} \leq \varepsilon_{\tau+1}$.

In the Exploit steps of epoch $\tau$, the partition $\big\{P_k^{(\tau)} \big\}$ created in the previous epoch is used for recommendation.  Exploit recommendations to a user $u$ are made as follows: $u$ samples a random item $i$ from a random block $P_k^{(\tau)}$, and if $u$ likes $i$ ($L_{u,i}=+1$) then the rest of $P_k^{(\tau)}$ is recommended to $u$ in subsequent Exploit steps. After all items in $P_k^{(\tau)}$ have been recommended to $u$, the user repeats the process by sampling random items in random blocks until liking some item $j$ in $P_{k'}^{(\tau)}$, upon which the rest of $P_{k'}^{(\tau)}$ is recommended.

In the first epoch there is no possibility of exploiting a partition created in a previous epoch, so the algorithm begins with a purely exploratory ``cold-start" period. The pseudo-code of the algorithm is as follows.

\begin{codebox}
	\label{alg:pseudoCode}
	\Procname{$\proc{item-item-cf}(N)$}
	\li \textbf{Algorithm parameters:} 
	\zi  $ \varepsilon_N = \left( \frac{2^{5d+18}}{\nu} \cdot 630 (2d+11)(d+2)^4 \frac{1}{N} \right)^{\frac{1}{d+5}}$, $C=\frac{\nu}{148} \frac{1}{20}$
	\zi $\varepsilon_{\tau} = \max \big( \frac{1}{2^{\tau}}, \varepsilon_N \big) \cdot C$, for $\tau \geq 1$ (target accuracy for epoch)
	\zi $M_\tau = \frac{2^{\max(3.5d,8)}}{\nu} \frac{(3d+1)}{\varepsilon_\tau^{d+2}} \ln (\frac{2}{\varepsilon_\tau})$,  for $\tau \geq 1$ (number of items introduced in epoch)
	\zi $D_\tau = \frac{\nu}{2} M_\tau$, for $\tau \geq 1$ (duration of epoch)
	\li \textbf{Cold-Start:} 
	\zi $\big\{P_k^{(1)} \big\} = \proc{Make-Partition}(M_1,\varepsilon_1,\varepsilon_1)$
	\li \textbf{Subsequent epochs:}
	\zi \For $\tau \geq 1$
	\zi \Do \For $t = 1$ to $N \cdot D_\tau $
	\zi \Do $U_t = $ random user
	\zi w.p. $1-\varepsilon_\tau$: \emph{exploit} $\big\{P_k^{(\tau)} \big\}$ to recommend an item to $U_t$ 
	\zi w.p. $\varepsilon_\tau$: $U_t$ \emph{explores} to help construct partition $\big\{P_k^{(\tau +1)} \big\}$ (see \cref{sec:Explore}) \End \End
\end{codebox}

\section{Explore: making a partition}
\label{sec:Explore}

Recall that during epoch $\tau$ the goal of the explore recommendations is to create a partition $\{ P_k^{(\tau+1)} \}$ of items such that whenever $i,j \in P_k^{(\tau+1)}$ then $\gamma_{ij} \leq \varepsilon_{\tau+1}$. We later prove that this can be done by executing the routine $\proc {make-partition}(M_{\tau+1},\varepsilon_{\tau+1},\varepsilon_{\tau+1})$ described below, which at any point makes recommendations to a randomly chosen user. Hence, given the random user making the recommendation, $\proc{item-item-cf}$ provides explore recommendations in whatever order $\proc{Make-Partition}$ would have recommended (had it been run sequentially)\footnote{For instance, suppose that time $t$ is the first in some epoch $\tau$. We might have that times $t+5$ and $t+30$ are the first two explore recommendations of the epoch, then for those two recommendations the algorithm makes whatever the first two recommendations would have been in $\proc{make-partition}$. If the execution of $\proc{make-partition}$ has finished, the algorithm resorts to an exploit recommendation instead.}. 


\begin{definition}($\varepsilon$-net)
For any $\varepsilon > 0$, a collection of items $\CC$ is called $\varepsilon$-net of the item space represented by
distribution $\mu$ on $\{-1, +1\}^N$ if (a) for any pair $i, j \in \CC$, we have $\gamma_{ij} > \varepsilon/2$, and
(b) for any item of type $\ell$ with $\mu(\ell) > 0$, there exists $i \in \CC$ so that $\gamma_{i\ell} \leq \varepsilon$. 
\end{definition}

$\proc{Make-Partition}$ first finds a \emph{net} $\CC$ for the item space (using the subroutine $\proc{get-net}$ described later). To each item in the net there is associated a block in a partition. $M$ randomly sampled items are assigned to the blocks as follows: for each sampled item $j$, an item $i \in \CC$ is found that is similar to $j$, and $j$ is assigned to the partition block $P_i$ (if there is more than one item $i$ similar to $j$, the algorithm chooses among the relevant blocks at random). Finally, the algorithm breaks up large blocks into blocks of size on the order of $1/\varepsilon$. This guarantees that there will be many blocks in the partition, which turns out to be important in \cref{corol:coldStart} showing brief cold-start time\footnote{It is crucial that blocks in the partition are not too small because we would like the reward for exploration to be large when a user finds a likable item (reward in the sense of many new items to recommend). Although the algorithm does not explicitly ensure that blocks are not too small (as it did in ensuring the blocks are not too large) it comes as a byproduct of a property proven in \cref{clm:DoublingDimensionClaims}, which shows that there are not many items in the net close to any given item $j$.}.

\begin{codebox}
	\Procname{$\proc{Make-Partition}(M,\varepsilon, \delta)$}
	\li $\mathcal{C} = \proc{get-net}(\varepsilon/2,\delta/2)$
	\li $\MM =$ $M$ randomly drawn items from item space
	\li \For each $i \in \mathcal{C}$, let $P_i = \varnothing$ \End
	\li \For each $j \in \MM$, let $S_j = \{i \in \mathcal{C} \mid \proc{similar}(i,j,0.6 \varepsilon, \frac{\delta}{4 M |\CC|})\text{ returns } \const{true}  \}$
	\li \If $|S_j|>0$ \Then 
	 $P_i = P_i \cup \{j\}$, for $i$ chosen u.a.r. from $S_j$ \End \End \End
	\li \For each $i$, \If $|P_i| >  1/\varepsilon$, \Then partition $P_i$ into blocks of size at least $\frac{1}{2 \varepsilon}$ and no more than $1/\varepsilon$
	\li \Return $\{P_i \}$
\end{codebox}

The subroutine $\proc{similar}$ is used in $\proc{make-partition}$; it determines whether most users have the same preference (like or dislike) for two given items $i$ and $j$.This is accomplished by sampling many random users and counting the number of disagreements on the two items.

\begin{codebox}
	\Procname{$\proc{similar}(i,j,\varepsilon,\delta)$}
	\li $q_{\varepsilon,\delta} = 
	\lceil 630 \frac{d+1}{\varepsilon} \ln \left(\frac{1}{\delta}\right) \rceil$
	\li \For $n = 1$ \To $q_{\varepsilon,\delta}$
	\li  \Do sample a uniformly random user $u$
	\li let $X_u = \mathbf{1}_{L_{u,i} \neq L_{u,j}} $ \End
	\li \If $\frac{1}{q_{\varepsilon,\delta}} \sum_{\text{sampled $u$}} X_u \geq 0.9 \varepsilon$ \Then
	\li \Return $\const{False}$ \End
	\li \Return $\const{True}$ \End
\end{codebox}

The subroutine $\proc{Get-net}$ below is a natural greedy procedure for constructing an $\varepsilon$-net. Given parameters $\varepsilon$ and $\delta$, it finds a set of items $\CC$ that is an $\varepsilon$-net for $\mu$ with probability at least $1-\delta$ (proven in the appendix). It does so by keeping a set of items $\CC$ and whenever it samples an item $i$ that currently has no similar item in $\CC$, it adds $i$ to $\CC$. 

\begin{codebox}
	\Procname{$\proc{Get-net}(\varepsilon,\delta)$}
	\li $\mathcal{C} = \varnothing$, $\const{count} = 0$
	\li $\const{max-size} = (4/\varepsilon)^d$, $\const{max-wait} = (\frac{5}{\varepsilon})^d \ln \left(\frac{2 \cdot \const{max-size}}{\delta}\right)$, $\delta' = \delta/\left(4 \cdot \const{max-wait} \cdot \const{max-size}^2 \right) $
	\li \While  $\const{count} \leq \const{max-wait} $ and $| \mathcal{C}| < \const{max-size}$
	\li \Do draw item $i$ from $\mu$
	\li  \If $\proc{similar}(i,j, \varepsilon,\delta')$ for any $j \in \mathcal{C}$ \Then
	\li  $\const{count} = \const{count} +1$ 
	\li 		\Else $\mathcal{C} = \mathcal{C} \cup i$, $\const{count} = 0$ \End \End
	\li \Return $\mathcal{C}$
\end{codebox}

\section{Main Results}
\label{sec:mainResult}

\subsection{Cold-start time and regret bound}
In \cref{sec:algorithm} we described how $\proc{item-item-CF}$ starts recommending items to a user as soon as it finds one item that the user likes. This leads to a short cold-start time. 

\begin{restatable}[Cold-Start Performance]{thm}{corolColdStart}
	\label{corol:coldStart}
	Suppose assumptions $\mathbf{A1}$ and $\mathbf{A2}$ are satisfied. Then the algorithm $\proc{item-item-cf}$ has cold-start time $T_{\mathrm{cold-start}} = \frac{f(\nu,d)}{N}+ \mathcal{\widetilde O}(1/\nu)$.
\end{restatable}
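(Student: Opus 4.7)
The plan is to decompose $T_{\text{cold-start}} = T + \Gamma$, where $T = f(\nu,d)/N$ is the per-user duration of the algorithm's partition-building (cold-start) phase and $\Gamma = \widetilde{O}(1/\nu)$ is the slack needed for the per-user regret slope to drop below $0.1$. I would verify that with this choice, $\mathbb{E}[\mathcal{R}(T+\Delta) - \mathcal{R}(T)] \leq 0.1\Delta$ for all $\Delta > \Gamma$.

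First, I count the user queries used by $\proc{Make-Partition}(M_1,\varepsilon_1,\varepsilon_1)$. By the doubling-dimension packing bound, the net $\CC$ produced by $\proc{Get-Net}$ has at most $(4/\varepsilon_1)^d$ elements and takes $\mathrm{poly}(1/\varepsilon_1,d)$ calls to $\proc{Similar}$ to build; assigning the $M_1$ subsequent items to blocks uses an additional $M_1 \cdot |\CC|$ calls; and each $\proc{Similar}$ call costs $q_{\varepsilon_1,\delta'} = \widetilde{O}(d/\varepsilon_1)$ queries. Since $\varepsilon_1$ and $M_1$ depend only on $\nu, d$ (and not on $N$), the absolute query total is some explicit $f(\nu,d)$, giving $T = f(\nu,d)/N$. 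Correctness of $\proc{Make-Partition}$, proved elsewhere in the paper, then yields that with high probability the output $\{P_k^{(1)}\}$ has intra-block diameter at most $\varepsilon_1$ and covers all but an $O(\varepsilon_1)$ fraction of the $\mu$-mass.

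With the partition in hand I bound the expected per-user bad-recommendation count in the window $[T, T+\Delta]$. Each user's exploit behavior alternates between a ``sampling'' phase of expected length $\leq 1/\nu$ (each fresh item is liked with probability $\geq \nu$ by $\mathbf{A2}$) and a ``block'' phase of length in $[1/(2\varepsilon_1),1/\varepsilon_1]$. The crucial per-block estimate is conditional: since $|\{u: L_{u,i}\neq L_{u,j}\}| = \gamma_{ij} N$ and $|\{u: L_{u,i}=+1\}| \geq \nu N$, a user having just liked reference item $i$ dislikes any $j$ in the same block with probability at most $\gamma_{ij}/\nu \leq \varepsilon_1/\nu$. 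Summing bad-recommendation contributions over one partial cycle plus the complete cycles it participates in, adding the $\varepsilon_1$ explore bad-fraction, and using $\varepsilon_1 \leq C\nu$ for small $C$ (as dictated by the algorithm's parameter schedule), gives $\mathbb{E}[\text{bad recs for } u \mid n_u] \leq 1/\nu + 0.05\,n_u$, where $n_u$ is the number of times $u$ is selected in the window. Using $\mathbb{E}[n_u]=\Delta$ and averaging over $u$ yields $\mathbb{E}[\mathcal{R}(T+\Delta)-\mathcal{R}(T)] \leq 1/\nu + 0.05\Delta \leq 0.1\Delta$ whenever $\Delta \geq 20/\nu$, so $\Gamma = \widetilde{O}(1/\nu)$ suffices. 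The main obstacle is the conditional argument in the block phase --- converting the unconditional inter-item distance $\gamma_{ij}\leq\varepsilon_1$ into the correct per-user bound $\gamma_{ij}/\nu$, which essentially uses the $\nu$ lower bound from $\mathbf{A2}$ --- together with a careful union bound absorbing the low-probability event that $\proc{Make-Partition}$ fails to produce a partition with the required intra-block diameter and coverage.
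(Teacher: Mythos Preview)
Your overall decomposition $T=f(\nu,d)/N$, $\Gamma=\widetilde O(1/\nu)$ matches the paper, and the count of queries in $\proc{Make-Partition}$ is fine. But two genuine gaps remain.

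\textbf{The per-user bound does not hold per user.} You claim $\mathbb{E}[\text{bad recs for }u\mid n_u]\le 1/\nu+0.05\,n_u$ for each user $u$, justified by the conditional estimate $\Pr[L_{u,j}=-1\mid L_{u,i}=+1]\le \gamma_{ij}/\nu$. That conditional bound is a statement about a \emph{uniformly random} user; once $u$ has liked the sampled item $i$, the rest of the block is recommended to the \emph{same fixed} $u$, and for that $u$ the number of bad recommendations in the block is the deterministic quantity $|P_k|-\ell_{u,k}$, which can be as large as $|P_k|-1$. So the per-user inequality fails for unlucky users. The paper sidesteps this by never stating a per-user bound: its Partition Lemma computes $\sum_u \EE[W_{u,k}\mid s_{u,k}]$ directly, rewriting $\sum_u \ell_{u,k}(|P_k|-\ell_{u,k})=\sum_{i,j\in P_k}\gamma_{ij}N$ and getting the clean bound $(1+\varepsilon|P_k|)N$ without any $1/\nu$ factor. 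Your cycle heuristic also asserts the sampling phase has expected length $\le 1/\nu$, but the user samples a random \emph{block} then a random item, not an item from $\mu$; the paper needs a separate argument (its Auxiliary Claim, controlling the event that a user has sampled too many blocks while still having many liked items left in untouched blocks) to make this rigorous.

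\textbf{The multi-epoch structure is missing.} The cold-start definition requires $\EE[\mathcal{R}(T+\Delta)-\mathcal{R}(T)]\le 0.1\Delta$ for \emph{all} $\Delta>\Gamma$, including $\Delta$ that reach into epochs $\tau\ge 2$. Your analysis treats a single partition. At the start of each epoch $\tau$ the partition is discarded and rebuilt at finer scale $\varepsilon_\tau$, and there is a fresh warm-up window $T_{\min,\tau}=\frac{12}{\varepsilon_\tau}\ln(1/\varepsilon_\tau)$ during which the per-epoch regret bound does not directly apply. The paper handles this by proving the per-epoch estimate $\EE[\mathcal{R}(T_\tau+\Delta)-\mathcal{R}(T_\tau)]\le 0.05(\Delta+D_{\tau-1})$ for $\tau\ge 2$ (absorbing the warm-up into the previous epoch's duration via $T_{\min,\tau}\le 0.05\,D_{\tau-1}$) and then telescoping across epochs. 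Without this step, your bound is only valid for $\Delta\le D_1$.
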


Hence, the algorithm $\proc{item-item-cf}$ has cold-start time $\mathcal{\widetilde O}(1/\nu)$ for $N$ sufficiently large. This differs from that of \cite{bresler14} for user-user paradigm, where the cold-start time increases with user space complexity and the effect is not counteracted with more users present.

The next result shows that after the cold-start period and until a time $T_{\mathrm{max}}$, the expected regret is sublinear.

\begin{restatable}[Regret Upper Bound of $\proc{item-item-cf}$]{thm}{thmManyEpochsRegret}
	\label{thm:ManyEpochsRegret}
	Suppose assumptions $\mathbf{A1}$ and $\mathbf{A2}$ are satisfied. Then $\proc{item-item-cf}$ achieves expected regret 
	\begin{align}\label{eq:regret}
	\EE \left[ \mathcal{R}(T) \right]
	& \leq
	\begin{cases}
	T_{\mathrm{min}} +\alpha(\nu,d) \cdot (T-T_{\mathrm{min}})^{\frac{d+1}{d+2}} \log_2(T-T_{\mathrm{min}})  & T_{\mathrm{min}} < T \leq T_{\mathrm{max}} \\
	\beta+ \varepsilon_{N}  \left(T-T_{\mathrm{max}} \right)  & T > T_{\mathrm{max}}
	\end{cases},
	\end{align}
	where 
	$T_{\mathrm{min}} = \mathcal{\widetilde{O}} \big( \frac{1}{\nu}\big)+ \frac{f(d,\nu)}{N}$, $T_{\mathrm{max}} = g(\nu,d) N^{\frac{d+2}{d+5}}$, $\varepsilon_{N, d, \nu} = h(d,\nu) \left( \frac{1}{N} \right)^{\frac{1}{d+5}} $, 
	$\beta= T_{\mathrm{min}} + \alpha(\nu,d) \cdot (T_{\mathrm{max}}-T_{\mathrm{min}})^{\frac{d+1}{d+2}} \log_2(T_{\mathrm{max}}-T_{\mathrm{min}})$.
\end{restatable}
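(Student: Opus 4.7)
The plan is to decompose the expected regret by epoch. Let $\tau(T)$ index the epoch containing time $T$, and write
$$\EE[\RR(T)] \;\leq\; T_{\min} + \sum_{\tau=1}^{\tau(T)} R_\tau,$$
where the $T_{\min}$ term absorbs the cold-start cost (bounded via \cref{corol:coldStart}) and $R_\tau$ is the expected per-user regret accumulated during epoch $\tau$. I further split $R_\tau = R_\tau^{\mathrm{exp}} + R_\tau^{\mathrm{xpt}}$ according to whether each time step was an Explore or an Exploit action. Each of the $D_\tau$ per-user time steps is an Explore with probability $\varepsilon_\tau$; bounding each such step trivially by one bad recommendation yields $R_\tau^{\mathrm{exp}} \leq \varepsilon_\tau D_\tau$.

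For the exploit contribution, I condition on the high-probability event $E_\tau$ that the partition $\{P_k^{(\tau)}\}$ produced at the end of epoch $\tau-1$ has the property that any two items in a common block are at Hamming distance at most $\varepsilon_\tau$. Granting $E_\tau$ (the key correctness lemma for $\proc{Make-Partition}$, proved separately), a user in Exploit mode alternates between (i) random seed sampling until hitting a liked item, costing at most $O(1/\nu)$ draws by assumption $\mathbf{A2}$, and (ii) recommending the remaining $\Theta(1/\varepsilon_\tau)$ items in the matched block, of which on average only an $\varepsilon_\tau$-fraction are disliked (since the seed differs from each block-mate on at most an $\varepsilon_\tau$-fraction of users). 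Amortizing across each sample-then-exhaust cycle of length $\Theta(1/\varepsilon_\tau)$ yields a bad-recommendation rate of $O(\varepsilon_\tau/\nu)$ per exploit step, hence $R_\tau^{\mathrm{xpt}} = O(\varepsilon_\tau D_\tau / \nu)$. The failure event $E_\tau^c$ can be arranged to have probability $O(\varepsilon_\tau)$ via the $\delta$ passed to $\proc{Make-Partition}$, contributing only a lower-order $\varepsilon_\tau D_\tau$ term.

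Combining, $R_\tau = O(\varepsilon_\tau D_\tau/\nu)$. Substituting $D_\tau \asymp \varepsilon_\tau^{-(d+2)}$ (up to logarithmic factors, from $D_\tau = (\nu/2) M_\tau$ and the definition of $M_\tau$) gives $R_\tau \asymp \varepsilon_\tau^{-(d+1)} \log(1/\varepsilon_\tau)$. In the regime $\varepsilon_\tau = C \cdot 2^{-\tau}$ (before the floor binds), the geometric series $\sum_\tau D_\tau$ and $\sum_\tau R_\tau$ are each dominated by their final terms, so $\sum_{\tau \leq \tau^*} D_\tau \asymp D_{\tau^*} \asymp T - T_{\min}$, while $\sum_{\tau \leq \tau^*} R_\tau \asymp (T-T_{\min})^{(d+1)/(d+2)} \log(T-T_{\min})$, matching the first branch of~\eqref{eq:regret}. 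The floor is reached at epoch $\tau^* \asymp \log_2(1/\varepsilon_N)$, corresponding to cumulative per-user time $T_{\max} \asymp \varepsilon_N^{-(d+2)} \asymp N^{(d+2)/(d+5)}$. For $T > T_{\max}$, every subsequent epoch uses accuracy $C \varepsilon_N$, producing regret at rate $O(\varepsilon_N)$ per time step and thus the second (linearly growing) branch.

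The main technical obstacle is establishing the partition-quality event $E_\tau$. This requires: (a) showing $\proc{Get-net}$ returns a genuine $(\varepsilon/2)$-net with high probability, using assumption $\mathbf{A1}$ to bound both the net cardinality (each net point covers $\mu$-mass at least $(\varepsilon/4)^d$, so there are at most $(4/\varepsilon)^d$ of them) and the waiting time until termination; (b) a Hoeffding-type concentration bound for the $\proc{similar}$ subroutine showing it correctly distinguishes items at distance $\leq 0.6\varepsilon$ from those at distance $\geq 0.9\varepsilon$, with failure probability controlled by the sample size $q_{\varepsilon,\delta}$; and (c) verifying that the random assignment of sampled items to net points, followed by chunking large blocks into pieces of size $\Theta(1/\varepsilon_\tau)$, produces blocks of diameter bounded by $\varepsilon_\tau$ rather than $2\varepsilon_\tau$. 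A secondary subtlety is ensuring the $M_\tau$ items introduced per epoch suffice for $D_\tau$ per-user recommendations despite the no-repeat constraint, which is exactly why the algorithm sets $D_\tau = (\nu/2) M_\tau$ and invokes $\mathbf{A2}$.
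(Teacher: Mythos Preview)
Your epoch decomposition and the target per-epoch bound $R_\tau = O(\varepsilon_\tau D_\tau/\nu)$ match the paper's approach exactly; this bound is precisely the content of the paper's Quick Recommendations Lemma (\cref{lmm:ColdStart}), and once it is in hand the geometric summation and the identification of $T_{\max}\asymp \varepsilon_N^{-(d+2)}$ proceed just as you describe. Two steps in your sketch of the exploit bound need more care, however, and this is where the paper does the real work.

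First, the claim that seed sampling costs $O(1/\nu)$ draws does \emph{not} follow directly from Assumption~$\mathbf{A2}$. In Exploit mode the user samples a random item from a random \emph{block}, not a random item from $\mu$; since blocks cluster similar items, the user's liked items could in principle be packed into very few blocks, making a random block unlikely to yield a liked seed. The paper needs the upper bound $|P_k|\le 1/\varepsilon_\tau$ (enforced by the chunking step) together with a running lower bound on the number of remaining liked items to conclude that an $\Omega(\nu)$ fraction of blocks still contain a liked seed; this is the content of \cref{lmm:blockExplorationBound}.

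Second, ``on average only an $\varepsilon_\tau$-fraction are disliked'' holds only after averaging over \emph{users}, not for any fixed user. The diameter condition $\gamma_{ij}\le \varepsilon_\tau$ means at most $\varepsilon_\tau N$ users disagree on $i$ and $j$; for one particular user $u$ who liked the seed, the rest of the block could still be mostly disliked. Your per-user amortization over sample-then-exhaust cycles therefore does not go through as stated. The paper instead sums $\sum_u \EE[W_{u,k}\mid s_{u,k}]$ over all users first (the Partition Lemma, \cref{lmm:partitionProperty}) and separately controls, via a concentration argument, how many blocks each user can have sampled by time $T$. The endpoint $O(\varepsilon_\tau D_\tau/\nu)$ is the same, but the route is more delicate than your cycle heuristic suggests.
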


The reader is directed to the proof in the appendix for the exact constants. Also note that $T_{\mathrm{max}}$ increases with $N$ and the asymptotic slope $\varepsilon_N$ decreases as a function with $N$, both of which illustrate the so-called collaboration gain. Finally, the regret bound in \cref{thm:ManyEpochsRegret} has an asymptotic linear regime. The next result shows that with a finite number of users such linear regret is unavoidable.



\begin{restatable}[Asymptotic linear regret is unavoidable]{thm}{thmLowerBoundLinearRegime}
	\label{thm:lowerBoundLinearRegime}
	Consider an item space $\mu$ satisfying assumptions $\mathbf{A1}$ and $\mathbf{A2}$. Then any online algorithm must have expected asymptotic regret $\EE \left[\mathcal{R} (T) \right] \geq C(\nu,N) \cdot T$, where $C(\nu,N)=(1-2\nu)/N$.
\end{restatable}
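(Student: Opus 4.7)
The plan is to isolate a simple combinatorial obstruction: since each item can be recommended to a given user at most once, serving $T$ recommendations per user on average forces the algorithm to introduce a comparable number of \emph{fresh} items into the system, and by assumption $\mathbf{A2}$ each such first-time recommendation is disliked with probability at least $1-2\nu$. This alone will yield the claimed $(1-2\nu)/N$ asymptotic slope, without any appeal to doubling dimension or to the algorithm's specific strategy.

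Concretely, I would introduce the following notation. Let $T_u$ denote the number of recommendations made to user $u$ during the $TN$ time steps considered in $\mathcal{R}(T)$, and let $M_T$ denote the number of \emph{distinct} items recommended over the same horizon. Since the algorithm is forbidden from recommending the same item twice to the same user, the items received by any individual user $u$ are distinct, hence $T_u \leq M_T$. Combined with $\sum_u T_u = TN$, averaging gives $M_T \geq \max_u T_u \geq T$, a deterministic lower bound.

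Next I would argue the per-item likability bound. For each distinct item $i$ that is ever recommended, let $\tau_i$ be the time of its first recommendation and $U_{\tau_i}$ the user who receives it. By the model, the type of item $i$ is drawn i.i.d.\ from $\mu$ at the moment it enters the system, so it is independent of the sigma-field generated by the algorithm's history prior to $\tau_i$, and in particular independent of the identity $U_{\tau_i}$. Assumption $\mathbf{A2}$ then gives $\Pr(L_{U_{\tau_i},i} = +1) \leq 2\nu$, so each first-time recommendation contributes at least $1-2\nu$ to the expected bad-recommendation count. Summing over the $M_T$ distinct items and using $M_T \geq T$ yields
\begin{equation*}
N \cdot \EE[\mathcal{R}(T)] \;\geq\; \EE\!\left[\sum_{i: \text{first rec}} \tfrac{1}{2}(1-L_{U_{\tau_i},i})\right] \;\geq\; (1-2\nu)\,\EE[M_T] \;\geq\; (1-2\nu)\,T,
\end{equation*}
which is the desired bound $\EE[\mathcal{R}(T)] \geq (1-2\nu) T / N$.

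The only subtlety I anticipate is ensuring the independence argument is airtight: the user who receives the first recommendation of item $i$ may be chosen in a history-dependent way, but the item's type is sampled independently of everything else at exactly that moment, so the probability of being liked by whichever user receives it is governed by $\mu$ alone and bounded by $2\nu$. Beyond that, everything reduces to the deterministic counting inequality $M_T \geq T$, so there is no real analytical obstacle.
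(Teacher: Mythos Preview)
Your argument is correct and matches the paper's proof essentially line for line: both lower-bound the regret by the bad first-time recommendations, observe that the no-repeat constraint forces at least $T$ distinct items to be introduced by time $TN$ (the paper writes $k_T\geq T$, you write $M_T\geq T$ via $\max_u T_u\geq T$), and use $\mathbf{A2}$ to bound the probability any fresh item is liked by at most $2\nu$. The only minor wrinkle is that your chain goes through $(1-2\nu)\,\EE[M_T]$, which needs a Wald-type justification since $M_T$ is random; it is cleaner (and what the paper implicitly does) to use the deterministic inequality $M_T\geq T$ directly to get $\sum_{k\leq M_T} Z_k\geq \sum_{k\leq T}Z_k$ and then take expectations termwise.
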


\newcommand{\gt}{\widetilde{\gamma}}

\subsection{Comparison with user-user CF}
In this section we will contrast the cold-start performance of user-user collaborative filtering to that of our item-item algorithm. In particular, we give a heuristic argument showing that the cold-start time grows with the complexity of the user space. This is in contrast to our Theorem~\ref{corol:coldStart}, where for any doubling dimension of the item space, if there are sufficiently many users then the cold-start time is independent of system complexity.


We consider a simple scenario with $K$ user clusters. First, let $\gt_{uv}$ denote the probability that users $u$ and $v$ agree on an item randomly drawn from the item space. We have $K$ equally sized clusters of users, such that $\gt_{uv} = 0$ for users $u,v$ in the same cluster, and $\gt_{uv} \in (0.1 \nu, 0.2 \nu)$ for users $u,v$ in different clusters. 


Consider now a given user $u$. A user-user algorithm seeks to find another user $v$ who is similar to $u$, so that the items liked by $v$ can be recommended to $u$. In order to recommend with at most (say) $0.1$ probability of error, the similar user $v$ should have distance $\gt_{uv}$ at most $0.1\nu$. The extra factor $\nu$ is present because inference can only effectively be made from the $\nu$ fraction of liked items.

Concretely, we sample a random user $v$, and attempt to decide if it is from the same cluster as $u$. Suppose $u$ and $v$ have rated $q$ items in common. The problem then reduces to a classical hypothesis test: after observing $q$ items in common from two users, determine whether or not they are from the same cluster. 
The goal is to understand what is the minimal value of $q$ needed so that the above procedure works with at least probability $1/2$.

%
  
We consider the maximum a posteriori rule for deciding that $v$ is from $u$'s cluster. If $u$ and $v$ disagree on any single item, then they cannot be from the same cluster. Conversely, if $u$ and $v$ agree on all $q$ sampled items, the MAP rule declares $v$ to be  from $u$'s cluster only if
$$
\frac{K-1}K(1-0.2\nu)^q \leq \frac1K\,.
$$
This means that if $q$ is too small, we will never declare $v$ to be from $u$'s cluster and therefore will be unable to make recommendations. 
%
%
%
%
%
%
Rearranging gives
$q \geq \Omega \left( \log(K)/\nu \right)$. 

Hence, an algorithm based on user similarity needs at least $T = \Omega \left( \log(K)/\nu \right)$ steps simply to determine if two users are similar to each other, a prerequisite to making good recommendations. In contrast, we have shown that \proc {item-item-cf} achieves cold start time $\mathcal{\widetilde O}(1/\nu)$, which in particular does not increase with the complexity of the item space.

This contrast between cold-start times highlights the \emph{asymmetry between item-item and user-user} collaborative filtering. It is much faster to compare two items than it is to compare two users: it takes a long time to make many recommendations to two particular users, but comparing two items can be done in \emph{parallel} by sampling different users.

\section{Discussion}


This paper analyzes a collaborative filtering algorithm based on item similarity, and proves guarantees on its regret. 
Our algorithm exploits structure only in the item space. It would be desirable to have a matching lower bound, in the spirit of lower bound for multi-armed bandits in metric spaces shown in \cite{KleinbergSU13} and \cite{bubeck2011x}. Furthermore, many practitioners use a hybrid of user-user and item-item paradigms \cite{wang2006unifying} and \cite{verstrepen2014unifying}, and formally analyzing  such algorithms is an open problem.

Finally, the main challenge of the cold-start problem is that initially we do not have any information about item-item similarities. In practice, however, some similarity can be inferred via content specific information. For instance, two books with similar words in the title can have a prior for having a higher similarity than books with no similar words in the title. In practice such hybrid content/collaborative filtering algorithms have had good performance \cite{melville2002content}. Formally analyzing such hybrid algorithms has not been done and can shed light onto how to best combine content information with the collaborative filtering information.

\newpage

{\small
	\bibliographystyle{unsrt}
	\bibliography{database}

}

\newpage
\appendix

\section{Correctness of Explore}
\label{sec:algoCorrectness}

This section of the Appendix establishes correctness of the explore procedure as well as some of its properties that will be utilized for establishing the main result of the paper. Concretely, we will prove that with high probability the procedure $\proc{make-partition}$ produces a partition of similar items during each epoch. To that end, in \cref{sec:similar}, 
we prove that \proc {similar} succeeds in deciding whether two items are close to each other. 
In \cref{sec:makePartition} we prove that the procedure \proc {get-net} succeeds in finding a set of items 
that is an $\varepsilon$-net for $\mu$. We then put all the pieces together and prove that $\proc {make-partition}$, the routine at which the explore recommendations are aimed at completing, succeeds in creating a partition of similar items. Finally, in \cref{sec:suffExploration} we prove that with high probability during any given epoch there will be enough explore recommendations.

\subsection{Guarantees for \proc {similar}}
\label{sec:similar}

The procedure \proc {similar} is used throughout \proc {get-net} and \proc {make-partition}. It tests whether two items are approximately $\varepsilon$-close to each other.

\begin{restatable}{lmm}{lmmPreIsSimilarNew}
	\label{lmm:PreIsSimilarNew}
	Let $i$ and $j$ be arbitrary items, $\delta, \varepsilon \in (0,1)$, and $S_{i,j}$ be the event that $\proc{similar}(i,j,\varepsilon,\delta)$ returns $\const{true}$. Then we have that
	\begin{itemize}
		\item[(i)] if $\gamma_{i,j} \leq 0.8 \varepsilon$, then $\Prob \left( S_{ij} \right) \geq 1-\delta$, and
		\item[(ii)] if $\gamma_{ij} \in [ k \varepsilon, (k+1)\varepsilon )$ where $k \in \{1,...,\lfloor \frac{1}{\varepsilon} \rfloor \}$, then $\Prob \left( S_{ij} \right) \leq \frac{\delta}{4} \left(  \frac{1}{ 4k} \right)^d \frac{1}{k^2}$.
	\end{itemize}
\end{restatable}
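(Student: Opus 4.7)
The plan is to recognize that in \proc{similar}, the indicators $X_u$ are i.i.d.\ $\mathrm{Bernoulli}(\gamma_{ij})$, because the users $u$ are sampled uniformly and $\gamma_{ij}$ is by definition the fraction of users disagreeing on the two items. Hence $S_{ij}$ is exactly the event $\{\bar X < 0.9\varepsilon\}$, where $\bar X$ is the empirical mean of the $q_{\varepsilon,\delta}$ samples, and both parts reduce to binomial tail bounds. My main tool is Chernoff's inequality in Kullback--Leibler form: $\Pr(\bar X \geq a) \leq \exp\bigl(-q\,\mathrm{KL}(a,\gamma_{ij})\bigr)$ when $a>\gamma_{ij}$ and $\Pr(\bar X \leq a)\leq \exp\bigl(-q\,\mathrm{KL}(a,\gamma_{ij})\bigr)$ when $a<\gamma_{ij}$. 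In each part the task is to lower-bound the relevant KL divergence by a multiple of $\varepsilon$ and check that the choice $q_{\varepsilon,\delta}=\lceil 630(d+1)\varepsilon^{-1}\ln(1/\delta)\rceil$ makes the resulting exponent large enough.

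For part~(i), $\gamma_{ij}\leq 0.8\varepsilon$ and $a=0.9\varepsilon$. Since $\mathrm{KL}(a,p)$ is decreasing in $p$ on $[0,a]$, it suffices to bound $\mathrm{KL}(0.9\varepsilon,0.8\varepsilon)$ from below. A second-order Taylor expansion around $0.8\varepsilon$ gives $\mathrm{KL}(0.9\varepsilon,0.8\varepsilon)\geq c_1\varepsilon$ for an explicit constant $c_1>0$, so $q\,\mathrm{KL}\geq 630\,c_1(d+1)\ln(1/\delta)$, and $630$ is large enough that this comfortably exceeds $\ln(1/\delta)$, yielding $\Pr(S_{ij}^c)\leq\delta$.

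For part~(ii), $\gamma_{ij}\geq k\varepsilon$ with $k\geq 1$ and $a=0.9\varepsilon<\gamma_{ij}$. Since $\mathrm{KL}(a,p)$ is increasing in $p$ on $[a,1]$, it suffices to lower-bound $\mathrm{KL}(0.9\varepsilon,k\varepsilon)$. Writing out the KL and applying $\ln(1+x)\geq x/(1+x)$ to the $(1-a)\ln\frac{1-a}{1-p}$ term yields
\[
\mathrm{KL}(0.9\varepsilon,k\varepsilon)\;\geq\;\varepsilon\bigl[(k-0.9)-0.9\ln(k/0.9)\bigr]\;=:\;\varepsilon\,f(k),
\]
where $f(k)>0$ for every $k\geq 1$ and $f(k)=\Omega(k-\ln k)$ as $k\to\infty$. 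Multiplying by $q_{\varepsilon,\delta}$ yields a tail bound of the form $\exp\bigl(-c_2(d+1)f(k)\ln(1/\delta)\bigr)$, and I would then verify that $630$ is tuned so that this is at most $\tfrac{\delta}{4}(4k)^{-d}k^{-2}$: the $(d+1)$ factor in the exponent absorbs the $(4k)^{-d}$ term, and the $f(k)$ factor absorbs the $k^{-2}$ term.

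The main obstacle is the boundary case $k=1$ of part~(ii), where $f(1)$ is a very small positive number (the expansion gives roughly $0.005$), so the constant in front of $(d+1)\ln(1/\delta)$ is tight. This is precisely why $q_{\varepsilon,\delta}$ contains the unusually large numerical factor $630$; checking that this single choice simultaneously handles part~(i), part~(ii) at $k=1$, and the growth in $k$ and $d$ demanded by the right-hand side of part~(ii) is the main piece of bookkeeping.
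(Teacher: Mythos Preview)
Your approach is correct and essentially the same as the paper's: both recognize the $X_u$ as i.i.d.\ $\mathrm{Bernoulli}(\gamma_{ij})$, reduce by monotonicity to the extremal value of $\gamma_{ij}$ in each part, and apply a Chernoff tail bound. The only cosmetic difference is that the paper uses the multiplicative form $\Pr\bigl(X\gtrless(1\pm 0.1)\bar X\bigr)\leq\exp\bigl(-\tfrac{0.1^2}{2.1}\bar X\bigr)$ rather than the KL form, obtaining $\exp\bigl(-\tfrac{1}{210}k\varepsilon\, q_{\varepsilon,\delta}\bigr)$ in part~(ii) and then verifying $3k\geq\ln(16k^3)$ to absorb the $(4k)^{-d}k^{-2}$ factor --- exactly the constant bookkeeping you anticipate, with the same $k=1$ case being the tight one.
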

\begin{proof}
	Let us begin with case $(i)$, where $\gamma_{ij} \leq 0.8 \varepsilon$. Let $A_n$ be the event that the $n^{th}$ randomly chosen user disagrees on $i$ and $j$ (i.e. that user likes exactly one of $i$ and $j$), and note that $\mathbb{E} \left( \sum_{n=1}^{q_{\varepsilon,\delta}} \mathbf{1}_{A_n} \right) \leq 0.8 \varepsilon q_{\varepsilon,\delta}$. Then, by the Chernoff Bound (stated in \cref{thm:ChernoffBound}), we get
	\begin{equation}
	\Prob
	\left(
	S_{ij}^c \mid \gamma_{ij} \leq 0.8 \varepsilon
	\right)
	=
	\mathbb{P}
	\left(
	\sum_{n=1}^{q_{\varepsilon,\delta}} \mathbf{1}_{A_n}
	\geq
	0.9 \varepsilon q_{\varepsilon,\delta}
	\right)
	\leq
	\mathbb{P}
	\left(
	\sum_{n=1}^{q_{\varepsilon,\delta}} \mathbf{1}_{A_n}
	\geq
	\left(
	1+.1
	\right)
	\mathbb{E}
	\left(
	\sum_{n=1}^{q_{\varepsilon,\delta}} \mathbf{1}_{A_n}
	\right)
	\right)
	\end{equation}
	$$
	\leq
	\exp
	\left(
	-\frac{.1^2}{2+.1}
	\mathbb{E}
	\left(
	\sum_{n=1}^{q_{\varepsilon,\delta}} \mathbf{1}_{A_n} 
	\right)
	\right)
	\leq
	\exp
	\left(
	-\frac{1}{210}
	0.8 \varepsilon q_{\varepsilon,\delta} 
	\right).
	$$
	
	Now, since $q_{\varepsilon,\delta} = 
	\lceil 630 \frac{d+1}{\varepsilon} \ln \left(\frac{1}{\delta}\right) \rceil \geq 210 \cdot \frac{5}{4} \frac{1}{\varepsilon}\ln\left(\frac{1}{\delta}\right)$, we get
	\begin{equation}
	\Prob
	\left(
	S_{ij}^c \mid \gamma_{ij} \leq 0.8 \varepsilon
	\right)
	=
	\mathbb{P}
	\left(
	\sum_{n=1}^{q_{\varepsilon,\delta}} \mathbf{1}_{A_n}
	\geq
	0.9 \varepsilon q_{\varepsilon,\delta}
	\right)
	\leq
	\delta.
	\end{equation}
	
	Let us now consider case $(ii)$, where $\gamma_{ij} \in \left[k \varepsilon, (k+1) \varepsilon \right)$. As before, let $A_n$ be the event that the $n^{th}$ randomly chosen user disagrees on $i$ and $j$. Then, since $k \geq 1$ and again by the Chernoff bound we get
	\begin{equation}
	\mathbb{P}
	\big(
	S_{ij} \mid \gamma_{ij} \in [ k \varepsilon, (k+1)\varepsilon )
	\big)
	=
	\mathbb{P}
	\left(
	\sum_{n=1}^{q_{\varepsilon,\delta}} \mathbf{1}_{A_n}
	\leq
	0.9 \varepsilon q_{\varepsilon,\delta}
	\right)
	\end{equation}
	\begin{equation*}
	\leq
	\mathbb{P}
	\left(
	\sum_{n=1}^{q_{\varepsilon,\delta}} \mathbf{1}_{A_n}
	\leq
	\left(
	1-.1
	\right)
	\mathbb{E}
	\left(
	\sum_{n=1}^{q_{\varepsilon,\delta}} \mathbf{1}_{A_n}
	\right)
	\right)
	\end{equation*}
	\begin{equation*}
	\leq
	\exp
	\left(
	-\frac{.1^2}{2+.1}
	\mathbb{E}
	\left(
	\sum_{n=1}^{q_{\varepsilon,\delta}} \mathbf{1}_{A_n}
	\right)
	\right)
	\leq
	\exp
	\left(
	-\frac{1}{210}
	q_{\varepsilon,\delta}  k\varepsilon
	\right).
	\end{equation*}
	In order to get the desired conditions we need to show that
	\begin{equation}
	\exp
	\left(
	- \frac{1}{210}
	q_{\varepsilon,\delta}  k\varepsilon
	\right)
	\leq
	\frac{\delta}{4} \left(\frac{1}{4k}\right)^d \frac{1}{k^2}.
	\end{equation}
	By taking natural log of both sides we get
	\begin{equation}
	\frac{1}{210}
	q_{\varepsilon,\delta} k\varepsilon
	\geq
	d
	\ln
	\left( 4k
	\right)
	+
	\ln
	\left( \frac{4 k^2}{\delta}
	\right),
	\end{equation}
	which is in turn at most
	$
	(d+1)
	\ln
	\left( \frac{16k^3}{\delta}
	\right).
	$
	Hence, it suffices to show that 
	\begin{equation}
	\frac{1}{210}
	q_{\varepsilon,\delta} k\varepsilon
	\geq
	(d+1)
	\ln
	\left( 
	\frac{16k^3}{\delta}
	\right).
	\end{equation}
	However, since we have $q_{\varepsilon,\delta} \geq 630 \frac{d+1}{ \varepsilon} \ln \left(\frac{1}{\delta}\right)$, we get
	\begin{equation}
	\frac{1}{210} q_{\varepsilon,\delta} k \varepsilon \geq \frac{1}{210} 630 \frac{d+1}{ \varepsilon} \ln \left(\frac{1}{\delta}\right) k\varepsilon = 3 (d+1)  k \ln \left( \frac{1}{\delta} \right)
	.
	\end{equation}
	Now since $3k \geq \ln(16 k^3)$ for $k \geq 1$, we conclude that 
	\begin{equation}
	\mathbb{P}
	\big(
	S_{ij} \mid \gamma_{ij} \in [ k \varepsilon, (k+1)\varepsilon )
	\big)
	=
	\mathbb{P}
	\left(
	\sum_{n=1}^q \mathbf{1}_{A_n}
	\leq
	0.9 \varepsilon q_{\varepsilon,\delta}
	\right)
	\leq
	\frac{\delta}{4} \left(  \frac{1}{ 4k} \right)^d \frac{1}{k^2},
	\end{equation}
	as desired.
\end{proof}

In the Lemma above we showed that given two items $i$ and $j$, the routine \proc {similar} can tell that the items are $\varepsilon$-close when $\gamma_{i,j} \leq 0.8 \varepsilon$, and it can tell that they are not when $\gamma_{i,j} \geq \varepsilon$. Furthermore, the Lemma states that the probability of a false-positive decreases extremely fast as the items get farther apart. The Lemma below shows that, when one of the items is drawn from $\mu$, \proc {similar} still works and that the false positive rate is small, despite the possibility that the it may be much more likely to draw an item that is far from $i$. In \cref{lmm:IsSimilarNew} below we use the doubling dimension of $\mu$ for the first time, and in this context the doubling dimension guarantees that \proc {similar} (which is a random projection) preserves relative distances.

\begin{restatable}{lmm}{lmmIsSimilarNew}
	\label{lmm:IsSimilarNew}
	Let $i$ be an arbitrary item, let $J$ be a randomly drawn item from an item space $\mu$ of doubling dimension $d$, and let $S_{iJ}$ be the event that $\proc{similar}(i,J,\varepsilon,\delta)$ returns $\const{True}$. Then we have $\mathbb{P} \left( \gamma_{iJ} \geq \varepsilon \mid S_{iJ} \right) \leq \delta$.
\end{restatable}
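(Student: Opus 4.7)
The plan is to apply Bayes' rule and bound numerator and denominator separately:
\[
\mathbb{P}(\gamma_{iJ} \geq \varepsilon \mid S_{iJ}) = \frac{\mathbb{P}(S_{iJ},\, \gamma_{iJ} \geq \varepsilon)}{\mathbb{P}(S_{iJ})}.
\]
The key ingredients will be (a) \cref{lmm:PreIsSimilarNew}, which controls $\mathbb{P}(S_{iJ} \mid J=j)$ as a function of $\gamma_{ij}$, and (b) the doubling-dimension assumption on $\mu$, which controls how much $\mu$-mass can sit in annuli around $i$.

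For the numerator, I would decompose $\{\gamma_{iJ} \geq \varepsilon\}$ into annuli $A_k = \{k\varepsilon \leq \gamma_{iJ} < (k+1)\varepsilon\}$ for $k = 1, \dots, \lfloor 1/\varepsilon \rfloor$. Conditioning on $J$ and using that \proc{similar}'s internal randomness is independent of $J$, \cref{lmm:PreIsSimilarNew}(ii) gives $\mathbb{P}(S_{iJ} \mid J=j) \leq \frac{\delta}{4}(4k)^{-d} k^{-2}$ on $A_k$. The doubling dimension yields $\mu(A_k) \leq \mu(\BB(i, (k+1)\varepsilon)) \leq (k+1)^d \mu(\BB(i, \varepsilon))$. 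Multiplying and summing,
\[
\mathbb{P}(S_{iJ},\, \gamma_{iJ} \geq \varepsilon) \;\leq\; \frac{\delta\, \mu(\BB(i,\varepsilon))}{4} \sum_{k \geq 1} \frac{(k+1)^d}{(4k)^d\, k^2},
\]
and since $(k+1)/(4k) \leq 1/2$ for $k \geq 1$ and $\sum_{k\geq 1} k^{-2} \leq 2$, the sum is at most $2^{-d+1}$, giving a bound of the form $c_1(d)\, \delta\, \mu(\BB(i,\varepsilon))$ with $c_1(d)$ decreasing geometrically in $d$.

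For the denominator, I would restrict to the near-field event $\{\gamma_{iJ} \leq 0.8\varepsilon\}$, on which \cref{lmm:PreIsSimilarNew}(i) gives $\mathbb{P}(S_{iJ} \mid J=j) \geq 1 - \delta$. Hence $\mathbb{P}(S_{iJ}) \geq (1-\delta)\, \mu(\BB(i, 0.8\varepsilon))$. Invoking the doubling-dimension form $\mu(\BB(i, \alpha r)) \leq \alpha^d\, \mu(\BB(i, r))$ with $\alpha = 1/0.8$ and $r = 0.8\varepsilon$ converts this to $\mathbb{P}(S_{iJ}) \geq (1-\delta)\, 0.8^d\, \mu(\BB(i,\varepsilon))$. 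Taking the ratio, $\mu(\BB(i,\varepsilon))$ cancels and I obtain $\mathbb{P}(\gamma_{iJ} \geq \varepsilon \mid S_{iJ}) \leq \delta \cdot c_1(d)/((1-\delta)\cdot 0.8^d)$.

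The main obstacle is constant bookkeeping: the unusual-looking $(4k)^{-d} k^{-2}$ factor in \cref{lmm:PreIsSimilarNew}(ii) was engineered exactly so that, after multiplying by the doubling-dimension blow-up $(k+1)^d$ from the annulus mass, the remaining factor is uniformly at most $2^{-d}/k^2$ — summable and exponentially small in $d$. The tuning of the constant $630(d+1)$ inside $q_{\varepsilon,\delta}$ is what provides the slack to make the final ratio at most $\delta$ (rather than a $d$-dependent multiple of $\delta$); verifying that the numerical constants line up is the only delicate point, while the structural argument is a direct marriage of \cref{lmm:PreIsSimilarNew} with the doubling property of $\mu$.
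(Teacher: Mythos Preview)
Your proposal is correct and follows essentially the same approach as the paper: Bayes' rule, \cref{lmm:PreIsSimilarNew} to control $\mathbb{P}(S_{iJ}\mid J=j)$ on annuli, and the doubling property to bound the $\mu$-mass of those annuli relative to a small reference ball around $i$. The only cosmetic differences are that the paper uses dyadic annuli $[2^k\varepsilon,2^{k+1}\varepsilon)$ and reference ball $\BB(i,\varepsilon/2)$ (rather than your linear annuli and $\BB(i,0.8\varepsilon)$), which makes the constant bookkeeping slightly cleaner and yields the bound $\leq\delta$ for all $\delta\in(0,1)$ without needing $\delta\leq 1/2$.
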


\begin{proof}
	By Bayes' rule we get
	\begin{equation}
	\Prob \left( \gamma_{i,J} \geq \varepsilon \mid S_{iJ} \right)
	=
	\frac{ \Prob \left( \gamma_{i,J} \geq \varepsilon , S_{iJ}  \right) }
	{ \Prob \left( \gamma_{i,J} \geq \varepsilon, S_{iJ} \right)
		+
		\Prob \left( \gamma_{i,J} < \varepsilon , S_{iJ} \right)},
	\end{equation}
	where the probability is with the respect to the random choice of $J$ and the random users in  $\proc{similar}$.
	Now if
	\begin{equation*}
	\label{eqSufficientConditionNew}
	\tag{$\star$}
	\delta
	\mathbb{P} \left(  S_{iJ} , \gamma_{iJ} < \varepsilon \right) 
	\geq 
	\left(
	1-\delta
	\right)
	\mathbb{P} \left(  S_{iJ} , \gamma_{iJ} \geq \varepsilon \right) \end{equation*}
	holds, we get
	\begin{equation}
	\Prob \left( \gamma_{i,J} \geq \varepsilon \mid S_{iJ} \right)
	=
	\frac{ 1}
	{ 1
		+
		\frac{ \Prob \left( \gamma_{i,J} < \varepsilon , S_{iJ} \right)}{\Prob \left( \gamma_{i,J} \geq \varepsilon , S_{iJ} \right) } }
	\leq
	\frac{1}{1+\frac{1-\delta}{\delta}} = \delta.
	\end{equation}
	Hence, it suffices to show (\ref{eqSufficientConditionNew}).
	Recall that $\BB \left( i ,r\right)$ is the ball of radius $r$ centered at $i$, and note that
	\begin{equation}
	\mathbb{P} \left( \gamma_{iJ} < \varepsilon , S_{iJ} \right) 
	\geq
	\mathbb{P} \left( S_{iJ} \mid \gamma_{iJ} \leq \varepsilon/2 \right) 
	\mu \left( \BB \left( i ,\varepsilon/2 \right) \right),
	\end{equation}
	and
	\begin{equation}
	\mathbb{P} \left( \gamma_{iJ} \geq \varepsilon , S_{iJ} \right) 
	=
	\sum_{k=0}^{ \lceil \log_2 \left(\frac{1}{\varepsilon}\right) \rceil }
	\mathbb{P} \left(  S_{iJ} \mid \gamma_{iJ} \in [2^k \varepsilon,2^{k+1} \varepsilon ) \right) 
	\mu \left( \BB \left( i , 2^{k+1} \varepsilon \right) -  \BB \left( i , 2^{k} \varepsilon \right)  \right)
	\end{equation}
	\begin{equation}
	\leq
	\sum_{k=0}^{ \lceil \log_2 \left(\frac{1}{\varepsilon}\right) \rceil }
	\mathbb{P} \left( S_{iJ} \mid \gamma_{iJ} \in [2^k \varepsilon,2^{k+1} \varepsilon ) \right) 
	\mu \left( \BB \left( i , 2^{k+1} \varepsilon \right)  \right).
	\end{equation}
	
	Let us first lower bound $\mathbb{P} \left( S_{iJ} , \gamma_{ij} \leq \varepsilon/2 \right) \mu \left( \BB \left( i ,\varepsilon/2 \right)  \right)$. Let $ p \triangleq \mu \left( \BB \left( i ,\varepsilon/2 \right)  \right)$. Then by \cref{lmm:PreIsSimilarNew} we get that 
	\begin{equation}
	\mathbb{P} \left( S_{iJ} \mid \gamma_{iJ} \leq \varepsilon/2 \right) 
	\mu \left( \BB \left( i ,\varepsilon/2 \right) \right)
	\geq
	\left(1-\delta\right) p.\end{equation}

	We will now upper bound $\mathbb{P} \left( S_{iJ} , \gamma_{iJ} \geq \varepsilon \right) $.
	 Using the doubling dimension of the item space, which implies that $\mu \left( \BB_\gamma \left( i ,2^{k+1} \varepsilon \right)  \right) \leq \left(2^{k+2}\right)^d p $, we also get that
	\begin{equation}
	\mathbb{P} \left( \gamma_{iJ} \geq \varepsilon , S_{iJ} \right) 
	\leq
	\sum_{k=0}^{ \lceil \log_2 \left(\frac{1}{\varepsilon}\right) \rceil }
	\mathbb{P} \left( S_{iJ} \mid \gamma_{iJ} \in [2^k \varepsilon,2^{k+1} \varepsilon )  \right) 
	\mu \left( \BB_\gamma \left( i ,2^{k+1} \varepsilon \right)  \right)
	\end{equation}
	\begin{equation}
	\leq
	\sum_{k=0}^{ \lceil \log_2 \left(\frac{1}{\varepsilon}\right) \rceil }
	\mathbb{P} \left(  S_{iJ} \mid \gamma_{iJ} \in [2^k \varepsilon,2^{k+1} \varepsilon )  \right) 
	\left(2^{k+2}\right)^d p.
	\end{equation}
	We now use the second half of \cref{lmm:PreIsSimilarNew}, and arrive at
	\begin{equation}
	\mathbb{P} \left( \gamma_{iJ} \geq \varepsilon ,S_{iJ} \right) 
	\leq
	\sum_{k=0}^{ \lceil \log_2 \left(\frac{1}{\varepsilon}\right) \rceil }
	\left(\frac{\delta}{4} \left( \frac{1}{4 \cdot 2^{k}} \right)^d \frac{1}{2^{2k}} \right)
	\left(2^{k+2}\right)^d p
	\leq
	p \frac{\delta}{4}  \sum_{k=0}^{ \infty	} \frac{1}{2^{2k}}
	\leq
	p \frac{\delta}{2}.
	\end{equation}
	We can now check that indeed sufficient condition from  \cref{eqSufficientConditionNew} is satisfied:
	\begin{equation}
	\delta
	\mathbb{P} \left( S_{iJ} , \gamma_{iJ} < \varepsilon \right) 
	\geq
	\delta
	p \left(1-\delta\right)
	\geq
	\frac{\delta}{2}
	p \left(1-\delta\right)
	\geq
	\left(1-\delta\right)
	\mathbb{P} \left(S_{iJ}, \gamma_{iJ} \geq \varepsilon \right),
	\end{equation}
	which completes the proof.
\end{proof}

\subsection{Making the Partition}
\label{sec:makePartition}
In the previous section we proved that the procedure $\proc {similar}$ works well in deciding whether two items are similar to each other at some desired precision. In this section, we will prove that with $\proc {similar}$ as a building block can partition items into blocks of similar items.

We will begin by proving that the subroutine $\proc{get-net}$, used in the beginning of $\proc {make-partition}$, succeeds at producing an $\varepsilon$-net of items with high probability.

\begin{restatable}{lmm}{lemmaClusteringWorksNew}
	\label{lmm:ClusteringWorksNew}
	With probability at least $1-\delta$ the routine $\proc{Get-net}(M,\varepsilon,\delta)$ returns an $\varepsilon$-net for $\mu$ that contains at most $\left( \frac{4}{\varepsilon} \right)^d$ items.
\end{restatable}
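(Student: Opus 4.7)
The plan is to condition on a good event $G$ under which every invocation of $\proc{similar}$ during the run of $\proc{Get-net}$ behaves as predicted by \cref{lmm:PreIsSimilarNew} and \cref{lmm:IsSimilarNew}, and then to verify each of the three requirements deterministically on $G$. To bound $\Prob(G^c)$, observe that the while loop executes at most $O(\text{max-wait}\cdot\text{max-size})$ iterations---each iteration either adds to $\CC$ (at most $\text{max-size}$ additions in total, by the size bound below) or increments $\text{count}$ (at most $\text{max-wait}$ consecutive non-adds between any two additions before exit)---and each iteration calls $\proc{similar}$ at most $|\CC|\le\text{max-size}$ times, giving at most $O(\text{max-wait}\cdot\text{max-size}^2)$ calls in total. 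With $\delta'=\delta/(4\cdot\text{max-wait}\cdot\text{max-size}^2)$ and per-call failure probability at most $\delta'$, a union bound yields $\Prob(G)\ge 1-\delta/2$.

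Under $G$, the separation and size bounds follow quickly. Any $i\in\CC$ was added only after $\proc{similar}(i,j,\varepsilon,\delta')$ returned $\const{False}$ against every prior $j\in\CC$, so by \cref{lmm:PreIsSimilarNew}(i) we obtain $\gamma_{ij}>0.8\varepsilon>\varepsilon/2$ for every pair, giving property (a) of an $\varepsilon$-net. Using this separation, the balls $\BB(i,\varepsilon/4)$ for $i\in\CC$ are pairwise disjoint; each $i$ was sampled from $\mu$ and hence has $\mu(i)>0$, so the doubling-dimension lower bound $\mu(\BB(i,\varepsilon/4))\ge(\varepsilon/4)^d$ together with $\sum_{i\in\CC}\mu(\BB(i,\varepsilon/4))\le 1$ force $|\CC|\le(4/\varepsilon)^d=\text{max-size}$. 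In particular the $|\CC|<\text{max-size}$ loop condition is never binding, and termination must occur via $\text{count}>\text{max-wait}$.

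The coverage requirement (b) is the main step. Suppose for contradiction that the final $\CC$ admits some $\ell^*$ with $\mu(\ell^*)>0$ and $\gamma_{j\ell^*}>\varepsilon$ for every $j\in\CC$. By doubling dimension the ball $\BB(\ell^*,\varepsilon/5)$ has $\mu$-mass at least $(\varepsilon/5)^d$, and every point in it lies at distance more than $0.8\varepsilon$ from all of $\CC$ by the triangle inequality. Under $G$, the converse guarantee of \cref{lmm:IsSimilarNew} forbids $\proc{similar}(\ell,j,\varepsilon,\delta')$ from returning $\const{True}$ whenever $\gamma_{j\ell}\ge\varepsilon$, so $\ell^*$ itself---and more generally any point sitting deep enough inside the offending region to have all distances to $\CC$ at least $\varepsilon$---would have been added to $\CC$ upon being sampled, contradicting the assumption that the final $\CC$ leaves it uncovered. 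Hence the $\text{max-wait}$ i.i.d. samples leading to termination must all have missed a region of $\mu$-mass at least $(\varepsilon/5)^d$, which by the choice $\text{max-wait}=(5/\varepsilon)^d\ln(2\cdot\text{max-size}/\delta)$ occurs with probability at most
\[
(1-(\varepsilon/5)^d)^{\text{max-wait}} \le \exp\bigl(-(\varepsilon/5)^d\cdot(5/\varepsilon)^d\ln(2\cdot\text{max-size}/\delta)\bigr) = \frac{\delta}{2\cdot\text{max-size}}.
\]
A union bound over a maximal packing of candidate $\ell^*$'s (of size at most $\text{max-size}$ by doubling dimension) converts this pointwise estimate into a uniform covering guarantee with total failure at most $\delta/2$, and combining with $\Prob(G^c)\le\delta/2$ proves the lemma.

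The main technical subtlety is the ambiguous band $\gamma\in(0.8\varepsilon,\varepsilon)$ in which \cref{lmm:PreIsSimilarNew} does not directly control the output of $\proc{similar}$; bridging this gap is precisely the role of the converse direction in \cref{lmm:IsSimilarNew}, and it is what allows the $(5/\varepsilon)^d$ factor baked into $\text{max-wait}$ to absorb the packing-based union bound without any degradation in $\delta$.
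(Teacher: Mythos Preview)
Your overall strategy---condition on a good event where every $\proc{similar}$ call behaves as in \cref{lmm:PreIsSimilarNew}, then verify packing, size, and coverage---is exactly the paper's. The separation and size arguments are correct and match the paper's parts $(B)$ and $(C_1)$.

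The coverage argument has a genuine gap. You correctly observe that points in $\BB(\ell^*,\varepsilon/5)$ lie at distance $>0.8\varepsilon$ from every $j\in\CC$, and that \cref{lmm:PreIsSimilarNew} says nothing about $\proc{similar}$ on the band $(0.8\varepsilon,\varepsilon)$. You then claim that \cref{lmm:IsSimilarNew} bridges this band; it does not. That lemma bounds the posterior $\Prob(\gamma_{iJ}\ge\varepsilon\mid S_{iJ})$ for a \emph{random} $J\sim\mu$ against a fixed $i$: it is a distributional statement, not a per-call guarantee that can be folded into your event $G$, and in any case it concerns $\{\gamma\ge\varepsilon\}$, not the band $(0.8\varepsilon,\varepsilon)$. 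Everything it implies about $\{\gamma\ge\varepsilon\}$ already follows from \cref{lmm:PreIsSimilarNew}(ii). Consequently your sentence ``the \texttt{max-wait} i.i.d.\ samples \ldots\ must all have missed a region of $\mu$-mass at least $(\varepsilon/5)^d$'' conflates two different sets: the set on which you actually control $\proc{similar}$ under $G$ is $\{\ell:\gamma_{j\ell}\ge\varepsilon\text{ for all }j\in\CC\}$, whereas the mass bound $(\varepsilon/5)^d$ was established for $\BB(\ell^*,\varepsilon/5)$, which is not contained in it. The paper does not invoke \cref{lmm:IsSimilarNew} here at all; it defines its error event $E$ so that on $E^c$ any sampled point at distance $>0.8\varepsilon$ from all of $\CC$ gets added, and then runs the coupon-collector bound directly on $\BB(j,\varepsilon/5)$ with a union bound over the at-most-$\const{max\mbox{-}size}$ \emph{rounds} of the while loop.

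Two smaller points. From $|\CC|\le\const{max\mbox{-}size}$ you cannot conclude that the size condition is ``never binding'': equality exits the loop via the size test, and that case requires the separate argument the paper gives in $(C_1)$ (the disjoint $\varepsilon/4$-balls already account for total mass $1$, forcing coverage). And your union bound for coverage is over a maximal packing of candidate $\ell^*$'s rather than over rounds; this is shaky, since an uncovered $\ell$ need not have its nearest packing representative $\ell^*$ uncovered as well.
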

\begin{proof}
	Let us first settle some notation. Let $\CC_{final}$ be the set returned \proc{get-net}$(\varepsilon,\delta)$, let $\CC_r$ be the set $\CC$ when it had $r$ items, and let $\MM_r$ be the set of random items drawn when $\CC$ had $r$ items. Furthermore, denote by $P$ be event that for each $i,j \in \CC_{final}$ we have $\gamma_{i,j} \geq \varepsilon/2$, and $C$ the event that for each item $i$ there exists a $j \in \CC_{final}$ such that $\gamma_{i,j}  \leq \varepsilon$. Furthermore, let $E_{i,j}$ be the event $\{ S_{i,j}, \gamma_{ij} > 0.5 \varepsilon \} \cup \{ S_{i,j}^c, \gamma_{ij} < 0.8 \varepsilon \}$, and 
	$$
	E 
	\triangleq
	\bigcup_{r=0}^{|\CC_{final} |-1}
	\bigcup_{j \in \MM_r}
	\bigcup_{c \in \CC_r} E_{c,j}.
	$$
	Intuitively, the event $E$ happens when some call to $\proc {similar}$ returned an erroneous answer. We will show that 
	\begin{itemize}
		\item[$(A)$] $\Prob \left(E \right) \leq \delta/2$,
		\item[$(B)$] $\Prob \left(P^c \mid E^c \right) = 0$, and
		\item[$(C)$] $\Prob \left(C^c \mid E^c \right) \leq \delta/2$,
	\end{itemize}
	which together show that \proc{get-net}$(\varepsilon,\delta)$ returns an $\varepsilon$-net with probability at least $1-\delta$.
	%

	\emph{Proof of $(A)$}: By a union bound we get
	$$
	\Prob \left(E \right)
	\leq
	\sum_{r=0}^{|\CC_{final} |-1}
	\sum_{j \in \MM_r}
	\sum_{c \in \CC_r} 
	\Prob \left( E_{c,j} \right)
	\leq
	\sum_{r=0}^{\const{max-size}}
	\sum_{j \in \MM_r}
	\sum_{c \in \CC_r} 
	\Prob \left( E_{c,j} \right),
	$$
	and since there are at most $\const{Max-wait}$ items in $\MM_r$ and at most $\const{max-size}$ items in $\CC_r$ we get
	$$
	\Prob \left(E \right)
	\leq
	\left( \const{max-size} \right)^2 
	\cdot \const{max-wait}
	\cdot \Prob \left( E_{c,j} \right)
	\leq
	\delta/4.
	$$
	where the last inequality follows since \cref{lmm:PreIsSimilarNew} gives us that $$ \Prob \left( E_{c,j} \right) \leq \delta' = \delta/\left(4 \cdot \const{max-wait} \cdot \const{max-size}^2 \right) .$$
	
	\emph{Proof of $(B)$}: Note that if there are two items $i, j \in \CC$ such that $\gamma_{i,j} \leq \varepsilon/2$, then this must have happened as a result of some erroneous response of $\proc {similar}\left(i,j,0.6 \varepsilon, \delta' \right)$. However, since we are conditioning on $E^c$ no such erroneous response can occur.
	
	\emph{Proof of $(C)$}: Let us consider the two cases, when $|\CC_{final}| = \const{max-size}$, and when $|\CC_{final}|<\const{max-size}$. Then
	$$
	\Prob \left( C^c \mid E^c \right)
	\leq
	\Prob \left( C^c \mid E^c, |\CC_{final}| = \const{max-size} \right)
	+
	\Prob \left( C^c \mid E^c, |\CC_{final}| < \const{max-size} \right),
	$$
	and we will show that
	\begin{itemize}
		\item[($C_1$)]$\Prob \left( C^c \mid E^c, |\CC_{final}| = \const{max-size}| \right) = 0$,
		and
		\item[($C_2$)]$\Prob \left( C^c \mid E^c, |\CC_{final}| < \const{max-size}| \right) = \delta/2 $,
	\end{itemize}
	which together prove $(C)$.
	
	\emph{Proof of $(C_1)$}: Note that since we are conditioning on $E_c$, which in turn implies that $\CC_{final}$ is a packing, we have that $\gamma_{ij} \geq \varepsilon/2$ for each $i,j \in \CC_{final}$, we get that $B_\gamma (i, \varepsilon/4) \cap \BB (j, \varepsilon/4) = \emptyset$ for each $i,j \in \CC_{final}$ as well. Hence
	$$
	\mu \left( \bigcup_{i \in \CC_{final}} \BB \left(i, \varepsilon/2 \right) \right)
	\geq
	\mu \left( \bigcup_{i \in \CC_{final}} \BB \left( i, \varepsilon/4 \right) \right)
	=
	\sum_{i \in \CC_{final}} \mu \left(  \BB \left( i, \varepsilon/4 \right) \right).
	$$
	Now by the doubling dimension condition we get that $\mu \left(  \BB \left( i, \varepsilon/2 \right) \right) \geq \left(\frac{\varepsilon}{4} \right)^d$, and hence can conclude that
	$$
	\mu \left( \bigcup_{i \in \CC_{final}} \BB \left(i, \varepsilon/2 \right) \right)
	\geq
	\left(\frac{\varepsilon}{4} \right)^d
	\cdot
	\left(\frac{4}{\varepsilon} \right)^d
	=1,
	$$
	and hence for any item there exists an item in $\CC_{final}$ such that $\gamma_{i,j} \leq \varepsilon/2$.
	
	\emph{Proof of $(C_2)$}: Consider now the case in which $|\CC_{final}| < \const{max-size}$. This means that at some iteration $r \in \{0,...,\const{max-size}-1 \}$ of the while loop of there existed an item $j$ such that $\gamma_{ij} > \varepsilon$ for each $i \in \CC_r$ but the algorithm nevertheless terminated and returned $\CC_r$. Let $T_r$ be the event that the algorithm terminated at round $r$ while there still existed an item $j$ which is not $\varepsilon$ close to any item in $\CC_r$. Then $C^c \subset \bigcup_{r=0}^{|\CC_{final}|} T_r$, and hence
	$$
	\Prob \left( C^c \mid E^c, |\CC_{final}| < \const{max-size} \right)
	\leq
	\sum_{r=0}^{|\CC_{final}|} \Prob \left( T_r \mid E^c \right),
	$$
	and so it suffices for $C_2$ to show that $\Prob \left( T_r \mid E^c \right) \leq \frac{\delta}{2}\frac{1}{\const{max-size}}$ for each $r$.
	
	To show that  $\Prob \left( T_r \mid E^c \right) \leq \frac{\delta}{2}\frac{1}{\const{max-size}}$, we will first observe that if there is an item $j$ which is $\varepsilon$ far from all of $\CC_r$, then the ball $B\left(j, \varepsilon/5 \right)$ must have significant mass which is all also not close to any item in $\CC_r$. We will then conclude, by a standard coupon collector argument, that this mass is found with high probability.
	
	By the doubling dimension condition, the ball $B(j,\varepsilon/5)$ must have mass at least $(\varepsilon/5)^d$. Let $M_r$ be the event that no item in $B(j,\varepsilon/5)$ was sampled during the $r^{th}$ of the while loop. Then
	$$
	\Prob \left( T_r \mid E^c \right) 
	\leq
	\Prob \left( T_r \mid E^c,M_r^c \right)
	+
	\Prob \left( M_r \mid E^c \right).
	$$
	
	Note that given $M_r^c$, which implies that an item $j$ which is at least $0.8 \varepsilon$ away from all of $\CC_r$ was sampled, the event $T_r$ happens only if $j$ is judged to be similar to some $c \in \CC_r$. However, since we're conditioning on $E_c$ that cannot happen, and we get $\Prob \left( T_r \mid E^c,M_r^c \right) = 0$.
	
	Finally, we will use the coupon collector argument and show that $\Prob \left( M_r \mid E^c \right) \leq \delta/(2 \cdot \const{max-size})$. The event $M_r$, that no item in $B(j,\varepsilon/2)$ was sampled during the $r^{th}$ iteration of the loop happens with probability at most
	$$
	\left(1-\left(\frac{\varepsilon}{5}\right)^d \right)^{\const{max-wait}}
	\leq
	\exp \left( - \left(\frac{\varepsilon}{5}\right)^d \const{max-wait} \right) \leq
	\frac{\delta}{2 \cdot \const{max-size}},
	$$
	as we wished.
\end{proof}

It is now only left to prove that the main tool used during exploration, \proc{Make-Partition}, indeed produces a partition of similar items.

We now prove that with high probability the procedure \proc {make-partition} 
creates a partition of similar items. Furthermore, the additional properties stated in the 
Lemma, regarding the size of the blocks, will be crucial later in ensuring a quick cold-start performance.

\begin{restatable}{lmm}{lmmPartitionConditions}
	\label{lmm:partitionConditions}
	Let $\varepsilon, \delta \in (0,1)$, and let $M \geq 12 \cdot \left(\frac{12}{\varepsilon}\right)^{d+1} \ln \left( \frac{2}{ \delta} \left( \frac{8}{\varepsilon} \right)^d \right)$. Then with probability at least $1-\delta$ the subroutine $\proc{make-partition}(M,\varepsilon,\delta)$ returns a
	partition $\{P_k\}$ of a subset of $M$ randomly drawn items such that
	\begin{itemize}
		\item[$(i)$] For each block $P_k$ and $i,j \in P_k$ we have $\gamma_{i,j} \leq 1.2 \cdot \varepsilon$,
		\item[$(ii)$] Each block $P_k$ contains at least $\frac{1}{2\varepsilon}$ items,
		\item[$(iii)$] Each block $P_k$ contains at most $1/\varepsilon$ items,
		\item[$(iv)$] There are at most $2 M \varepsilon$ blocks.
	\end{itemize}	
\end{restatable}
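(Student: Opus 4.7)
My approach is to isolate a single high-probability ``good event'' under which the algorithm's internal randomness behaves well, and then deduce all four properties on that event, mostly deterministically. The good event combines two ingredients. First, applying \cref{lmm:ClusteringWorksNew} with parameters $(\varepsilon/2,\delta/2)$, the net $\CC$ returned by $\proc{get-net}(\varepsilon/2,\delta/2)$ is an $(\varepsilon/2)$-net with $|\CC|\le(8/\varepsilon)^d$, and this fails with probability at most $\delta/2$. Second, I want all $M|\CC|$ calls $\proc{similar}(i,j,0.6\varepsilon,\delta'')$ with $\delta''=\delta/(4M|\CC|)$ to respect the conclusions of \cref{lmm:PreIsSimilarNew}: no ``false positive'' on pairs with $\gamma_{ij}\ge 0.6\varepsilon$, and no ``false negative'' on pairs with $\gamma_{ij}\le 0.48\varepsilon$. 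Union bounding over the $M|\CC|$ pairs keeps this failure probability at most $\delta/2$, so on the intersection, which holds with probability $\ge 1-\delta$, I carry out the rest of the argument.

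\textbf{Easy properties (i), (iii), (iv).} If $j,j'\in P_i$ before the splitting step, then both $\proc{similar}(i,j,0.6\varepsilon,\cdot)$ and $\proc{similar}(i,j',0.6\varepsilon,\cdot)$ returned $\const{true}$, so on the good event $\gamma_{ij},\gamma_{ij'}<0.6\varepsilon$, and the triangle inequality gives $\gamma_{jj'}<1.2\varepsilon$; since splitting only refines $P_i$, this yields (i). Property (iii) is immediate from the final loop: any block of original size above $1/\varepsilon$ is partitioned into pieces of size $\le 1/\varepsilon$, and the rest already satisfy the bound by definition. Once (ii) is established, (iv) follows at once, since if each block contains at least $1/(2\varepsilon)$ of the $M$ items then the number of blocks is at most $2M\varepsilon$.

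\textbf{The main step: property (ii).} Blocks created by splitting are of size $\ge 1/(2\varepsilon)$ by construction, so it remains to show that every unsplit block $P_i$ (one with original size $\le 1/\varepsilon$) in fact has $|P_i|\ge 1/(2\varepsilon)$. The plan is a multiplicative Chernoff bound for each $i\in\CC$ followed by a union bound over the at most $(8/\varepsilon)^d$ net points. The crucial ingredient, which I expect to be the main obstacle, is a lower bound on $\Pr(j\in P_i)$ for a single random draw $j\sim\mu$. On the good event, any $j$ with $\gamma_{ij}\le 0.48\varepsilon$ has $i\in S_j$, and every element of $S_j$ lies within $0.6\varepsilon$ of $j$; since net points are $\varepsilon/4$-separated (as $\CC$ is an $(\varepsilon/2)$-net), a standard packing argument via the doubling-dimension condition bounds $|S_j|\le C_d$ for a constant depending only on $d$. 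Combined with $\mu(\BB(i,0.48\varepsilon))\ge(0.48\varepsilon)^d$ from the doubling-dimension consequence after the definition, this gives $\mathbb{E}|P_i|\ge M(0.48\varepsilon)^d/C_d$. The hypothesis $M\ge 12(12/\varepsilon)^{d+1}\ln\bigl((2/\delta)(8/\varepsilon)^d\bigr)$ is precisely what pushes this expectation above $(1/\varepsilon)$ by a factor large enough to absorb $\log(2|\CC|/\delta)$, and a Chernoff bound then yields $\Pr(|P_i|<1/(2\varepsilon))\le\delta/(2|\CC|)$. Union bounding over $\CC$ completes the argument. The most delicate bookkeeping will be tracking how the packing constant $C_d$ and the logarithmic union-bound factor both fit inside the prescribed lower bound on $M$.
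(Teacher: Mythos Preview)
Your plan follows the paper's proof almost step for step: the same good-event decomposition (net correct plus all $\proc{similar}$ calls correct), the same triangle-inequality derivation of (i), the same observation that (iii) is enforced by the algorithm and (iv) follows from (ii), and the same Chernoff-plus-union-bound attack on (ii). There is one substantive difference and one minor one.

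\textbf{The lower bound on $\Pr(j\mapsto P_i)$.} You propose to bound $|S_j|\le C_d$ by a packing argument and then separately invoke $\mu(\BB(i,0.48\varepsilon))\ge(0.48\varepsilon)^d$, obtaining $\Pr(j\mapsto P_i)\ge(0.48\varepsilon)^d/C_d$. Be careful here: a naive packing (disjoint $(\varepsilon/8)$-balls, each of mass $\ge(\varepsilon/8)^d$, inside a set of mass $\le 1$) gives only $|S_j|\le(8/\varepsilon)^d$, not a constant. A genuine constant $C_d$ can be extracted by comparing ball masses through the doubling property, but then $C_d$ is on the order of $10^d$ or more, and the resulting bound $(0.48\varepsilon)^d/C_d\approx(\varepsilon/20)^d$ does not fit inside the stated hypothesis $M\ge 12(12/\varepsilon)^{d+1}\ln(\cdot)$ for large $d$. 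The paper avoids this loss by a cancellation: via \cref{clm:DoublingDimensionClaims} (with net-scale $\varepsilon/2$ and $r=0.6\varepsilon$) it gets $|S_j|\le(12/\varepsilon)^d\,\mu(\BB(c,\varepsilon/2))$, i.e.\ the packing bound carries the ball mass as a factor. Since $\Pr(\gamma_{c,j}\le\varepsilon/2)=\mu(\BB(c,\varepsilon/2))$, the mass cancels and one obtains $\Pr(j\mapsto P_c)\ge(\varepsilon/12)^d$ on the nose, which is exactly the constant baked into the hypothesis on $M$. So your decoupled route is structurally right but loses the constant; the fix is to keep the $\mu$-factor in the bound on $|S_j|$ rather than discarding it.

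\textbf{Error budget.} As written you spend $\delta/2$ on the net, $\delta/2$ on the $\proc{similar}$ union bound, and then another $\delta/2$ on the Chernoff union bound over $\CC$, for $3\delta/2$ total. The $\proc{similar}$ union bound is actually $M|\CC|\cdot\delta/(4M|\CC|)=\delta/4$, so tightening that one line already brings you to $5\delta/4$; reallocating slightly (as the paper does, splitting into $\delta/2+\delta/4+\delta/4$ across the three sources) closes the gap.
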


\begin{proof}
	We will show that properties $(i)$ and $(ii)$ hold with probability at least $1-\delta$, and note that $(iii)$ follows directly from the algorithm and that $(iv)$ follows from $(ii)$. 
	
	Let $C$ be the event that the set $\CC$ returned by $\proc {get-net}$ is not an $\frac{\varepsilon}{2}$-net for $\mu$, and let $\MM$ be the set of $M$ items sampled. Similarly to in the proof of \cref{lmm:ClusteringWorksNew}, let $E_{i,j}$ be the event $\{S_{i,j}, \gamma_{ij}>0.6 \varepsilon \} \cup \{ S_{i,j}^c,\gamma_{ij} < 0.5 \varepsilon \}$, where $S_{ij}$ is the event that routine $\proc{similar}(i,j,0.6\varepsilon,\delta/(4M|\CC|))$ returns \const{similar}. Intuitively, the event $E_{i,j}$ happens when $\proc {similar}$ returns what it shouldn't have. Furthermore, let $E = \bigcup_{c \in \CC} \bigcup_{j \in \MM} E_{c,j}$.
	
	Let $F$ be the event that for some block $P_k$ there exists $i,j \in P_k$ such that $\gamma_{ij} > 1.2 \varepsilon$, and let 
	$B$ be the event that all blocks in the partition have size at least $\frac{1}{2\varepsilon}$. Therefore, $B^c = \bigcup_{k} B_k^c$, where $B_k^c$ is the even that $P_k$ has size less than $\frac{1}{2\varepsilon}$.
	
	Since event $F$ guarantees condition $(i)$ and event $B$ guarantees condition $(ii)$ it suffices to show that
	$$
	\Prob \left(F^c \cup B^c \right) \leq \delta.
	$$
	
	We will do so by conditioning on $C^c$ and $E^c$ where after a couple of union bounds we arrive at
	$$
	\Prob \left(F^c \cup B^c \right)
	\leq
	\Prob \left(F^c  \mid C^c , E^c \right)
	+
	\Prob \left( B^c \mid C^c , E^c \right)
	+
	\Prob \left( C \right)
	+
	\Prob \left(E \right),
	$$
	and showing
	\begin{itemize}
		\item[$(A)$] $\Prob \left(B^c  \mid C^c , E^c \right) \leq \delta/2$,
		\item[$(B)$] $\Prob \left(F^c \mid C^c , E^c \right) = 0$,
		\item[$(C)$] $\Prob \left(E \right) \leq \delta/4$, and
		\item[$(D)$] $\Prob \left(C \right) \leq \delta/4$
	\end{itemize}
	completes the proof.
	
	\emph{Proof of $(A)$}: Note that $B^c = \bigcup_{c \in \CC} B_c^c$, where $B_c$ is the event that the block $P_{k_c}$ 
	constructed with $c \in \CC$ as reference has size at least $\frac{1}{2 \varepsilon}$. 
	
	Note that the event $B_c$ happens whenever at least $\frac{1}{2} M  \left( \frac{\epsilon}{12} \right)^d$ 
	(which is at least $1/\varepsilon$) items are added to $P_{k_c}$. This is because in this event, $P_{k_c}$ will
	receive more than $1/\varepsilon$ items assigned to it and hence the algorithm will break it up in to smaller block but 
	each of them will be of size at least $\frac{1}{2 \varepsilon}$.
	Hence
	$$
	\Big\{	\sum_{n=1}^M \mathbf{1}_{X_{c,n}}	>\frac{1}{2} M  \left( \frac{\epsilon}{12} \right)^d \Big\}	\subset B_c,
	$$
	where $X_{c,n}$ is the event that the $n^{th}$ item sampled ends up in $P_{k_c}$. We will show how
	\begin{equation}
	\label{eq:MinBlockYieldingProbability}
	\Prob\left(X_{c,n} \mid C^c, E^c \right) \geq (\varepsilon/12) ^d
	\end{equation}
	allows us to prove $(A)$, and we then prove \cref{eq:MinBlockYieldingProbability}.
	Note that the $\{X_{c,n} \}_n$ are independent, and hence the sum $\sum_{n=1}^M \mathbf{1}_{X_{c,n}}$ stochastically dominates the sum $\sum_{n=1}^M Y_{c,n},$  where each $Y_{c,n}$ is an independent Bernoulli random variable of parameter $(\varepsilon/12)^d$. By the Chernoff bound we then get
	$$
	\Prob
	\left(
	B^c_c \mid C^c,E^c
	\right)
	\leq
	\Prob
	\left(
	\sum_{n=1}^M Y_{c,n}
	\leq 
	\frac{1}{2} M  \left( \frac{\epsilon}{12} \right)^d
	\right)
	\leq
	\exp
	\left(
	-\frac{1}{12} M \left(\frac{\varepsilon}{12}\right)^d
	\right)
	\leq
	\frac{\delta}{2} \left( \frac{\varepsilon}{8} \right)^d,
	$$
	where the last inequality is due to $M \geq 12 \left(\frac{12}{\varepsilon}\right)^d \ln \left(\frac{2}{\delta} \left( \frac{8}{\varepsilon} \right)^d \right)$. Hence we arrive at
	$$
	\Prob
	\left(
	B^c \mid C^c, E^c
	\right)
	\leq
	\sum_{c \in \CC} 
	\Prob
	\left(
	B^c_c \mid C^c, E^c
	\right)
	\leq
	\left(\frac{8}{\varepsilon} \right)^d
	\frac{\delta}{2}
	\left( \frac{\varepsilon}{8} \right)^d
	=
	\delta/2,
	$$
	as we wished.
	
	\emph{Proof of \cref{eq:MinBlockYieldingProbability}}: We can lower bound $\Prob\left(X_{c,n} \mid C^c, E^c \right)$ as
	\begin{equation*}
	\label{eqGuaranteeBlockSize}
	\tag{$\star$}
	\Prob\left(X_{c,n} \mid C^c, E^c \right)
	\geq
	\Prob\left(X_{c,n} \mid C^c, E^c , \gamma_{c,j_n} \leq \varepsilon/2 \right)
	\Prob_{j_n} \left( \gamma_{c,j_n} \leq \varepsilon/2 \right),
	\end{equation*}
	where $j_n \in \MM$ is the $n^{th}$ item drawn during $\proc {make-partition}$.
	
	Now note that the event $X_{c,n}$ occurs when 
	(a) $\proc {similar}(c,j_n, 0.6\varepsilon,\delta/(4 M |\CC|))$ returns $\const{true}$, 
	(b) $c$ is chosen uniformly at random among the other items in $\CC$ that are also similar $j_n$. 
	
	Conditioning on $E^c, C^c,$ and $\gamma_{c,j_n} \leq \varepsilon/2$ guarantees that 
	$\proc {similar}(c,j_n,0.6\varepsilon,\delta/(4 M |\CC|))$ returns $\const{true}$, and hence $\Prob\left(X_{c,n} \mid C^c, E^c , \gamma_{c,j_n} \leq \varepsilon/2 \right) = 1/K,$ where $K$ is the number of items in $\CC$ for which $\proc {similar}$ also returned $\const{true}$. By \cref{clm:DoublingDimensionClaims} (with $r \leftarrow 0.6 \varepsilon$ and $\varepsilon \leftarrow \varepsilon/2$ in the Proposition), we get that $K \leq (\frac{8}{\varepsilon})^d \mu \left(B \left(c, \varepsilon/2 \right) \right) \left(\frac{5}{4}+\frac{6}{5} \right)^d  \leq (\frac{12}{\varepsilon})^d \mu \left(B \left(c, \varepsilon/2 \right) \right) $. Now noting that in \cref{eqGuaranteeBlockSize} $\Prob_{j_n} \left( \gamma_{c,j_n} \leq \varepsilon/2 \right) = \mu \left(B \left(c, \varepsilon/2 \right) \right)$, we arrive at
	
\begin{align}
	\Prob\left(X_{c,n} \mid C^c, E^c \right)
	& \geq
	\Prob\left(X_{c,n} \mid C^c, E^c , \gamma_{c,j_n} \leq \varepsilon/2 \right)
	\Prob_{j_n} \left( \gamma_{c,j_n} \leq \varepsilon/2 \right) \nonumber \\
	& \geq
	\frac{1}{
		(\frac{12}{\varepsilon})^d \mu \left(B \left(c, \varepsilon/2 \right) \right)}
	\mu \left(B \left(c, \varepsilon/2 \right) \right)
	~=
	\left(\frac{\varepsilon}{12} \right)^d,
\end{align}
	which proves \cref{eq:MinBlockYieldingProbability} and hence $(A)$.

	\emph{Proof of $(B)$}: The event $B^c$ happens when for some block $P_k$ there are items $i,j \in P_k$ such that $\gamma_{ij} > 1.2 \varepsilon$. Conditioning on $C^c$, however, this can only happen if $\gamma_{c_k,j} > 0.6 \cdot \varepsilon$ (or $\gamma_{c_k,i} > 0.6 \cdot \varepsilon$) but $\proc {similar}(c_k,j,0.5 \varepsilon,\delta')$ returned \const{similar} nevertheless. Conditioning on $E^c$, however, that cannot happen and we get $\Prob \left(B^c \mid C^c , E^c \right) = 0$.
	
	\emph{Proof of $(C)$}: By \cref{lmm:PreIsSimilarNew}, $\Prob(E_c,j) \leq \frac{\delta}{4} \frac{1}{(4/\varepsilon)^d}\frac{1}{M}$ and hence
	$$
	\Prob(E) 
	\leq
	\sum_{c \in \CC} \sum_{j \in \MM} 
	\Prob(E_{i,j})
	\leq
	\frac{\delta}{4} \frac{|\CC|}{(4/\varepsilon)^d}\frac{M}{M}
	\leq
	\delta/4, 	
	$$
	where the last inequality is due to $| \CC | \leq (4/\varepsilon)^d$, as guaranteed in \cref{lmm:ClusteringWorksNew}.
	
	\emph{Proof of $(D)$}: This follows directly from \cref{lmm:ClusteringWorksNew}.
\end{proof}

\subsection{Sufficient Exploration}
\label{sec:suffExploration}

During epoch $\tau$ the algorithm uses any given recommendation to be an explore recommendation with probability $\varepsilon_\tau$. In the Lemma below, we should that during each epoch there are enough explore recommendations for the procedure $\proc{make-partition}$ to terminate.

\begin{restatable}{lmm}{lmmEnoughTimeForListNew}
	\label{lmm:EnoughTimeForListNew}
	With probability at least $1-\varepsilon_{\tau+1}$, during the $\tau^{th}$ epoch the algorithm has enough \emph{explore} recommendations for $\proc{make-partition} \left(M_{\tau+1}, \varepsilon_{\tau+1}, \varepsilon_{\tau+1}\right)$ to terminate.
\end{restatable}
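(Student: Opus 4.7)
The plan is to reduce the claim to a Chernoff concentration bound on a binomial random variable, after first establishing a deterministic upper bound $Q_{\tau+1}$ on the number of user samples (equivalently, explore recommendations) required to complete $\proc{make-partition}(M_{\tau+1}, \varepsilon_{\tau+1}, \varepsilon_{\tau+1})$, and comparing $Q_{\tau+1}$ to the expected number of explores available during epoch $\tau$, namely $\varepsilon_\tau \cdot N \cdot D_\tau$.

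First I would deterministically count the worst-case number of user samples consumed by the routine. By \cref{lmm:ClusteringWorksNew}, the net $\CC$ returned by $\proc{get-net}(\varepsilon_{\tau+1}/2, \varepsilon_{\tau+1}/2)$ has size at most $(8/\varepsilon_{\tau+1})^d$; combined with the \const{max-size} and \const{max-wait} parameters of $\proc{get-net}$ and the sample complexity $q_{\varepsilon,\delta} = O((d/\varepsilon)\log(1/\delta))$ of each $\proc{similar}$ call, this yields a bound on the samples used inside $\proc{get-net}$. Adding the main loop of $\proc{make-partition}$, which performs at most $M_{\tau+1}|\CC|$ calls to $\proc{similar}(\cdot,\cdot,0.6\varepsilon_{\tau+1},\varepsilon_{\tau+1}/(4M_{\tau+1}|\CC|))$, produces a bound of the form
\begin{equation*}
Q_{\tau+1} \;\leq\; c_1 \cdot \frac{d}{\varepsilon_{\tau+1}}\left(\tfrac{1}{\varepsilon_{\tau+1}}\right)^{2d} \operatorname{polylog}(1/\varepsilon_{\tau+1}) \;+\; c_2 \cdot M_{\tau+1} \cdot \left(\tfrac{1}{\varepsilon_{\tau+1}}\right)^{d+1} \cdot d \cdot \log\!\tfrac{M_{\tau+1}}{\varepsilon_{\tau+1}}
\end{equation*}
for numerical constants $c_1,c_2$. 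Since $M_{\tau+1} = \Theta\bigl(\tfrac{1}{\nu}\varepsilon_{\tau+1}^{-(d+2)}\log(1/\varepsilon_{\tau+1})\bigr)$, the second term dominates and we obtain $Q_{\tau+1} = \widetilde{O}(d \cdot M_{\tau+1} \cdot \varepsilon_{\tau+1}^{-(d+1)}/\nu)$.

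Second I would compare this to $\tfrac{1}{2}\varepsilon_\tau \cdot N \cdot D_\tau = \tfrac{\nu}{4}\varepsilon_\tau N M_\tau$. Using the relations $\varepsilon_{\tau+1} \in \{\varepsilon_\tau/2,\,C\varepsilon_N\}$ and plugging in the algorithm's choice of $M_\tau$, a direct substitution shows that with the prescribed constants the inequality $\tfrac{1}{2}\varepsilon_\tau N D_\tau \geq Q_{\tau+1}$ holds for every $N \geq N_0(d)$ (and in fact this is exactly the reason the constants in $M_\tau$ and $D_\tau$ were chosen as stated). This is where most of the work lies: the inequality is straightforward in scaling but needs careful tracking of the $d$- and $\nu$-dependent factors, and the jump between $\varepsilon_\tau$ and $\varepsilon_{\tau+1}$ (which multiplies the RHS by a factor $2^{d+1}$) is absorbed by the slack already present in $M_\tau$.

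Finally, the number of explore recommendations occurring in epoch $\tau$ is the sum $E_\tau = \sum_{t=1}^{N D_\tau} B_t$ of independent Bernoulli$(\varepsilon_\tau)$ indicators, one per recommendation round, with mean $\mu_\tau := \varepsilon_\tau N D_\tau$. A standard multiplicative Chernoff bound gives
\begin{equation*}
\Prob\!\left(E_\tau < \tfrac{1}{2}\mu_\tau\right) \;\leq\; \exp(-\mu_\tau/8).
\end{equation*}
Combined with the previous step, the event $\{E_\tau \geq \tfrac{1}{2}\mu_\tau\} \subseteq \{E_\tau \geq Q_{\tau+1}\}$ guarantees that $\proc{make-partition}$ completes, so it suffices to check $\exp(-\mu_\tau/8) \leq \varepsilon_{\tau+1}$, i.e.\ $\mu_\tau \geq 8\log(1/\varepsilon_{\tau+1})$. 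Since $\mu_\tau = \tfrac{\nu}{2}\varepsilon_\tau N M_\tau$ and $M_\tau$ contains a $\log(2/\varepsilon_\tau)$ factor and a polynomial in $1/\varepsilon_\tau$, this inequality is comfortable for all $N\geq 1$. The main obstacle throughout is really just bookkeeping of the dimension-dependent constants, and in particular confirming that the $(1/\varepsilon_{\tau+1})^{d+1}$ price of the $M_{\tau+1}$ calls to $\proc{similar}$ is paid for by the matching $1/\varepsilon_\tau^{d+2}$ factor built into $M_\tau$ on the supply side.
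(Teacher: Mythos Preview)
Your proposal is correct and follows exactly the paper's two-step structure: (A) deterministically bound the number of user samples $Q_{\tau+1}$ consumed by $\proc{make-partition}(M_{\tau+1},\varepsilon_{\tau+1},\varepsilon_{\tau+1})$ and show $Q_{\tau+1}\le \tfrac12\varepsilon_\tau N D_\tau$; (B) apply a multiplicative Chernoff bound to the $\mathrm{Bin}(ND_\tau,\varepsilon_\tau)$ count of explore steps to get the $1-\varepsilon_{\tau+1}$ probability guarantee. The sample-count bookkeeping and the Chernoff step are essentially identical to the paper's.

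One point is worth sharpening. In step (A) you write that $Q_{\tau+1}\le \tfrac12\varepsilon_\tau N D_\tau$ holds ``for every $N\ge N_0(d)$'' and attribute the slack to the constants in $M_\tau,D_\tau$. In the paper the inequality holds for \emph{all} $N\ge 1$: after the substitutions you describe, the comparison reduces (absorbing the log factors into three extra powers of $1/\varepsilon_\tau$) to an inequality of the form $C(d,\nu)\,\varepsilon_\tau^{-(d+5)}\le N$, and this is guaranteed not by taking $N$ large but by the algorithmic floor $\varepsilon_\tau\ge \varepsilon_N$, where $\varepsilon_N$ is defined precisely so that $C(d,\nu)\,\varepsilon_N^{-(d+5)}=N$. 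So the mechanism that closes the argument is the $N$-dependent lower bound on $\varepsilon_\tau$, not a largeness assumption on $N$; without invoking $\varepsilon_\tau\ge\varepsilon_N$ your step (A) would not go through for late epochs.
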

\begin{proof}
	It suffices to prove the following two facts:
	\begin{itemize}
		\item[$(A)$] the number of times \emph{explore} is required for $\proc{make-partition} \left(M_{\tau+1}, \varepsilon_{\tau+1}, \varepsilon_{\tau+1}\right)$ to terminate is at most $\frac{1}{2} \varepsilon_{\tau} D_{\tau} N$ for each $\tau$, and
		\item[$(B)$] with probability at least $1-\varepsilon_{\tau+1}$ we have that \emph{explore} will be called at least $\frac{1}{2} \varepsilon_{\tau} D_{\tau}$ times.
	\end{itemize}
	\emph{Proof of $(A)$:} Let us denote by $MP(\tau+1)$ the number of explore calls required for the routine $\proc{make-partition} \left(M_{\tau+1}, \varepsilon_{\tau+1}, \varepsilon_{\tau+1}\right)$ to terminate. Then we want to show that $MP(\tau+1) \leq \frac{1}{2} \varepsilon_\tau D_\tau N$, or equivalently that
	\begin{equation*}
	\label{eqMPSufficientCondition}
	\tag{$\bigstar$}
	\frac{2}{\varepsilon} MP(\tau+1) / D_\tau
	\leq
	N
	.
	\end{equation*}
	
	Note that (which we soon check) $\proc {make-partition}(M_\tau,\varepsilon_\tau,\varepsilon_\tau)$ makes at most
	\begin{equation}
	MP(\tau+1)
	\leq
	\left(\frac{8}{\varepsilon_{\tau+1}}\right)^{d+1}
	4 \cdot 630 \left(d+1\right)^3 M_{\tau+1}
	\ln^2 \left(\frac{8}{\varepsilon_{\tau+1}}\right)
	\ln \left(M_{\tau+1}\right)
	\end{equation}
	recommendations, and since $M_\tau \triangleq \frac{2^{\max(8,3.5d)}}{\nu} \frac{(3d+1)}{\varepsilon_\tau^{d+2}} \ln (\frac{2}{\varepsilon_\tau})$ and $M_{\tau+1} \leq M_\tau \cdot 2^{d+2}$ we get
	\begin{equation}
	\frac{4}{\varepsilon_\tau} \frac{1}{M_\tau} MP(\tau+1)
	\leq
	\left(\frac{16}{\varepsilon_\tau}\right)^{d+1}
	4 \cdot 630 \left(d+1\right)^3 2^{d+2}
	\ln^2 \left(\frac{16}{\varepsilon_{\tau}}\right)
	\ln \left(\frac{2^{3d+13}}{\nu \varepsilon_\tau^{d+1}} \ln(1/\varepsilon_\tau)\right)
	\end{equation}
	\begin{equation}
	\leq
	2^{5d+12} \cdot 630 (d+1)^3 \left(\frac{1}{\varepsilon_\tau}\right)^{d+2}
	\ln^2 \left(\frac{16}{\varepsilon_{\tau}}\right)
	\ln \left(\frac{2^{3d+13}}{\nu \varepsilon_\tau^{d+2}}\right).
	\end{equation}
	It is simple to show that $\ln^2 \left(\frac{16}{\varepsilon_{\tau}}\right)
	\ln \left(\frac{2^{3d+13}}{\nu \varepsilon_\tau^{d+2}} \right)  \leq \frac{2^6}{\nu} (2d+11)(d+2)\frac{1}{\varepsilon_\tau^3}$, which we can use to further bound $\frac{4}{\varepsilon_\tau} \frac{1}{M_\tau} MP(\tau+1)$ as
	$$
	\frac{4}{\varepsilon_\tau} \frac{1}{M_\tau} MP(\tau+1)
	\leq
	\left(2^{5d+12} \cdot 630 (d+1)^3 \left(\frac{1}{\varepsilon_\tau}\right)^{d+2} \right)
	\left(\frac{2^6}{\nu} (2d+11)(d+2)\frac{1}{\varepsilon_\tau^3}\right)
	$$
	$$
	=
	\frac{2^{5d+18}}{\nu} \cdot 630 (2d+11)(d+2)^4 \frac{1}{\varepsilon_\tau^{d+5}},
	$$
	and since $\varepsilon_\tau \geq \varepsilon_N = \left( \frac{2^{5d+18}}{\nu} \cdot 630 (2d+11)(d+2)^4 \frac{1}{N} \right)^{\frac{1}{d+5}}$,  we get that \cref{eqMPSufficientCondition} is satisfied and we are done.
	
	\emph{Proof of $(B)$:} \emph{explore} is called with probability $\varepsilon_{\tau}$ at each of the $D_{\tau} N$ recommendations of epoch $\tau$. Hence, all we need to show is that $\mathbb{P}\left( \mathrm{Bin} \left(D_{\tau} N ,\varepsilon_{\tau} \right) \leq \frac{1}{2} \varepsilon_{\tau} D_{\tau} N \right) $ is at most $\varepsilon_{\tau+1}$. This follows from the Chernoff bound:
	$$
	\mathbb{P}
	\Big(
	\mathrm{Bin} \left(D_{\tau} N,\varepsilon_{\tau} \right) < \frac{1}{2} \varepsilon_{\tau} D_{\tau} N
	\Big)
	=
	\mathbb{P}
	\Big(
	\mathrm{Bin} \left( D_{\tau} N, \varepsilon_{\tau}\right) < 
	\left(1-0.5 \right) 
	\EE \left[ \mathrm{Bin} \left(D_{\tau} N, \varepsilon_{\tau} \right) \right]
	\Big)
	$$
	$$
	\leq
	\exp \left( - \frac{0.5^2}{2+0.5^2}  \EE \left[ \mathrm{Bin} \left(D_{\tau} N,\varepsilon_{\tau} \right) \right] \right)
	=
	\exp \left(- \frac{1}{9}  \varepsilon_{\tau} D_{\tau} N \right)
	\leq \varepsilon_{\tau+1},
	$$
	where the second to last inequality follows from $D_{\tau} \geq \frac{9}{\varepsilon_{\tau}} \ln( \frac{1}{\varepsilon_{\tau+1} })$.
\end{proof}

\section{Quick Recommendations Lemma}

In \cref{sec:algorithm} we described that the algorithm, which starts recommending to a user as soon as it knows of one item that the user likes. Below we show that indeed shortly after the beginning of the epoch the slope of the regret is small. 

\begin{restatable}[Quick Recommendations Lemma]{lmm}{lmmColdStart}
	\label{lmm:ColdStart}
	For $\tau \geq 1$ let $\RR^{(\tau)}(T) = \frac{1}{N} \sum_{t=T_\tau}^{T_\tau +TN}\frac{1}{2} (1-L_{U_t,I_t}) $ denote the number of bad recommendations made to users during the first $TN$ recommendations of epoch $\tau$. Then we have
	\begin{align}\label{eq:regret}
	\EE \left[ \mathcal{R}^{(\tau)}(T) \right]
	\leq
	\frac{148 \varepsilon_\tau}{\nu}  T
	\end{align}
	whenever $T \in [T_{\mathrm{min,\tau}} ,D_\tau]$ and where $T_{\mathrm{min,\tau}} \triangleq \frac{12}{\varepsilon_\tau} \ln(\frac{1}{\varepsilon_\tau})$. For $T<T_{\mathrm{min,\tau}} $, we trivially have $\EE \left[ \mathcal{R}^{(\tau)}(T) \right] \leq T$.
\end{restatable}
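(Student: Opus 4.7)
The plan is to condition on the partition $\{P_k^{(\tau)}\}$ constructed in the previous epoch satisfying the conclusions of \cref{lmm:partitionConditions}---items in a common block are within distance $1.2\varepsilon_\tau$, blocks have sizes between $\tfrac{1}{2\varepsilon_\tau}$ and $\tfrac{1}{\varepsilon_\tau}$, and there are at most $2M_\tau\varepsilon_\tau$ blocks---which holds with probability at least $1-\varepsilon_\tau$. I would then partition each of the $TN$ recommendations into one of three types: \emph{explore} (happens with probability $\varepsilon_\tau$), contributing at most $\varepsilon_\tau T$ bad recommendations per user in expectation; \emph{exploit-search}, in which user $u$ tries random items from random blocks until liking one; and \emph{exploit-block}, in which the rest of a block containing a liked representative is consumed.

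For the exploit-block part, I would apply Bayes' rule: conditional on $u$ liking a representative $i \in P_k$, for any other $j \in P_k$ we have $\Prob(u \text{ dislikes } j \mid u \text{ likes } i) \leq \gamma_{ij}/\Prob(u \text{ likes } i) \leq 1.2\varepsilon_\tau/\nu$ by assumption $\mathbf{A2}$. Summing over the at most $1/\varepsilon_\tau$ items in a block bounds the expected bad recommendations per block by $1.2/\nu$, while delivering at least $\tfrac{1}{2\varepsilon_\tau} - \tfrac{1.2}{\nu}$ good ones. For the exploit-search part, I would lower bound the probability that a uniformly random item from a uniformly random block is liked by the random user by a constant multiple of $\nu$: this follows from $\mathbf{A2}$ combined with the fact that block sizes differ by only a factor of two from uniform. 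Each search phase thus has expected length $O(1/\nu)$ and therefore also contributes $O(1/\nu)$ bad recommendations before the next liked item is found.

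Combining the two, each full cycle (one search followed by one block consumption) yields $O(1/\nu)$ bad recommendations and $\Omega(1/\varepsilon_\tau)$ good ones, so the exploit bad-rate is $O(\varepsilon_\tau/\nu)$; adding the $\varepsilon_\tau$ contribution from explore and carrying through the explicit constants yields the stated $148\varepsilon_\tau T/\nu$ bound. The threshold $T_{\mathrm{min,\tau}} = \tfrac{12}{\varepsilon_\tau}\ln(1/\varepsilon_\tau)$ is needed precisely so that the $O(1/\nu)$ overhead from the (at most one) incomplete cycle per user at the start is dominated by the linear term $\tfrac{\varepsilon_\tau T}{\nu}$; below this threshold one simply uses the trivial bound $\EE[\mathcal{R}^{(\tau)}(T)] \leq T$. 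The main technical hurdle will be the renewal-style bookkeeping across users: users are selected uniformly at random, so the number of recommendations each one receives inside the epoch fluctuates, and one must verify that the like/dislike event on a block representative is independent (in the right sense) from the subsequent in-block recommendations so that the Bayes calculation applies cycle by cycle.
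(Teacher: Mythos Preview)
Your decomposition into explore, exploit-search, and exploit-block is natural, and conditioning on the partition being ``good'' (\cref{lmm:partitionConditions}) matches what the paper does. The gap is in the Bayes step and the renewal logic built on it.

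The inequality $\Prob(u\text{ dislikes }j\mid u\text{ likes }i)\le \gamma_{ij}/\Prob(u\text{ likes }i)\le 1.2\varepsilon_\tau/\nu$ only makes sense when $u$ is a \emph{uniformly random} user: by $\mathbf{A2}$, $\Prob_u(L_{u,i}=+1)\ge\nu$ and $\Prob_u(L_{u,i}\neq L_{u,j})=\gamma_{ij}$. But your renewal argument is posed per user. For a fixed user $u$ who happens to like only one item in $P_k$, entering that block yields $|P_k|-1\approx 1/\varepsilon_\tau$ bad recommendations, not $O(1/\nu)$; moreover that same user has short cycles and therefore does \emph{many} cycles in $T$ steps. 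So an aggregate bound on ``expected bad per block entry'' does not translate into ``ratio of bad to total over $T$ steps is $O(\varepsilon_\tau/\nu)$'' once you sum over cycles: bad users are over-represented. The analogous problem afflicts your search-phase bound ``probability a random user likes a random item in a random block is $\Omega(\nu)$'': this too is averaged over $u$, and does not control the number of blocks a particular user burns through.

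The paper avoids per-user renewal entirely. It (i) introduces the event $g_{u,T}$ that user $u$ has sampled at most $16\tfrac{T}{D_\tau}\,|\{P_k^{(\tau)}\}|$ blocks by time $T$, and proves by a Chernoff argument (\cref{lmm:blockExplorationBound}) that $\tfrac1N\sum_u\Prob(g_{u,T}^c)\le\tfrac{42}{\nu}\varepsilon_\tau$; and (ii) proves a Partition Lemma (\cref{lmm:partitionProperty}) stating $\sum_u\EE[W_{u,k}+A_{u,k}\mid s_{u,k}]\le(1+\varepsilon|P_k|)N$. The crucial identity behind (ii) is the exchange of summation
\[
\sum_u \ell_{u,k}\big(|P_k|-\ell_{u,k}\big)=\sum_{i,j\in P_k}\sum_u \mathbf{1}_{L_{u,i}\ne L_{u,j}}=N\sum_{i,j\in P_k}\gamma_{ij}\le N\binom{|P_k|}{2}\,1.2\varepsilon_\tau,
\]
which only works after summing over \emph{all} users; it is exactly the rigorous substitute for your Bayes calculation. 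Combining (i) and (ii) with $\Prob(s_{u,K,T}\mid g_{u,T})\le 16T/D_\tau$ for a uniformly random block index $K$ gives the $O(\varepsilon_\tau T/\nu)$ bound on the exploit regret. The threshold $T_{\min,\tau}=\tfrac{12}{\varepsilon_\tau}\ln(1/\varepsilon_\tau)$ enters as the Chernoff requirement in the proof of $g_{u,T}$, not as an ``incomplete last cycle'' correction.
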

\begin{proof}
	Let $\RR'^{(\tau)}(T)$ denote the number of bad recommendations made to users during the first $TN$ \emph{exploit} recommendations of epoch $\tau$. Then, since the expected number of explore recommendations by time $TN$ of epoch $\tau$ is $\varepsilon_\tau TN$, we get that
	\begin{equation}
	\EE \left[\RR^{(\tau)}(T)\right]
	\leq
	\varepsilon_\tau T + 
	\EE \left[\RR'^{(\tau)}(T)\right].
	\end{equation}
	
	Furthermore, as described in the algorithm, during the epoch $\tau-1$ the algorithm spends a small fraction of the recommendations, in the explore part, to create a partition $\{P_k\}$ (which we call $\{P_k^{(\tau)}\}$ in the pseudocode) of $M_\tau$ random items to be exploited during epoch $\tau$. Let $\mathcal{E}_\tau$ be the event that the partition $\{P_k^{(\tau)}\}$ to be used during epoch $\tau$ satisfies the conditions specified in \cref{lmm:partitionConditions} (with $M=M_\tau$ $\varepsilon=\varepsilon_\tau$, $\delta=\varepsilon_\tau$). Then we get
	\begin{equation}
	\EE \left[ \RR^{(\tau)}(T) \right] 
	\leq
	\varepsilon_\tau T + 
	\EE \left[\RR'^{(\tau)}(T)\right]
	\leq
	2\varepsilon_\tau T + 
	\EE \left[\RR'^{(\tau)}(T) \mid \mathcal{E}_\tau \right],
	\end{equation}
	where the last inequality is due to \cref{lmm:partitionConditions}, which guarantees that $\Prob(\mathcal{E}_\tau) \leq \varepsilon_\tau$.
	
	%
	For the remaining of the proof, we will show that $\EE \left[\RR'(T)^{(\tau)}(T)\right] \leq \frac{45}{\nu}\varepsilon_\tau T$. We will do so by first rewriting in terms of the number of bad exploit recommendations to each user
	$$
	\EE \left[ \RR'^{(\tau)} \mid \mathcal{E}_\tau \right] 
	\leq 
	\frac{1}{N}
	\EE \left[ \sum_{u} \RR'^{(\tau)}(T) \mid \mathcal{E}_\tau \right],
	$$
	where $\RR_u'^{(\tau)}(T)$ is the number of bad recommendations made to user $u$ during the first $TN$ exploit recommendations of epoch $\tau$. We will now bound the latter term by conditioning on a nice property of users (which we will characterize by the event $g_{u,T}$), and showing that this property holds for most users. Let $g_{u,T}$ be the event that user $u$ has tried at most $16 \frac{T}{D_\tau} P^{(\tau)}$ blocks during the first $TN$ recommendations of epoch $\tau$ (we omit $\tau$ in the notation of $g_{u,T}$ since it is clear from the context here). Here we use notation $P^{(\tau)} =  |\{P_k^{(\tau)}\}|$ to denote the total number of blocks in the partition for epoch $\tau$. Then we get
	$$
	\frac{1}{N}
	\EE \left[ \sum_{u} \RR_u'^{(\tau)}(T) \mid \mathcal{E}_\tau \right]
	\leq
	\frac{1}{N}
	\sum_{u} \EE \left[ \RR_u'^{(\tau)}(T) \mid \mathcal{E}_\tau , g_{u,T} \right]
	+
	T
	\frac{1}{N}
	\sum_{u} \Prob \left(g_{u,T}^c \mid \mathcal{E}_\tau \right).
	$$
	
	We dedicate \cref{lmm:blockExplorationBound} to showing that $  \frac{1}{N}
	\sum_{u} \Prob \left(g_{u,T}^c \mid \mathcal{E}_\tau \right) \leq \frac{42}{\nu} \varepsilon_\tau$. Hence, it suffices to show that for each $ T> T_{\mathrm{min},\tau}$ we have
	\begin{equation*}
	\label{eqColdStartCondition}
	\tag{$\bigstar$}
	\frac{1}{N}
	\sum_{u} \EE \left[ \RR_u'^{(\tau)}(T) \mid \mathcal{E}_\tau , g_{u,T} \right] \leq \frac{104}{\nu} \varepsilon_\tau T,
	\end{equation*}
	which we prove now. We will first rewrite the regret by summing over the number of bad recommendations due to each of the blocks as
	$$
	\sum_{u} \EE \left[ \RR_u'^{(\tau)}(T) \mid \mathcal{E}_\tau , g_{u,T} \right]
	=
	\sum_{u} \EE \left[ \sum_k W_{u,k,T} \mid \mathcal{E}_\tau , g_{u,T} \right],
	$$
	where $W_{u,k,T}$ is the random variable denoting the number of bad exploit recommendations to user $u$ from block $P_k$ among the first $TN$ exploit recommendations of epoch $\tau$. We can further rewrite this as
\begin{align}\label{eq:zz1}
	\frac{1}{N} \sum_{u} \EE \left[ \sum_k W_{u,k,T} \mid \mathcal{E}_\tau , g_{u,T} \right]
	\leq
	\frac{1}{N} \sum_{u} \sum_k \EE \left[ W_{u,k,T} \mid \mathcal{E}_\tau , g_{u,T}, s_{u,k,T} \right] \Prob \left(s_{u,k,T} \mid  \mathcal{E}_\tau, g_{u,T}\right),
\end{align}
	where $s_{u,k,T}$ denotes the event that by time $T$ user $u$ has sampled an item from block $P_k$. Note that the reason why the natural term $\EE \left[ W_{u,k,T} \mid \mathcal{E}_\tau , g_{u,T}, s^c_{u,k,T} \right] \Prob \left(s_{u,k,T}^c \mid  \mathcal{E}_\tau , g_{u,T}\right)$ is absent from the expression above is because $\EE \left[ W_{u,k,T} \mid \mathcal{E}_\tau , g_{u,T}, s^c_{u,k,T} \right] = 0$ since the user hasn't sampled an item from the block.
	
	Now note that by conditioning on $g_{u,T}$ , we know that user $u$ has sampled at most $16 \frac{T}{D_\tau}P^{(\tau)}$ blocks. 
	Now given $g_{u, T}$ as well as $\mathcal{E}_\tau$, the indices of the sampled blocks are not revealed. Let $K$ be a random variable that 
	selects one of the indices of the blocks uniformly at random. Then, it follows that with respect to randomness in $K$, 
	\begin{equation}
	\Prob \left(s_{u,K,T} \mid \mathcal{E}_\tau , g_{u,T}\right)
	\leq
	\frac{16 \frac{T}{D_\tau} P^{(\tau)}}{P^{(\tau)}}
	=
	16 \cdot \frac{T}{D_\tau}.
	\end{equation}
We can re-write \eqref{eq:zz1} in this notation and apply the above discussed bound to obtain
\begin{align}\label{eq:zz2}
	\frac{1}{N} \sum_{u} \EE \left[ \sum_k W_{u,k,T} \mid \mathcal{E}_\tau , g_{u,T} \right]
	& \leq
	\frac{P^{(\tau)}}{N} \sum_{u} \EE \left[ W_{u,K,T} \mid \mathcal{E}_\tau , g_{u,T}, s_{u,K,T} \right] \Prob \left(s_{u,K,T} \mid  \mathcal{E}_\tau, g_{u,T}\right) \nonumber \\
	& \leq
	\frac{P^{(\tau)}}{N} \sum_{u} \EE \left[ W_{u,K,T} \mid \mathcal{E}_\tau , g_{u,T}, s_{u,K,T} \right] \frac{16T}{D_{\tau}} \nonumber \\ 
	& = \frac{16T}{ND_\tau} \sum_{u}\sum_{k} \EE \left[ W_{u,k,T} \mid \mathcal{E}_\tau , g_{u,T}, s_{u,k,T} \right].
\end{align}
The right hand side above can be bounded as 
\begin{align}
	& \underbrace{\leq}_{\cref{lmm:partitionProperty}}
	\frac{16  T}{N D_\tau} \big( \sum_k (1+1.2 \varepsilon_\tau |P_k^{(\tau)}|)N \big) \nonumber \\
	& \underbrace{\leq}_{\cref{lmm:partitionConditions}}
	52\varepsilon_\tau T \frac{M_\tau}{D_\tau}
	\overbrace{=}^{D_\tau = \frac{\nu}{2} M_\tau}
	\frac{104}{\nu} \varepsilon_\tau T.
\end{align}
To see the above two inequalities, consider the following. The first inequality follows from \cref{lmm:partitionProperty}
by realizing that each block, $P_k^{(\tau)}$ corresponds to collection of items such that for any $i, j \in P_k^{(\tau)}$,
we have $\gamma_{ij} < 1.2\varepsilon_\tau$. The second inequality can be argued
	as: from \cref{lmm:partitionConditions}, $1/2\varepsilon_\tau \leq |P_k^{(\tau)}| \leq 1/\varepsilon_\tau$ and hence
	\begin{align*}
	\sum_k (1+1.2\varepsilon_\tau |P_k^{(\tau)}|) & \leq \sum_k (3.2 \varepsilon_\tau |P_k^{(\tau)}|) \nonumber \\
	& \leq 3.2 \varepsilon_\tau (\sum_k |P_k^{(\tau)}|) \nonumber \\
	& = 3.2 \varepsilon_\tau M_\tau.
	\end{align*}
This, along with simple calculation, completes the proof of \cref{eqColdStartCondition}.
\end{proof}

The lemma below was used in \cref{lmm:ColdStart}. Informally, it says that our recommendation policy, which recommends the whole block to a user after the user likes an item in the block, succeeds in finding most likable items to recommend and in not recommending many bad items.
\begin{restatable}[Partition Lemma]{lmm}{lmmPartitionProperty}
	\label{lmm:partitionProperty}
	Let $P_k$ be a set of items such that for each $i,j \in P_k$ we have $\gamma_{ij} < \varepsilon$, and consider the usual recommendation policy that \proc{item-item-cf} uses during its ``exploit" steps (where when user $u$ samples a random item $i \in_R P_k$, only if $u$ likes $i$ will $u$ be recommended the remaining items). Let $s_{u,k}$ be the event that user $u$ has sampled an item from $P_k$, let $W_{u,k}$ ($W$ for wrong) denote the number of wrong recommendations made to $u$ from $P_k$, and let $A_{u,k}$ ($A$ for absent) denote the number of items in $P_k$ that $u$ likes that are not recommended to $u$. Then we get
	$$
	\sum_u \EE \left[ A_{u,k} + W_{u,k} \mid s_{u,k} \right] \leq  \big(1+\varepsilon |P_k| \big) N.
	$$
\end{restatable}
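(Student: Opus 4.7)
The plan is a direct first-moment calculation, conditioning on $s_{u,k}$, then summing over users and using the pairwise closeness of items in $P_k$ to bound a cross term. For each user $u$ let $L_k(u)$ and $D_k(u)$ denote the number of items in $P_k$ that $u$ likes and dislikes respectively, so $L_k(u) + D_k(u) = |P_k|$. Given $s_{u,k}$, the sampled item $I$ is uniform on $P_k$, so I case-split on $L_{u,I}$. If $L_{u,I} = +1$ (probability $L_k(u)/|P_k|$) then the rest of $P_k$ is recommended to $u$, giving $W_{u,k} = D_k(u)$ and $A_{u,k} = 0$; if $L_{u,I} = -1$ (probability $D_k(u)/|P_k|$) then no further item of $P_k$ is recommended to $u$, giving $W_{u,k} = 1$ and $A_{u,k} = L_k(u)$. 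Combining,
$$ \EE[W_{u,k} + A_{u,k} \mid s_{u,k}] = \frac{D_k(u) + 2 D_k(u) L_k(u)}{|P_k|}. $$

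Next I would sum over $u$ and handle the two contributions separately. The linear term is immediate: swapping the order of summation, $\sum_u D_k(u) = \sum_{i \in P_k} \#\{u : L_{u,i} = -1\} \leq |P_k| \cdot N$, so after dividing by $|P_k|$ its contribution is at most $N$. For the cross term, expand
$$ \sum_u D_k(u) L_k(u) = \sum_{(i,j) \in P_k \times P_k} \#\{u : L_{u,i} = -1,\, L_{u,j} = +1\}. $$
The diagonal ($i = j$) terms vanish, and for each unordered pair $\{i,j\}$ with $i \neq j$ the two orderings combine into $\#\{u : L_{u,i} \neq L_{u,j}\} = \gamma_{ij} N < \varepsilon N$ by the hypothesis on $P_k$. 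Summing over the $\binom{|P_k|}{2}$ unordered pairs yields $\sum_u D_k(u) L_k(u) < \binom{|P_k|}{2} \varepsilon N$. Plugging back,
$$ \sum_u \EE[W_{u,k} + A_{u,k} \mid s_{u,k}] \;\leq\; N + (|P_k|-1)\varepsilon N \;\leq\; (1 + \varepsilon|P_k|) N, $$
as desired.

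The only subtle step is the bookkeeping in the cross-term bound: the factor $2$ in the conditional expectation (one unit from wrong recommendations when the user likes the sampled item, one from absent likes when the user dislikes it) cancels exactly against the two orderings of each unordered pair $\{i,j\}$, whose combined contribution is controlled by $\gamma_{ij} N$. Past that the argument is pure counting, and the uniform-sampling of $I$ from $P_k$ is inherited directly from the description of the exploit step.
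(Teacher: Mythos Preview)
Your proof is correct and follows essentially the same route as the paper's own argument: compute $\EE[A_{u,k}+W_{u,k}\mid s_{u,k}]$ by case-splitting on whether the sampled item is liked, obtain $\frac{D_k(u)+2D_k(u)L_k(u)}{|P_k|}$, sum over users, bound the linear part trivially by $N$, and rewrite the cross term as a sum over pairs $\{i,j\}$ controlled by $\gamma_{ij}N<\varepsilon N$. The only cosmetic difference is notation ($L_k(u),D_k(u)$ versus the paper's $\ell_{u,k},|P_k|-\ell_{u,k}$); the decomposition, the pair-counting trick, and the final $\binom{|P_k|}{2}$ bound are identical.
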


\begin{proof}
	For each block $P_k$ and user $u$, and let $\ell_{u,k} = | \{ i \in P_k \mid L_{u,i}=+1  \} |$ 
	denote the number of items in $P_k$ that $u$ likes. Note that 
	$\EE \left[ A_{u,k} \mid s_{u,k}  \right] = \ell_{u,k} \cdot \frac{ \left(|P_k| -\ell_{u,k} \right)}{|P_k|} $ and 
	$\EE  \left[ W_{u,k} \mid s_{u,k} \right] = \left(|P_k| -\ell_{u,k} \right) \cdot \frac{\ell_{u,k}}{|P_k|}+1 \cdot \frac{\left(|P_k| -\ell_{u,k} \right)}{|P_k|} $. This is because with probability $\ell_{u,k}/|P_k|$ user $u$ will sample an item from $P_k$ that $u$ likes and will then be recommended 
	$\left( |P_k| - \ell_{u,k} \right)$ bad items, and with probability $(|P_k|-\ell_{u,k})/|P_k|$ the first item recommended to $u$ is bad.
	Likewise, with probability $\left( |P_k| - \ell_{u,k} \right)/|P_k|$ 
	the user will sample an item that the user dislikes, and then fail to be recommended $\ell_{u,k}$ items that the user likes. Hence we have that
	\begin{equation}
	\EE \left[ \sum_u  A_{u,k}+W_{u,k} \mid s_{u,k} \right] 
	=
	\underbrace{
		\sum_u \frac{\left(|P_k| -\ell_{u,k} \right)}{|P_k|}
	}_{\leq N}
	+
	2 \sum_u \frac{\ell_{u,k} \left(|P_k| -\ell_{u,k} \right)}{|P_k|}.
	\end{equation}
Now, 
%
%
	\begin{align}
2 \sum_u \frac{\ell_{u,k} \left(|P_k| -\ell_{u,k} \right)}{|P_k|}
	& \overbrace{=}^{\text{by definition}}
	\frac{2}{|P_k|}
	\sum_u
	\sum_{i,j \in P_k}
	\mathbf{1}_{L_{u,i} \ne L_{u,j} } \nonumber \\
	& =
	\frac{2}{|P_k|}
	\sum_{i,j \in P_k}
	\underbrace{
		\sum_u 
		\mathbf{1}_{L_{u,i} \ne L_{u,j}}
	}_{=\gamma_{ij} \cdot N} \nonumber \\
	& \underbrace{\leq}_{\gamma_{ij}<\varepsilon}
	\frac{2}{|P_k|}
	\sum_{i,j \in P_k}
	\varepsilon N \nonumber \\
	& \leq
	\frac{2}{|P_k|}
	{ |P_k| \choose 2} \varepsilon N \leq \varepsilon |P_k| N.
	\end{align}
Putting it all together we get
	\begin{equation}
	\EE \left[ \sum_u  A_{u,k}+W_{u,k} \mid s_{u,k} \right] 
	=
	\underbrace{
		\sum_u \frac{\left(|P_k| -\ell_{u,k} \right)}{|P_k|}
	}_{\leq N}
	+
	\underbrace{
		2 \sum_u \frac{\ell_{u,k} \left(|P_k| -\ell_{u,k} \right)}{|P_k|}
	}_{\varepsilon |P_k| N}
	=
	\big(\varepsilon |P_k| +1 \big) N.
	\end{equation}
\end{proof}

The lemma below was also needed in the proof of \cref{lmm:ColdStart}.
\begin{restatable}[Auxiliary Claim]{lmm}{lmmBlockExplorationBound}
	\label{lmm:blockExplorationBound}
	Consider an arbitrary epoch $\tau$, and let $g_{u,T}$ be the event that by the $(T N)^{th}$ exploit recommendation of epoch $\tau$ user $u$ has tried at most $16 \frac{T}{D_\tau} |\{P_k^{(\tau)}\}|$ 
	blocks from the partition $\{P_k^{(\tau)}\}$ constructed during the $\proc {make-partition}(M_\tau,\varepsilon_\tau,\varepsilon_\tau)$ of the previous epoch, and let $\mathcal{E}_\tau$ be the event that $\{P_k^{(\tau)}\}$ satisfies the conditions specified in \cref{lmm:partitionConditions}. Then 
	$$
	\frac{1}{N} \sum_u \Prob \left( g_{u,T}^c \mid \mathcal{E}_\tau \right) \leq \frac{42}{\nu} \varepsilon_\tau
	$$ 
	holds for any $T \in \big( \frac{12}{\varepsilon_\tau} \ln (1/\varepsilon_\tau),D_\tau \big]$.
\end{restatable}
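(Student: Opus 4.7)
The plan is to bound $\Prob(g_{u,T}^c\mid\mathcal{E}_\tau)$ uniformly over $u$ by splitting into two tails: (i) user $u$ may receive an atypically large share $R_u$ of the $TN$ exploit recommendations, or (ii) conditional on a typical value of $R_u$, the block-trying process of $u$ may advance unusually fast. For (i), since each exploit is given to a uniformly random user, $R_u\sim\mathrm{Bin}(TN,1/N)$ with $\EE[R_u]=T$; multiplicative Chernoff gives $\Prob(R_u>2T)\leq e^{-T/8}$, which is much smaller than $\varepsilon_\tau/\nu$ once $T>T_{\min,\tau}=(12/\varepsilon_\tau)\ln(1/\varepsilon_\tau)$.

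For (ii), I would model the block-tries as a renewal-type process with costs $Y_1,Y_2,\dots$, each lying in $[1,1/\varepsilon_\tau]$: on the $i$-th try, user $u$ picks a uniform block $K_i$ among those not yet exhausted and samples a uniform unseen item inside it; if $u$ likes that item, the remaining $|P_{K_i}|-1$ items are recommended consecutively, giving $Y_i=|P_{K_i}|$, and otherwise $Y_i=1$. Using $|P_k|\geq 1/(2\varepsilon_\tau)$ from \cref{lmm:partitionConditions},
\[
\EE[Y_i\mid\text{history}]\;\geq\; 1+\frac{1-2\varepsilon_\tau}{P^{(\tau)}}\sum_k\ell_{u,k}\;\geq\;\frac{c\nu}{\varepsilon_\tau},
\]
where the last step uses $\mathbf{A2}$ together with a short concentration argument showing $\sum_k\ell_{u,k}\geq(\nu/2)n_\mathrm{in}$ (the items populating the partition are i.i.d.\ draws from $\mu$, each independently liked by $u$ with probability at least $\nu$) and the fact that the average block size is $n_\mathrm{in}/P^{(\tau)}\geq 1/(2\varepsilon_\tau)$. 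Since $\{B_u\geq b\}\subseteq\{\sum_{i=1}^{b}Y_i\leq R_u+1/\varepsilon_\tau\}$, a multiplicative Chernoff for sums of bounded random variables, applied at $b=16TP^{(\tau)}/D_\tau$, gives $\Prob(B_u\geq b\mid R_u\leq 2T,\mathcal{E}_\tau)\leq\exp(-c'T\varepsilon_\tau)\leq\varepsilon_\tau^{c''}$ for $T\geq T_{\min,\tau}$.

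Combining the two tails via a union bound yields $\Prob(g_{u,T}^c\mid\mathcal{E}_\tau)\leq 42\varepsilon_\tau/\nu$ uniformly in $u$, and averaging over users gives the claim. The main technical obstacle is establishing $\EE[Y_i]\geq c\nu/\varepsilon_\tau$: one has to transfer the global $\nu$-liking rate of $\mathbf{A2}$ into a lower bound on the \emph{block-averaged} liking rate over the partition, despite the fact that items within a block are clustered and a user could in principle systematically dislike items in some blocks. The right handle is that the items populating the partition were drawn i.i.d.\ from $\mu$, so $\sum_k\ell_{u,k}$ concentrates around $\nu n_\mathrm{in}$, and the size lower bound $|P_k|\geq 1/(2\varepsilon_\tau)$ then amortizes a single liked item over many exploit recommendations. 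A secondary subtlety is that the blocks sampled by $u$ are drawn without replacement from the not-yet-exhausted blocks, so the $Y_i$'s are not strictly i.i.d.; however, in the regime $T\leq D_\tau$ of interest one has $B_u\ll P^{(\tau)}$, so the available-blocks set stays close to the full partition throughout and this only changes constants.
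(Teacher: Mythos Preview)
Your overall shape---a Chernoff tail on the number $R_u$ of exploit recommendations given to $u$, followed by a renewal-type bound on the number of block tries---matches the paper's decomposition. The paper introduces $N_{u,T}$ (at most $1.1T$ recommendations to $u$ \emph{and} at least $0.9\nu M_\tau$ liked items present in the partition) and then, as you do, reduces $g_{u,T}^c$ to a lower-tail event for the number of successful block tries, handled by Chernoff. Your part~(i) and the concentration of $\sum_k\ell_{u,k}$ together reproduce the paper's bound $\Prob(N_{u,T}^c)\leq\varepsilon_\tau$.

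The gap is in what you call the ``secondary subtlety.'' Your inequality $\EE[Y_i\mid\text{history}]\geq c\nu/\varepsilon_\tau$ is correct for $i=1$, but for later $i$ the relevant quantity is the number of liked items remaining in \emph{not-yet-sampled} blocks, and this can be far below the initial stock $\sum_k\ell_{u,k}$: every time $u$ dislikes the first sample from a block, that block is abandoned and its $\ell_{u,k}$ liked items are lost. Your fix (``$B_u\ll P^{(\tau)}$, so the available-blocks set stays close to the full partition'') is circular---bounding $B_u$ is the goal---and in any case controls only the number of surviving \emph{blocks}, not the number of surviving \emph{liked items}. The paper deals with exactly this by introducing a third event $H_{u,T}=\{\text{at least }\tfrac{\nu}{5}M_\tau\text{ liked items remain in unsampled blocks}\}$; conditional on $H_{u,T}$ every try succeeds with probability at least $0.1\nu$ and the Chernoff step goes through. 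The complement $H_{u,T}^c$ is then controlled via the Partition Lemma (\cref{lmm:partitionProperty}), which bounds $\sum_u\EE[\sum_k A_{u,k}]$---the total wasted liked items \emph{summed over users}---and Markov's inequality. That averaging is not incidental: a single user $u$ can have $\EE[\sum_k A_{u,k}]$ of order $\nu M_\tau$ (take $\ell_{u,k}\approx\nu|P_k|$ in every block, consistent with the partition guarantee since it constrains only averages over users), so a per-user Markov bound is vacuous. This is why the lemma's conclusion is stated for $\frac{1}{N}\sum_u\Prob(g_{u,T}^c)$ rather than uniformly in $u$, and why your stronger target of a uniform bound cannot be reached along the line you sketch.
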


\begin{proof}
Let us consider few definitions:
\begin{itemize}
\item[1.] Let $N_{u,T}$ be the event that by the $TN^{th}$ exploit recommendation user $u$ has been recommended at most $1.1 T$ items, \emph{and} that $u$ likes at least $0.9 \nu M_\tau$ among the items in $\{P_k^{(\tau)}\}$.
\item[2.] Let $H_{u,T}$ be the event that by the $TN^{th}$ exploit recommendation there are still at least $\frac{\nu}{5} M_\tau $ items liked by $u$ in blocks that haven't been sampled by $u$. 
\end{itemize}
Then we get
	$$
	\Prob \left(  g_{u,T}^c \mid \mathcal{E}_\tau \right)
	\leq
	\Prob \left( g_{u,T}^c \mid H_{u,T}, N_{u,T}, \mathcal{E}_\tau  \right)
	+
	\Prob \left( H_{u,T}^c \mid N_{u,T}, \mathcal{E}_\tau \right)
	+
	\Prob \left( N_{u,T}^c \mid \mathcal{E}_\tau  \right).
	$$
	
	We will show that for $T \in \big( \frac{12}{\varepsilon_\tau} \ln (1/\varepsilon_\tau),D_\tau \big]$
	\begin{itemize}
		\item[$(A)$] $	\Prob \left( g_{u,T}^c \mid H_{u,T}, N_{u,T}, \mathcal{E}_\tau  \right) \leq \varepsilon_\tau$
		\item[$(B)$] $ \frac{1}{N} \sum_u \Prob \left( H_{u,T}^c \mid N_{u,T}, \mathcal{E}_\tau \right) \leq \frac{40}{\nu} \varepsilon_\tau$
		\item[$(C)$] $\Prob \left( N_{u,T}^c \mid \mathcal{E}_\tau  \right) \leq \varepsilon_\tau$,
	\end{itemize}
	from which the lemma follows.
	
	\emph{Proof of $(A )$}: Note that by conditioning on $N_{u,T}$ we know that there were at least $0.2 \nu M_\tau$ items likable to $u$ for each $t \leq T$. Now let $\ell_{u,k}$ denote the event that $u$ likes the first item sampled from block $P_k^{(\tau)}$. Then we have that
	\begin{equation}
	\Big\{ 
	g_{u,T}^c , \mathcal{E}_\tau, N_{u,T} , H_{u,T}
	\Big\}
	\subseteq
	\bigg\{
	\sum_{n=1}^{16 \frac{T}{D_\tau} |\{P_k^{(\tau)}\}|}
	|P_{k_n}| 
	\cdot
	\mathbf{1}_{\ell_{u,{k_n}} } \leq 1.1T
	,
	\mathcal{E}_\tau,
	N_{u,T}, H_{u,T}
	\bigg\},
	\end{equation}
	where $P_{k_n}$ is the $n^{th}$ block sampled by $u$. This in turn gives us
	\begin{equation}
	\Prob 
	\bigg( 
	g_{u,T}^c \mid \mathcal{E}_\tau, N_{u,T}, H_{u,T} 
	\bigg)
	\leq
	\Prob
	\Bigg(
	\sum_{n=1}^{16 \frac{T}{D_\tau} |\{P_k^{(\tau)}\}|}
	|P_{k_n}| 
	\cdot
	\mathbf{1}_{\ell_{u,{k_n}} }
	\leq 1.1 T
	\mid
	\mathcal{E}_\tau, N_{u,T}, H_{u,T}
	\Bigg),
	\end{equation}
	and we will now prove that the latter is at most $\varepsilon_\tau$.
	
	First, note that by conditioning on $\mathcal{E}_\tau$, we are guaranteed that 
	$|P_{k_n}|	\geq 1/2\varepsilon_\tau$ 
	by \cref{lmm:partitionConditions}. Hence 
	\begin{align*}
	& \Prob
	\Bigg(
	\sum_{n=1}^{16 \frac{T}{D_\tau} |\{P_k^{(\tau)}\}|}
	|P_{k_n}| 
	\cdot
	\mathbf{1}_{\ell_{u,{k_n}} }
	\leq 1.1T
	\mid
	\mathcal{E}_\tau, H_{u,T}
	N_{u,T}
	\Bigg) \nonumber \\
	& \qquad \leq
	\Prob
	\Bigg(\sum_{n=1}^{16 \frac{T}{D_\tau} |\{P_k^{(\tau)}\}|}
	\mathbf{1}_{\ell_{u,{k_n}} }
	\leq 2.2 T \varepsilon_\tau 	\mid	\mathcal{E}_\tau, H_{u,T}, N_{u,T}	\Bigg).
	\end{align*}
	We will now show following two claims, which in turn implies (A) in light of the above discussion. 
	\begin{itemize}
		\item[$(A_1)$] For each $n$, $\Prob(\ell_{u,k_n} \mid 	\mathcal{E}_\tau, H_{u,T}, N_{u,T})$ stochastically dominates a Bernoulli random variable with parameter $0.1 \nu$.
		\item[$(A_2)$] Let $\{X_n\}$ be a set of independent Bernoulli random variables with parameter $0.1 \nu$. Then
		\begin{equation}
		\Prob \left(
		\sum_{n=1}^{16 \frac{T}{D_\tau}  |\{P_k^{(\tau)}\}|}
		X_n
		<
		2.2 T \varepsilon_\tau 
		\right)
		\leq
		\varepsilon_\tau
		\end{equation}
	\end{itemize}
	
	\emph{Proof of $(A_1)$}: Since we are conditioning on $H_{u,T}$, there are still $0.2 \nu M_\tau$ liked items in unsampled blocks, and since we are conditioning on $\mathcal{E}_\tau$, there are at most $2 M_\tau \varepsilon_\tau$ blocks
	and each of size at leaste $1/2\varepsilon_\tau$ and at most $1/\varepsilon_\tau$. It can be easily argued that the setting in which sampling a random block yields a likable item is least likely is when all likable items are in the largest blocks. By $\mathcal{E}_\tau$, we can ``fit'' $0.2 \nu M_\tau$ 
in at least $0.2 \nu M_\tau/(1/\varepsilon_\tau) = 0.2\varepsilon_\tau \nu M_\tau$ blocks. Therefore, 
	\begin{equation}
	\Prob(\ell_{u,k_n} \mid 	\mathcal{E}_\tau, H_{u,T}, N_{u,T})
	\geq
	\frac{0.2\varepsilon_\tau \nu M_\tau}{2 M_\tau \varepsilon_\tau }
	= 0.1 \nu.
	\end{equation}
	
	\emph{Proof of $(A_2)$}: Let $\{X_n\}$ be a set of independent Bernoulli random variables with parameter $0.1 \nu$.  Then
	the sum of $16 \frac{T}{D_\tau} |\{P_k^{(\tau)}\}|$ such i.i.d. random variables have average equal to 
	\begin{align}\label{eq:zz11}
	1.6\nu \frac{T}{D_\tau} |\{P_k^{(\tau)}\}| & \geq 1.6\nu \frac{T}{D_\tau} M_\tau \varepsilon_\tau \nonumber \\
	& = 3.2  T \varepsilon_\tau, 
	\end{align}
	where we have used the fact that since by $\mathcal{E}_\tau$ each block is of size at most $1/\varepsilon_\tau$ and number
	of items $M_\tau$, we have at least $M_\tau \varepsilon_\tau$ blocks and $D_\tau = M_\tau\nu/2$.
	Then
	\begin{equation}
	\Prob \left(
	\sum_{n=1}^{16 \frac{T}{D_\tau} |\{P_k^{(\tau)}\}|}
	X_n
	<
	2.2 T \varepsilon_\tau
	\right)
	\leq
	\Prob \left(
	\sum_{n=1}^{16 \frac{T}{D_\tau} |\{P_k^{(\tau)}\}|}
	X_n
	<
	\left(1-0.25\right) 
	\EE \left[\sum_{n=1}^{16 \frac{T}{D_\tau} |\{P_k^{(\tau)}\}|} X_n\right]
	\right),
	\end{equation}
	where the inequality is due to \eqref{eq:zz11}.
	 Now by the Chernoff bound we get
	\begin{equation}
	\exp \left(- \frac{0.25^2}{2+0.25}  
	\EE \left[\sum_{n=1}^{16 \frac{T}{D_\tau} |\{P_k^{(\tau)}\}|} X_n\right] \right)
	\leq
	\exp
	\bigg(-\frac{3.2T \varepsilon_\tau}{36}\bigg).
	\end{equation}
         Therefore, if $T  > \frac{12}{\varepsilon_\tau}\ln(1/\varepsilon_\tau)$, then the right hand side is less than $\varepsilon_\tau$, as desired. 
	
	\emph{Proof of $(B)$}: First, for each $u$ it is clear that $\Prob ( H_{u,T}^c \mid N_{u,T}) \leq  \Prob( H_{u,D_\tau}^c \mid N_{u,D_\tau} )$ holds, since by end of the epoch the user will have explored the most (recall $T \leq D_\tau$). Recall that under event $N_{u,D_\tau}$, user $u$ has been recommended at most $1.1 D_\tau$ items \emph{and} that at least $0.9 \nu M_\tau$ items in the partition are likable to $u$. For $H_{u, D_\tau}^c$
to happen, that is, for there to be at most $0.2 \nu M_\tau$ at the end of $D_\tau N$ exploit recommendation (overall), it must be that 
at least 
\begin{align}
0.9 \nu M_\tau - 0.2 \nu M_\tau - 1.1 D_\tau  & = 0.7 \nu M_\tau - 0.55 \nu M_\tau = 0.15 \nu M_\tau.
\end{align}
many items liked by user $u$ are ``wasted''.  Using notation from the proof of \cref{lmm:partitionProperty}, formally it can be written as 
\begin{align}	
\sum_k A_{k,u} \mathbf{1}_{s_{u,k,D_\tau} } \geq 0.15 \nu M_\tau.
\end{align}
	
	
By Markov's inequality we get that
\begin{align}
	\Prob\Big(\sum_k A_{k,u} \mathbf{1}_{s_{u,k,D_\tau}} \geq 0.15 \nu M_\tau \mid \mathcal{E}_\tau, N_{u,T}	\Big) 
	& \leq
	\Prob\Big(\sum_k A_{k,u} \geq 0.15 \nu M_\tau \mid \mathcal{E}_\tau, N_{u,T}	\Big)
	\nonumber \\ 
	& \stackrel{(a)}{\leq} 	\frac{\Prob\Big(\sum_k A_{k,u} \geq 0.15 \nu M_\tau \mid \mathcal{E}_\tau	\Big)}{\Prob\Big( N_{u,T}\Big)}
	\nonumber \\ 
	& \stackrel{(b)}{\leq} 	\frac{\Prob\Big(\sum_k A_{k,u} \geq 0.15 \nu M_\tau \mid \mathcal{E}_\tau	\Big)}{1-\varepsilon_\tau}
	\nonumber \\ 
	& \leq
	\frac{
		2 \EE
		\left[
		\sum_k A_{k,u}
		\mid
		\mathcal{E}_\tau \right]
	}
	{ 0.15 \nu M_\tau },	
\end{align}
	where inequality (a) uses the fact that $P(A | B) \leq P(A)/P(B)$;  inequality (b) uses the fact (C) (we note that (B) is NOT 
	used to prove (C) and hence there is no circularity of the argument); and last inequality uses the fact that $\varepsilon_\tau \leq 1/2$ for
	all $\tau$. We note that effectively, we have assumed away that $s_{u, k, D_\tau}$ has happened for all $k$ for the given user $u$. Therefore, 
	we will utilize \cref{lmm:partitionProperty} to bound the right hand side as follows:
\begin{align}
	\frac{1}{N}
	\sum_u
	\Prob
	\Big(
	\sum_k A_{k,u} \mathbf{1}_{s_{u,k,D_\tau}} \geq 0.35 \nu M_\tau
	\mid
	\mathcal{E}_\tau, N_{u,T}
	\Big)
	& \leq 
	\frac{1}{N}
	\sum_u\frac{
		2 \EE
		\left[
		\sum_k A_{k,u}
		\mid
		\mathcal{E}_\tau \right]
	}
	{ 0.15 \nu M_\tau }	\nonumber \\
	& \leq 
	\frac{2}{0.15\nu M_\tau N} \Big(\sum_k (1+\varepsilon_\tau |P_k^{\tau}| ) N\Big)\nonumber \\
	& \stackrel{(a)}{\leq} \frac{2}{0.15\nu M_\tau}\Big(\sum_k 3 \varepsilon_\tau |P_k^{\tau}| ) \Big)\nonumber \\
	& =  \frac{40\varepsilon_\tau}{\nu M_\tau}\Big(\sum_k |P_k^{\tau}|)\Big)  \nonumber \\
	& =  \frac{40 \varepsilon_\tau}{\nu}
	\end{align}
	where (a) uses the fact that under $\mathcal{E}_\tau$, $|P_k^{\tau}| \geq 1/2\varepsilon_\tau$ for all $k$, and we have
	used the fact that all partitions sum up to $M_\tau$.

	\emph{Proof of $(C)$}: The event $ N_{u,T}^c$ happens whenever a user $u$ has been recommended more than $1.1 D_\tau = 0.55 \nu M_\tau$ times by time $T$, or when the users likes less than $0.95 \nu M_\tau$. The probability that user $u$ has been recommended more than $0.55 \nu M_\tau$ items by time $T$ is greatest at $T=D_\tau$, and, by Chernoff bound, is
	\begin{equation}
	\mathbb{P}
	\left(
	\mid \mathrm{Bin} \left( N D_\tau ,\frac{1}{N} \right)  \geq 1.1 D_\tau
	\right)
	\leq
	\exp
	\left(
	-\frac{.1^2}{2+0.1} D_\tau
	\right),
	\end{equation}
	which, since $D_\tau> 210 \ln (\frac{2}{\varepsilon_\tau})$, is at most $\varepsilon_\tau/2$. 
	
	The probability that a user $u$ likes less than $0.9 \nu M_\tau$ among the $M_\tau$ items can be also bounded using a Chernoff bound:
	\begin{equation}
	\mathbb{P}
	\left(
	\mathrm{Bin} \left( M_{\tau}, \nu \right) < 0.9 \nu M_\tau
	\right)
	\leq
	\exp
	\left(
	-\frac{.1^2}{2+0.1} \frac{1}{2} M_\tau \nu
	\right)
	\leq
	\varepsilon_\tau/2,
	\end{equation}
	where the last inequality is due to $M_\tau \geq 210 \ln(\frac{2}{\varepsilon_\tau})$.
\end{proof}

\section{Proof of Main Results}

In the previous section we proved \cref{lmm:ColdStart}, which states that shortly after the beginning of each epoch, the expected regret of the algorithm becomes small. This allows us to prove our main results below.

\thmManyEpochsRegret*
\begin{proof}
	Recall that during the beginning of $\proc{item-item-cf}$ it runs the routine $\proc{make-partition}(M_1,\varepsilon_1,\varepsilon_1)$. This consumes at most 
	\begin{equation}
	MP(1)
	\triangleq
	\left(\frac{8}{\varepsilon_{1}}\right)^{d+1}
	4 \cdot 630 \left(d+1\right)^3 M_{1}
	\ln^2 \left(\frac{8}{\varepsilon_{1}}\right)
	\ln \left(M_{1}\right)
	\end{equation}
	recommendations (by \cref{lmm:partitionConditions}), and hence finishes in at most $T_{MP} \triangleq MP(1)/N$ time steps. For this initial exploratory period $T \leq T_{MP}$ we will bound the regret with the trivial bound $\RR(T) \leq T$.

	Let us now deal with the regime between $T_{min}$ and $T_{max}$. Recall that the target $\varepsilon_\tau$ used in the $\tau^{th}$ epoch is decreasing as $\frac{C}{2^\tau}$, until it plateaus at $\varepsilon_N$ when $\frac{C}{2^\tau} \leq \varepsilon_N$, where $C = \frac{\nu}{148 \cdot 20}$.
Hence 
	\begin{equation}
	\tau^* \triangleq \lceil \log_2 \frac{C}{\varepsilon_N} \rceil
	\end{equation} 
	is the first epoch in which $\varepsilon_N$ is used. For a function $g$ defined later, we will show that
	\begin{equation}
	T_{MP(1)} + \sum_{\tau'=1}^{\tau^*-1} D_\tau \geq 
	g(\nu,d) N^{\frac{d}{d+5}} \triangleq T_{max}.
	\end{equation}
	
	Now since $\varepsilon_N = \left( \frac{2^{5d+18}}{\nu} \cdot 630 (2d+11)(d+2)^4 \frac{1}{N} \right)^{\frac{1}{d+5}}$, we get that
	\begin{equation}
	\tau^* \geq 
	\frac{1}{d+5}
	\log_2
	\left(
	\left(
	\frac{\nu}{148\cdot 20}
	\right)^{d+5}
	\frac{\nu}{630 (2d+11)(d+2)^4}
	\frac{1}{2^{5d+18}} \cdot  N
	\right).
	\end{equation}
	Also, 
	\begin{equation}
	T_{MP(1)} + \sum_{\tau=1}^{\tau^*-1} D_\tau
	\geq
	\sum_{\tau=1}^{\tau^*-1} D_\tau
	=
	\sum_{\tau=1}^{\tau^*-1}
	\frac{\nu}{2} M_\tau,
	\end{equation}
where we used the fact that $D_\tau = \frac{\nu}{2} M_\tau$. Recall $M_\tau \triangleq C_M \frac{1}{\varepsilon_\tau^{d+2}} \ln (\frac{2}{\varepsilon_\tau})$, where $C_M = \frac{2^{\max(3.5d,8)}}{\nu} (3d+1)$, and for $\tau \leq \tau^*$ we have $\varepsilon_\tau = C/2^\tau$, where $C = \nu/(148 \cdot 20)$. Then
	we get
	\begin{align}
	T_{MP(1)} + \sum_{\tau=1}^{\tau^*-1} D_\tau
	& \geq
	\frac{\nu}{2}
	C_M \left(1/C \right)^{d+2}
	\sum_{\tau=1}^{\tau^*-1}
	2^{\tau(d+2)} \nonumber \\
	& \geq
	\frac{\nu}{4}
	C_M \left(1/C\right)^{d+2}
	2^{\tau^*(d+2)} \nonumber \\
	& \geq
	\underbrace{
		\frac{\nu}{4} C_M \left(1/C\right)^{d+2}
		\left(
		\frac{\nu}{630 (2d+11)(d+2)^4}
		\frac{1}{2^{5d+18}}
		\right)^{\frac{d+2}{d+5}}
	}_{g(\nu,d)}
	\cdot N^{\frac{d+2}{d+5}} \triangleq T_{max},
	\end{align}
	as wished. Hence, between $T_{min}$ and $T_{max}$ the target $\varepsilon_\tau$ for the epochs is indeed halving for each subsequent epoch. Let $\tau(T)$ be the epoch of time $T$. Then, by \cref{lmm:ColdStart}, for $T \in [T_{min},T_{max}]$, where $T_{min} = T_{MP}+T_{min,1}$, the expected regret satisfies
	\begin{equation}
	\RR(T) -T_{MP}  \leq 
	\frac{148}{\nu}
	\sum_{\tau=1}^{\tau(T)}\varepsilon_\tau D_\tau,
	\end{equation}
	which we can further bound as
	\begin{align}
	\RR(T) -T_{MP}
	& \leq
	\frac{C_M}{2} \log_2 \bigg( \frac{2^{\tau(T)}}{2 C} \bigg) 
	\sum_{\tau=1}^{\tau(T)} 2^{\tau (d+1)} \nonumber \\
	& \leq
	\frac{C_M}{2} \log_2 \bigg( \frac{2^{\tau(T)}}{2 C} \bigg)
	2^{(\tau(T)+1)(d+1)}.
	\end{align}
	Now, since for $T>T_{min}$ the epoch $\tau(T)$ is at most $1+\frac{1}{d+2}\log_2 \left(\frac{T-T_{MP}}{C_M} \frac{1}{\log(2/C)}\right)$, we get that
	\begin{align}
	\RR(T) & \leq 
	T_{MP} 
	+
	\frac{C_M}{2} 
	\log_2
	\left(
	\frac{1}{C (d+2)}\frac{1}{\log(2/C)}
	\frac{T-T_{MP}}{C_M}
	\right)
	2^{2(d+1)}
	2^{(\tau(T)+1)(d+1)}
	\end{align}
	$$
	\leq
	T_{MP } 
	+
	\underbrace{
		\frac{C_M}{2} 
		\log_2
		\left(
		\frac{1}{C (d+2)}\frac{1}{\log(2/C)}
		\frac{1}{C_M}
		\right)
		2^{4(d+1)}
	}_{\triangleq C'}
	\log_2
	\left(T-T_{MP}\right)
	2^{\frac{d+1}{d+2}\log_2 \left(\frac{T-T_{MP}}{C_M}\frac{1}{\log(2/C)}\right)}
	$$
	$$
	\leq
	T_{MP} 
	+
	\underbrace{
		C' \left(\frac{1}{C_M} \frac{1}{\log(2/C)}\right)^{\frac{d+1}{d+2}}
	}_{\triangleq \alpha(\nu,d)}
	\big( T-T_{MP}\big)^{\frac{d+1}{d+2}}
	\log_2
	\left(T-T_{MP}\right),
	$$
	as we wished, which completes the proof of the sublinear regret regime.
	
	The case $T>T_{max}$ now follows. Recall that by \cref{lmm:ColdStart} we get
	\begin{equation}
	\RR(T) \leq 
	T_{MP} 
	+
	\frac{148}{\nu}
	\sum_{\tau=1}^{\tau(T)}\varepsilon_\tau D_\tau,
	\end{equation}
	which we can in turn split between before $T_{max}$ and after $T_{max}$ as
	\begin{equation}
	\RR(T) \leq 
	T_{MP} 
	+
	\frac{148}{\nu}
	\sum_{\tau=1}^{\tau^*-1} \varepsilon_\tau D_\tau
	+
	\frac{148}{\nu}
	\sum_{\tau=\tau^*}^{\tau(T)} \varepsilon_\tau D_\tau
	\end{equation}
	\begin{equation}
	\leq
	\underbrace{
		T_{MP} 
		+
		\alpha(\nu,d)
		T_{max}^{\frac{d+1}{d+2}}
		\log_2
		\left(T_{max}\right)
	}_{\triangleq \beta}
	+
	\varepsilon_N (T-T_{max}),
	\end{equation}
	as claimed, and where the last inequality is due to the sublinear regime proved above.
\end{proof}

We are now ready to bound the cold-start time of $\proc{item-item-cf}$. Recall that cold-start time of a recommendation algorithm $\mathcal{A}$ is defined as the least $T+\Gamma$ such that for all $\Delta>\Gamma$ we have $\EE\left[ \RR^{(\mathcal{A})}(T+\Delta)-\RR^{(\mathcal{A})}(T) \right] \leq 0.1 \Delta$.

\corolColdStart*
\begin{proof}	
	First recall the usual definitions: $D_\tau = \frac{\nu}{2} M_\tau$, $T_\tau = T_{MP} + \sum_{\tau'<\tau} D_{\tau'}$, and $T_{MP} = f(\nu,d)/N$, where $f(\nu,d)$ is the number of recommendations required for the initial $\proc {make-partition}$ call (as stated in \cref{lmm:partitionConditions}), and $T_{min,\tau} = \frac{12}{\varepsilon_\tau} \ln \left(\frac{1}{\varepsilon_\tau}\right)$ (as stated in \cref{lmm:ColdStart}). We will show the bound in the definition of cold-start time with $T=T_{MP}$ and $\Gamma=T_{min,1}$, which implies $T_{\mathrm{cold-start}} = T_{MP} + T_{min,1}$.
	
To complete the proof, we shall establish the following two properties:
	\begin{itemize}
		\item[$(i)$] For any $\Delta>0$, $\EE \left[ \RR (T_{\mathrm{MP}}+ T_{\mathrm{min},1}+\Delta) -\RR (T_{\mathrm{MP}}) \right] \leq 0.1 (T_{\mathrm{min},1}+\Delta)$, for $T_{\mathrm{MP}}+ T_{{min},1}+\Delta \leq T_2$. This condition says that the desired property holds for times involving the first epoch, and
		\item[$(ii)$] $\EE \left[ \RR (T_\tau+\Delta) -\RR (T_\tau) \right] \leq 0.05 (\Delta+D_{\tau-1})$, for $\Delta \leq D_\tau$ and $\tau \geq 2$.
	\end{itemize}
Before we show how the above two properties imply the desired result, we note that $(i)$ follows directly from \cref{lmm:ColdStart}, and 
$(ii)$ will be proved at the end.
	
Now let complete the proof using (i) and (ii). To that end, consider a time of the form $T_{\mathrm{cold-start}} + \Delta = T_{MP}+T_{min,1}+\Delta$, for any $\Delta > 0$. Let $\tau^* \geq 1$ be the epoch to which $T_{MP}+T_{min,1}+\Delta$ belongs, i.e. 
\begin{align*}
T_{\tau^*} & < T_{MP}+T_{min,1}+\Delta \leq T_{\tau^*+1}.
\end{align*}
Define $t = T_{MP}+T_{min,1}+\Delta -T_{\tau^*} > 0$. We shall argue $\tau^* = 1$ and $\tau^* > 1$ separately. For $\tau^* =1$, (i) implies the desired result. For $\tau^* > 1$, we use (ii) to argue it as follows:
\begin{align}
& \EE\left[\RR\left(T_{\mathrm{MP}}+ T_{\mathrm{min},1}+\Delta \right)-\RR \left(T_{\mathrm{MP}}\right)\right]
 \underbrace{\leq}_{t = T_{\mathrm{cs}}+\Delta-T_{\tau^*}} \underbrace{\EE\left[\RR\left(T_{2}\right)-\RR \left(T_{\mathrm{MP}}\right)\right]}_{\leq 0.1 \cdot D_1 \text{ by $(i)$}}. \nonumber\\
& \qquad + \mathbf{1}_{\tau^*\geq 3} \bigg(\sum_{\tau=2}^{\tau^*-1}\underbrace{\EE\left[\RR\left(T_{\tau}\right)-\RR \left(T_{\tau-1}\right)\right]}_{\leq 0.05 (D_\tau+D_{\tau-1}) \text{ by $(ii)$}}\bigg) 
+\overbrace{\EE\left[\RR\left(T_{\tau^*}+t\right)-\RR \left(T_{\tau^*}\right)\right]}^{\leq 0.05 (t+D_{\tau-1}) \text{ by $(ii)$}} \nonumber \\
	& \qquad \leq 0.05 \left(t + 2 \sum_{\tau=1}^{\tau^*-1} D_\tau\right)  \nonumber \\
	& \qquad \leq 0.1 \cdot (\Delta + T_{\mathrm{min},1}).
\end{align}
	This establishes the desired result that $T_{\mathrm{cold-start}} = T_{MP}+T_{min,1}= f(\nu,d)/N+\mathcal{\widetilde O}(1/\nu)$.
	
	\emph{Proof of $(ii)$:} Now we argue the remaining property (ii). 
	\cref{lmm:ColdStart} tells us that for $\Delta \in (T_{min,\tau},D_\tau)$ we have that $\EE
	\left[
	\RR\left(T_\tau + \Delta \right)
	-
	\RR \left(T_{\tau} \right)
	\right] \leq \frac{148}{\nu} \varepsilon_\tau \Delta$, i.e. 
	$$\EE
	\left[
	\RR\left(T_\tau + T_{min,\tau} \right)
	-
	\RR \left(T_{\tau} \right)
	\right] \leq \frac{148}{\nu} \varepsilon_\tau T_{min,\tau}.$$
	Thus,  for $\Delta < D_\tau$, we have that
	\begin{equation}
	\EE
	\left[
	\RR\left(T_\tau + \Delta \right)
	-
	\RR \left(T_{\tau} \right)
	\right]
	\leq
	\frac{148}{\nu} \varepsilon_\tau T_{min,\tau}
	+
	\underbrace{
		\frac{148}{\nu} \varepsilon_\tau \Delta
	}_{\leq 0.05 \text{ for } \tau \geq 2}.
	\end{equation}
	In above we used the fact that for $\Delta < T_{{min}, 1}$, $\RR\left(T_\tau + \Delta \right) \leq \RR\left(T_\tau + T_{{min}, 1} \right)$. 
	Using the fact that $T_{min,\tau} \triangleq \frac{12}{\varepsilon_\tau} \ln \left(\frac{1}{\varepsilon_\tau}\right) \leq 0.05 \Big( \frac{300}{\varepsilon_\tau} \ln (\frac{1}{\varepsilon_\tau})\Big) = 0.05 D_{\tau-1}$, we conclude
	\begin{equation}
	\EE
	\left[
	\RR\left(T_\tau + \Delta \right)
	-
	\RR \left(T_{\tau} \right)
	\right]
	\leq
	0.05
	\left(\Delta+D_{\tau-1}\right).
	\end{equation}
	\end{proof}

\thmLowerBoundLinearRegime*
\begin{proof}
	Let $\{i_1, ..., i_{k_T}\}$ be the set of distinct items that have been recommended up to time $TN$. Then we have 
	\begin{align*}
	\EE \left[\mathcal{R} (T) \right]
	=
	\frac{1}{N} \EE \left[\sum_{t=1}^{TN} \frac{1}{2} (1-L_{U_t,I_t}) \right]
	&=
	\frac{1}{N} \EE \left[ \sum_{k=1}^{k_T} \sum_{t=1}^{TN} \frac{1}{2} \mathbf{1}_{I_t = i_k } (1-L_{U_t,i_k}) \right]
	\\&\geq 
	\frac{1}{N} \EE \left[ \sum_{k=1}^{k_T} \frac{1}{2}  (1-L_{U_{T_k},i_k}) \right],
	\end{align*}
	where $T_k$ is the first time in which the item $i_k$ is recommended to any user. Now note that for each $k$ by $(A_2)$ we have that $\EE \left[\frac{1}{2}  (1-L_{U_{T_k},i_k}) \right] \geq 1-2\nu$, since when we have no prior information about $i_k$ the best we can do is to recommend it to the user that likes the largest fraction of items. Hence we get
	\begin{equation}
	\EE \left[\mathcal{R} (T) \right]
	\geq
	\frac{1-2\nu}{N} k_t.
	\end{equation}
	Since each item can be recommended to each user at most once, we see that by the $TN^{th}$ recommendation at least $T$ different items must have been recommended (that is, $k_t \geq T$). We can then conclude that
	\begin{equation}
	\EE \left[\mathcal{R} (T) \right]
	\geq
	\underbrace{
		\frac{1-2\nu}{N} 
	}_{C(\nu,d)},
	\end{equation}
	as we wished.
\end{proof}

\section{More on Doubling Dimension}
\label{sec:DDAppendix}

In this section we provide examples of spaces with low doubling dimension, give some useful properties, and describe experiments indicating that doubling dimension is often small in practice.

				\begin{restatable}{examp}{exampDDExample}
					\label{examp:DDExample}
					Consider an item space $\mu$ over $N$ users that assigns probability at least $w>0$ to $K$ distinct item types with separation at least $\sigma>0$. Then, since $\mu(\BB(x,\alpha)) \leq 1$, and $\mu(\BB(x,\alpha/2)) \geq w$, we have that 
					\begin{equation}
					d = \max_{x \in \{-1,+1\}^N} \sup_{r}
					\frac{\mu(\BB(x,r))}{\mu(\BB(x,r/2))}
					\leq \frac{\mu(\BB(x,\alpha))}{\mu(\BB(x,\alpha/2))} \leq \log_2 \left(\frac{1}{w}\right) =\log_2 1/w.
					\end{equation}
					Similarly, if we only know that there are at most $K$ equally likely item types we can bound the doubling dimension as
					\begin{equation}
					d = \max_{x \in \{-1,+1\}^N} \sup_{r}
					\frac{\mu(\BB(x,r))}{\mu(\BB(x,r/2))}
					\leq \frac{1}{1/K} = \log_2 K.
					\end{equation}
					\end{restatable}
					
					With the example above in mind, we would like to emphasize that doubling dimension assumptions are strictly more general than the style of assumptions made in \cite{bresler14} (finite $K$ with separation assumptions) because $(a)$ doubling measure require no separation assumptions (that is, two item types $x$ and $y$ that are arbitrarily close to each other can have positive mass) and $(b)$ the number of types of positive mass is not bounded by a finite $K$ anymore, but instead can grow with the number of users.

					\begin{restatable}{examp}{exampDDRandom}
						\label{examp:DDRandom}
						Consider an item space $\mu$ such that it assigns probability $1/K$ to $K$ item types randomly uniformly drawn from $\{-1,+1\}^N$. Then, for each two item types $i,j$ we have that
						\begin{equation}
						\Prob
						\left(
						\bigcup_{ij} \{\gamma_{ij} \notin [.4,.6] \}
						\right)
						\leq 
						\sum_{ij}
						\Prob
						\bigg( 
						\gamma_{ij} \notin [.4,.6]
						\bigg)
						\leq 
						{K \choose 2}
						\exp(-\Theta(N)),
						\end{equation}
						where the first inequality is due to a union bound, and the second to a Chernoff bound. Hence, with high probability we get that
						$d \geq \frac{\mu(\BB(x,.7))}{\mu(\BB(x,.35))} 
						\underbrace{=}_{w.h.p.} 
						\log_2 \frac{1}{1/K} = \log_2 K.$
						By \cref{examp:DDExample} we also have that $d \leq \log_2(K)$, and hence we can conclude that with high probability we have that $d = \log_2(K)$.
						\end{restatable}

\begin{restatable}{prop}{clmDoublingDimensionClaim2}
	\label{clm:DoublingDimensionClaim2}
	Let $\mu$ be an item space for $N$ users with doubling dimension $d$. Then for any item type $x \in \{-1,1\}^N$ with $\mu(x) > 0$ we have
	\begin{equation}
	\mu \big(\BB\left( x,r \right) \big) \geq r^d.
	\end{equation}
\end{restatable}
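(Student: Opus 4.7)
The plan is to exploit the equivalent reformulation of the doubling condition already mentioned in the footnote of Section~\ref{sec:ddDefin}: for every $x \in \{-1,+1\}^N$ with $\mu(x) > 0$, every $r > 0$, and every $\alpha \geq 1$,
\begin{equation*}
\mu\bigl(\BB(x, \alpha r)\bigr) \;\leq\; \alpha^d \, \mu\bigl(\BB(x, r)\bigr).
\end{equation*}
Once this is in hand, the proposition follows in one line by a well-chosen scaling. Specifically, restrict attention to $r \in (0, 1]$ (for $r > 1$ the ball $\BB(x, r)$ equals the whole cube and the claim is trivial), apply the inequality above with inner radius $r$ and scaling $\alpha = 1/r \geq 1$, and use that $\BB(x, 1) = \{-1,+1\}^N$ has $\mu$-mass one to obtain
\begin{equation*}
1 \;=\; \mu\bigl(\BB(x, 1)\bigr) \;\leq\; \left(\tfrac{1}{r}\right)^d \mu\bigl(\BB(x, r)\bigr),
\end{equation*}
which rearranges to the desired bound $\mu(\BB(x, r)) \geq r^d$.

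The only nontrivial step is justifying the equivalent formulation. First I would iterate the definitional inequality $\mu(\BB(x, 2r)) \leq 2^d \mu(\BB(x, r))$: a straightforward induction on $k$ gives $\mu(\BB(x, 2^k r)) \leq 2^{kd} \mu(\BB(x, r))$ for every integer $k \geq 0$. For a general scaling factor $\alpha \geq 1$, I would write $\alpha \leq 2^{\lceil \log_2 \alpha \rceil}$, apply the integer-$k$ version with $k = \lceil \log_2 \alpha \rceil$, and use monotonicity of $\mu(\BB(x, \cdot))$ to pass from $\alpha r$ to $2^k r$; this yields the $\alpha^d$-style upper bound (possibly with a harmless constant that one can absorb into the doubling-dimension parameter, as is standard).

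The main obstacle, and the only point that requires any care, is this step of going from the dyadic form to the continuous $\alpha$-form of the doubling inequality. Everything else is a direct substitution. I would therefore present the proof as (i) one short paragraph deriving or invoking the $\alpha^d$ form from the definition, followed by (ii) the one-line application with $\alpha = 1/r$ and $\mu(\BB(x,1)) = 1$ to conclude.
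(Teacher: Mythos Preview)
Your proposal is correct and matches the paper's intended argument exactly: the paper states this proposition without proof, noting in the main text only that it ``follows directly from the definition'' after invoking (in the footnote of Section~\ref{sec:ddDefin}) the continuous $\alpha^d$-form of the doubling inequality that you use. Your one-line substitution $\alpha = 1/r$ together with $\mu(\BB(x,1)) = 1$ is precisely what the paper has in mind, and the caveat you raise about the constant in passing from the dyadic to the continuous form is a subtlety the paper itself elides.
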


\begin{restatable}{prop}{clmDoublingDimensionClaims}
	\label{clm:DoublingDimensionClaims}
	Let $\mu$ be an item space for $N$ users with doubling dimension $d$, let $\CC$ be an $\varepsilon$-net for $\mu$, let $j$ be an arbitrary item, let $c_j \in \CC$ be such that $\gamma_{j,c_j} < \varepsilon$, and let $m_{c_j} \triangleq \mu \big( \BB (c_j,\varepsilon)\big)$. Then, for each $r \in [\varepsilon/2,1/2]$, there are at most $m_{c_j} \left(\frac{4}{\varepsilon}\right)^d \left(\frac{4r+5\varepsilon}{4 \varepsilon}\right)^d$ items in $\CC$ within radius $r$ of $j$.
\end{restatable}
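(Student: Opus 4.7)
The plan is to run a standard volume/packing argument: upper bound the total mass of a region containing all the counted items, and lower bound the mass of disjoint little balls around each of them. The net property of $\CC$ guarantees disjointness at scale $\varepsilon/4$, while the doubling assumption controls how the mass scales between the different radii that appear.

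First I would define $\CC_j = \{ i \in \CC : \gamma_{i,j} \leq r\}$ and observe that because $\CC$ is an $\varepsilon$-net, every pair of distinct items in $\CC$ has distance strictly greater than $\varepsilon/2$, so the balls $\BB(i, \varepsilon/4)$ for $i \in \CC$ are pairwise disjoint. Next, I would use the triangle inequality: for any $i \in \CC_j$, since $\gamma_{j,c_j} < \varepsilon$, we have $\gamma_{i,c_j} \leq \gamma_{i,j} + \gamma_{j,c_j} \leq r + \varepsilon$. Therefore every ball $\BB(i, \varepsilon/4)$ with $i \in \CC_j$ sits inside the single larger ball $\BB(c_j, r + 5\varepsilon/4)$. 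Disjointness then yields
\begin{equation*}
\sum_{i \in \CC_j} \mu(\BB(i, \varepsilon/4)) \;\leq\; \mu\bigl(\BB(c_j, r + 5\varepsilon/4)\bigr).
\end{equation*}

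For the upper bound on the right-hand side I would invoke the equivalent form of the doubling-dimension condition noted in the footnote, $\mu(\BB(x, \alpha \rho)) \leq \alpha^d \mu(\BB(x, \rho))$ for $\alpha \geq 1$, applied with $\rho = \varepsilon$ and $\alpha = (4r+5\varepsilon)/(4\varepsilon)$. The condition $r \geq \varepsilon/2$ makes $\alpha \geq 7/4 \geq 1$, and gives $\mu(\BB(c_j, r+5\varepsilon/4)) \leq \bigl(\tfrac{4r+5\varepsilon}{4\varepsilon}\bigr)^d m_{c_j}$. For the lower bound on each $\mu(\BB(i, \varepsilon/4))$ I would use Proposition~\ref{clm:DoublingDimensionClaim2}, which gives $\mu(\BB(i, \varepsilon/4)) \geq (\varepsilon/4)^d$ (the net items $i$ are drawn from $\mu$ and hence have positive mass). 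Dividing these two bounds and rearranging gives exactly $|\CC_j| \leq m_{c_j} (4/\varepsilon)^d \bigl(\tfrac{4r+5\varepsilon}{4\varepsilon}\bigr)^d$.

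There is no real obstacle; the only thing to be careful about is matching the radii so that both applications of the doubling property are valid (i.e. inflating a ball of radius at least $\varepsilon$, not shrinking one), and checking that the $r \geq \varepsilon/2$ hypothesis is exactly what is needed to make the enclosing ball $\BB(c_j, r+5\varepsilon/4)$ strictly larger than $\BB(c_j,\varepsilon)$. The constant $5\varepsilon/4$ comes precisely from summing the triangle-inequality slack ($r + \varepsilon$ to reach $c_j$ from $i$) and the radius $\varepsilon/4$ of the small disjoint balls.
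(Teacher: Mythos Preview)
Your proposal is correct and follows essentially the same argument as the paper: both use the $\varepsilon$-net separation to get disjoint balls of radius $\varepsilon/4$, contain their union in $\BB(c_j, r+5\varepsilon/4)$ via the triangle inequality through $c_j$, lower-bound each small ball by $(\varepsilon/4)^d$ via Proposition~\ref{clm:DoublingDimensionClaim2}, and upper-bound the large ball by $m_{c_j}\bigl(\tfrac{4r+5\varepsilon}{4\varepsilon}\bigr)^d$ via the doubling condition. Your explicit check that $r\ge\varepsilon/2$ ensures the inflation factor is at least $1$ is a nice addition that the paper leaves implicit.
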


\begin{proof}
	By the doubling dimension of $\mu$ we get
	\begin{equation*}
	\label{eqDoublingTrick}
	\tag{$\star$}
	\mu \left( \BB \left( c_j, r+\frac{5}{4} \varepsilon \right) \right)
	\leq
	\mu \left( \BB \left(c_j, \varepsilon \right) \right)
	\cdot
	\left( \frac{r+5 \varepsilon /4}{\varepsilon}\right)^d
	=
	m_{c_j}
	\cdot
	\left( \frac{r+5 \varepsilon /4}{\varepsilon} \right)^d
	.
	\end{equation*}
	
	We will now use this bound on $\mu \left( \BB \left(c_j, r+\frac{5}{4} \varepsilon\right)\right)$ to show that we could pack at most $m_{c_j} \left(\frac{4}{\varepsilon}\right)^d \left(\frac{4r+5\varepsilon}{4 \varepsilon}\right)^d$ items from $\CC$ within $r$ of $j$.
	
	Since $\CC$ is an $\varepsilon$-net, each two items $i,j \in \CC$ are at least $\varepsilon/2$ apart, and hence the balls of radius $\varepsilon/4$ around each $i \in \CC$ are disjoint. Say that there are $K \triangleq |\CC_j |$ items  $\CC$ within distance $r$ of $j$, where $\CC_j \triangleq  \{ c \in \CC \mid \gamma_{c,j} \leq r \}$. Then we get 
	\begin{equation}
	K \cdot \left(\frac{\varepsilon}{4}\right)^d
	\leq
	\sum_{c \in \CC_j} \mu \left(\BB(c,\varepsilon/4)\right)
	=
	\mu \big( \bigcup_{c \in \CC_j} \BB \left( c,\varepsilon/4 \right) \big)
	\leq
	\mu \big(\BB(c_j , r+5\varepsilon/4) \big),
	\end{equation}
	where the first inequality is due to \cref{clm:DoublingDimensionClaim2}  and the last inequality is due to $\cup_{c \in \CC_j} B(c,\varepsilon/4)  \subset B(c_j , r+5\varepsilon/4)$. Using the bound from \cref{eqDoublingTrick} we arrive at
	\begin{equation}
	K
	\leq 
	\left(\frac{4}{\varepsilon}\right)^d
	\cdot
	m_{c_j}
	\cdot
	\left( \frac{r+5 \varepsilon /4}{\varepsilon} \right)^d.
	\end{equation}
	as we wished.
\end{proof}

	Finally, we would like to note that doubling dimension is not only a ``proof technique'': it can be estimated from data and tends to be small in practice. To illustrate this point, we calculate the doubling dimension on the Jester Jokes Dataset\footnote{\cite{goldberg2001eigentaste}, and data available on \url{http://goldberg.berkeley.edu/jester-data/}} and for the MovieLens $1$M Dataset\footnote{ \cite{riedl1998movielens}, and data available on \url{http://grouplens.org/datasets/movielens/}}. For the MovieLens dataset we consider the only movies that have been rated by at least $750$ users (to ensure some density). 
	
	The Jester dataset contains ratings of one hundred jokes by over seventy thousand users. The dataset is fairly dense (as the average number of ratings per user is over fifty), which makes it a great dataset for calculating the doubling dimension. For the MovieLens $1$M Dataset we consider the only movies that have been rated by at least $750$ users (to ensure some density).
	
	The Jester ratings are in $[-10,10]$, with an average of $2$, so we make ratings greater than $2$ a $R_{u,i} = +1$, and ratings at most $2$ a $R_{u,i}=-1$. For the MovieLens $1$M Dataset we make ratings $1,2,3$ into $-1$, and $4,5$ into $+1$. We then estimate the doubling dimension as follows:
	\begin{itemize}
		\item For each pair of items $(i,j)$, we calculate $\hat d_{i,j,\Delta}$ as fraction of users that agree on them, where the $\Delta$ subscript is put to denote our assumption that each entry has a noise probability of $\Delta$ (that is, $\Prob (R_{u,i} \ne L_{u,i}) = \Delta$), where $R$ is the empirical ratings matrix and $L$ is the true, noiseless, ratings matrix.
		\item Assuming that each entry has a noise probability of $\Delta = 0.20$, we estimate the true distance $d_{i,j}$ as the solution to $\hat d_{i,j,\Delta} = (1-d_{ij})(2\Delta(1-\Delta))+d_{i,j}(\Delta^2+(1-\Delta)^2)$.
		\item For each item $i$ and $r$ in $\{0,\frac{1}{N},...,\frac{N-1}{N},1\}$, let $N_{i,r}$ be the number of items such that $d_{i,j} \leq r$.
		\item For each item $i$ let $d_i$ be the least such that $N_{i,2r}/N_{i,r} \leq 2^{d_i}$ for each $r$ in $\{0,\frac{1}{N},...,\frac{1}{2}\}$.
		\item The \cref{fig:EmpiricalDoublingDimensionsJester,fig:EmpiricalDoublingDimensionsMovieLens} show the histogram of the $\{d_i\}$.
	\end{itemize}
	
	\begin{figure}[h]
		\centering
		\begin{minipage}[]{0.42\linewidth}
			\includegraphics[width=1.2\textwidth]{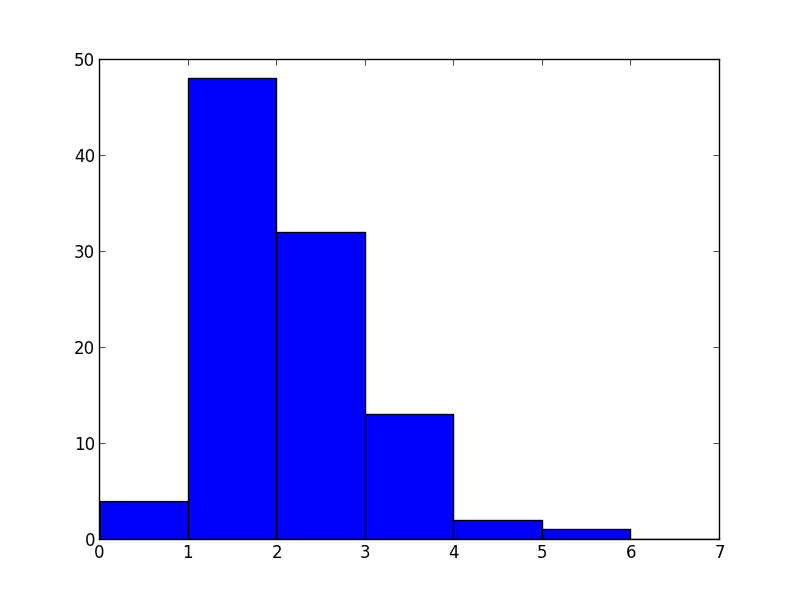}
			\caption{Jester Doubling Dimensions}
			\label{fig:EmpiricalDoublingDimensionsJester}
		\end{minipage}
		\quad
		\begin{minipage}[]{0.42\linewidth}
			\includegraphics[width=1.2\textwidth]{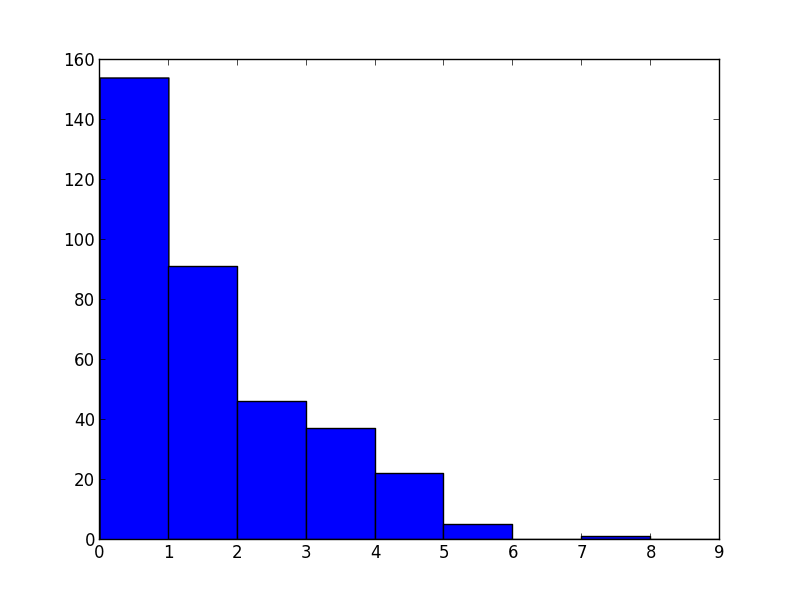}
			\caption{MovieLens Doubling Dimensions}
			\label{fig:EmpiricalDoublingDimensionsMovieLens}
		\end{minipage}
	\end{figure}

\newpage
\section{Chernoff Bound}
The following is a standard version of Chernoff Bound \citep{mcdiarmid1998concentration} that we use throughout the paper.

\begin{restatable}[Chernoff Bound]{thm}{thm:Chernoff}
	\label{thm:ChernoffBound}
	Let $X_1,\cdots,X_n$ be independent random variables that take value in $[0,1]$. Let $X = \sum_{i=1}^n X_i$, and let $\bar{X} = \sum_{i=1}^n \EE X_i$. Then, for any $\varepsilon \geq 0$,
	\begin{align*}
	& \Prob
	\left(
	X \geq \left(1+\varepsilon\right) \bar{X}
	\right)
	\leq
	\exp 
	\left(
	- \frac{\varepsilon^2}{2+\varepsilon} \bar{X}
	\right) \text{, and} \\
	& \Prob
	\left(
	X \leq \left(1-\varepsilon\right) \bar{X}
	\right)
	\leq
	\exp 
	\left(
	- \frac{\varepsilon^2}{2} \bar{X}
	\right).
	\end{align*}
\end{restatable}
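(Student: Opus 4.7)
The plan is to prove both tail bounds by the standard Bernstein/Chernoff exponential moment method: bound $\EE[e^{tX}]$, apply Markov's inequality, and then optimize over the free parameter $t$. For the upper tail, I would take $t>0$ and start with
\[
\Prob\bigl(X \geq (1+\varepsilon)\bar X\bigr) = \Prob\bigl(e^{tX} \geq e^{t(1+\varepsilon)\bar X}\bigr) \leq e^{-t(1+\varepsilon)\bar X}\,\EE[e^{tX}].
\]
By independence, $\EE[e^{tX}] = \prod_{i=1}^n \EE[e^{tX_i}]$, and since each $X_i \in [0,1]$ one has the pointwise bound $e^{tx} \leq 1 + (e^t-1)x$ on $[0,1]$ by convexity of the exponential, so $\EE[e^{tX_i}] \leq 1 + (e^t-1)\EE[X_i] \leq \exp\bigl((e^t-1)\EE[X_i]\bigr)$ using $1+u \leq e^u$.

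Multiplying over $i$ gives $\EE[e^{tX}] \leq \exp\bigl((e^t-1)\bar X\bigr)$, so
\[
\Prob\bigl(X \geq (1+\varepsilon)\bar X\bigr) \leq \exp\Bigl(\bar X\bigl(e^t-1 - t(1+\varepsilon)\bigr)\Bigr).
\]
The optimal choice $t = \ln(1+\varepsilon)$ (which is nonnegative for $\varepsilon \geq 0$) yields
\[
\Prob\bigl(X \geq (1+\varepsilon)\bar X\bigr) \leq \exp\Bigl(\bar X\bigl(\varepsilon - (1+\varepsilon)\ln(1+\varepsilon)\bigr)\Bigr).
\]
Finally, I would invoke the elementary scalar inequality
\[
(1+\varepsilon)\ln(1+\varepsilon) - \varepsilon \;\geq\; \frac{\varepsilon^2}{2+\varepsilon} \qquad (\varepsilon \geq 0),
\]
which can be checked by differentiating $f(\varepsilon) = (1+\varepsilon)\ln(1+\varepsilon) - \varepsilon - \varepsilon^2/(2+\varepsilon)$ and noting $f(0)=0$ with $f'(\varepsilon)\geq 0$. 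This produces the claimed bound $\exp(-\varepsilon^2 \bar X/(2+\varepsilon))$.

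The lower tail is analogous with $t<0$: write $\Prob(X \leq (1-\varepsilon)\bar X) \leq e^{t(1-\varepsilon)\bar X}\EE[e^{tX}]$, reuse the MGF bound $\EE[e^{tX}] \leq \exp((e^t-1)\bar X)$ (valid for all real $t$ since the convexity argument works regardless of sign), optimize at $t = \ln(1-\varepsilon) \leq 0$ to obtain $\exp\bigl(\bar X((1-\varepsilon)\ln(1-\varepsilon)+\varepsilon)\bigr)$, and then use $(1-\varepsilon)\ln(1-\varepsilon)+\varepsilon \geq \varepsilon^2/2$ for $\varepsilon \in [0,1]$, which again follows from a one-variable calculus check.

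None of the steps are genuinely hard; the only mildly delicate part is verifying the two scalar inequalities that convert the explicit $(1\pm\varepsilon)\ln(1\pm\varepsilon)$ expressions into the clean forms $\varepsilon^2/(2+\varepsilon)$ and $\varepsilon^2/2$. These are standard and can be cited from \cite{mcdiarmid1998concentration} rather than reproved, so in practice the entire argument can be compressed to a one-paragraph reference. I would additionally remark that the asymmetry ($2+\varepsilon$ vs.\ $2$ in the denominator) reflects the difference between optimizing $t$ to the right versus the left of the origin, which is why we get the slightly weaker upper-tail constant used throughout the paper's applications (e.g.\ the Chernoff invocations in \cref{lmm:PreIsSimilarNew} and \cref{lmm:EnoughTimeForListNew}).
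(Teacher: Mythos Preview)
The paper does not prove this theorem at all: it is stated as a standard result with a citation to \cite{mcdiarmid1998concentration}, and no argument is given. So there is nothing in the paper to compare your proof against.

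Your proposed argument via the exponential moment method is the standard proof and is correct in outline. One small slip: in the lower-tail step you write $\Prob(X \leq (1-\varepsilon)\bar X) \leq e^{t(1-\varepsilon)\bar X}\EE[e^{tX}]$ for $t<0$, but Markov gives $e^{-t(1-\varepsilon)\bar X}\EE[e^{tX}]$; carrying this through, the optimized exponent is $\bar X\bigl(-\varepsilon-(1-\varepsilon)\ln(1-\varepsilon)\bigr)$, not $\bar X\bigl((1-\varepsilon)\ln(1-\varepsilon)+\varepsilon\bigr)$ (the latter is positive and would give a trivial bound). The scalar inequality you then invoke, $(1-\varepsilon)\ln(1-\varepsilon)+\varepsilon \geq \varepsilon^2/2$, is correct and is exactly what is needed once the sign is fixed. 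Since the paper itself suggests citing the result rather than reproving it, your closing remark that the whole thing can be compressed to a reference is entirely in line with the paper's treatment.
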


\end{document}